\documentclass{article}

\usepackage[nonatbib,preprint]{neurips_2019}
\usepackage[utf8]{inputenc}

\usepackage[T1]{fontenc}
\usepackage{hyperref}
\usepackage{url}
\usepackage{booktabs}
\usepackage{amsfonts}
\usepackage{nicefrac}
\usepackage{microtype}
\usepackage{amsmath}
\usepackage{graphicx}
\usepackage{subcaption}
\usepackage{multirow}
\usepackage{tabularx}
\usepackage{wrapfig}
\usepackage{listings}
\usepackage[dvipsnames]{xcolor}
\usepackage{amssymb,amsmath,amsthm}

\newcommand{\R}{\mathbb{R}}

\newtheorem{thm}{Theorem}[section]
\newtheorem{lem}[thm]{Lemma}

\newtheorem{coro}{Corollary}[thm]

\theoremstyle{definition}

\usepackage{xcolor}
\definecolor{Green}{RGB}{50,200,50}
\definecolor{Red}{RGB}{200,50,50}

\graphicspath{ {./images/} }

\title{On Mutual Information in Contrastive \\ Learning for Visual Representations}

\author{
  Mike Wu$^\blacksquare$, Chengxu Zhuang$^\bigstar$, Milan Moss\'e$^{\blacksquare\blacklozenge}$, Daniel Yamins$^{\blacksquare\bigstar}$, Noah Goodman$^{\blacksquare\bigstar}$ \\
  Department of Computer Science $(\blacksquare)$, Philosophy $(\blacklozenge)$, and Psychology $(\bigstar)$ \\
  Stanford University \\
  \texttt{\{wumike, chengxuz, mmosse19, yamins, ngoodman\}@stanford.edu}
}
\begin{document}

\maketitle
% \vspace{-2em}
\begin{abstract}
In recent years, several unsupervised,  ``contrastive'' learning algorithms in vision have been shown to learn  representations that perform remarkably well on transfer tasks.
We show that this family of algorithms maximizes a lower bound on the mutual information between two or more ``views'' of an image where typical views come from a composition of image augmentations.
Our bound generalizes the InfoNCE objective to support negative sampling from a restricted region of ``difficult'' contrasts.
We find that the choice of negative samples and views are critical to the success of these algorithms.
Reformulating previous learning objectives in terms of mutual information also simplifies and stabilizes them.
In practice, our new objectives yield representations that outperform those learned with previous approaches for transfer to classification, bounding box detection, instance segmentation, and keypoint detection.
% experiments show that choosing more difficult negative samples results in a stronger representation, outperforming those learned with IR, LA, and CMC in classification, bounding box detection, instance segmentation, and keypoint detection.
The mutual information framework provides a unifying comparison of approaches to contrastive learning and uncovers the choices that impact representation learning.
\end{abstract}

% \vspace{-2em}
\section{Introduction}

Supervised learning algorithms have given rise to human-level performance in several visual tasks \cite{russakovsky2015imagenet,redmon2016you,he2017mask}, yet they require exhaustive labelled data, posing a barrier to widespread adoption.
In recent years, we have seen the growth of \textit{un}-supervised learning from the vision community \cite{wu2018unsupervised,zhuang2019local,he2019momentum,chen2020simple,chen2020improved} where the aim is to uncover vector representations that are ``semantically'' meaningful as measured by performance on downstream visual tasks.
In the last two years, this class of algorithms has already achieved remarkable results, quickly closing the gap to supervised methods \cite{he2016deep,simonyan2014very}.
% Common transfer tasks include visual question answering, image classification, or the GLUE benchmarks \cite{wang2018glue,wang2019superglue} to name a few.

The central idea behind these unsupervised algorithms is to \textit{treat every example as its own label} and perform classification, the intuition being that a good representation should be able to discriminate between  examples. Later algorithms build on this basic concept either through (1) technical innovations for numerical stability \cite{wu2018unsupervised}, (2) storage innovations to hold examples in memory \cite{he2019momentum}, (3) choices of data augmentation \cite{zhuang2019local,tian2019contrastive}, or (4) improvements in compute or hyperparameter choices \cite{chen2020simple}.

However, it is surprisingly difficult to compare these algorithms beyond intuition. As we get into the details, it is hard to rigorously justify their design.
For instance, why does clustering of a small neighborhood around every example \cite{zhuang2019local} improve performance? What is special about CIELAB filters \cite{tian2019contrastive}?
% More generally, why is per-example classification a good idea?
From a practitioners point of view, the choices may appear arbitrary.
% Furthermore, it is unclear which directions are more promising for the next generation of contrastive learning algorithms.
Ideally, we wish to have a theory that provides a systematic understanding of the full class of algorithms.
%  and can suggest new directions of research.

In this paper, we describe such a framework based on mutual information between ``views.'' In doing so, we find several insights regarding the individual algorithms. Specifically, our contributions are:
\begin{itemize}
  \item We present an information-theoretic description that can characterize IR \cite{wu2018unsupervised}, LA\cite{zhuang2019local}, CMC\cite{tian2019contrastive}, and more. To do so, we derive a new lower bound on mutual information that supports sampling negative examples from a restricted  distribution.
  \item We simplify and stabilize existing contrastive algorithms, by formulating them as mutual information estimators.
  \item We identify two fundamental choices in this class of algorithms: (1) how to choose data  augmentations (or ``views'') and (2) how to choose negative samples.
  Together these two are the crux of why instance-classification yields useful representations.
%   \item We offer a formal definition and properties of a ``good'' set of data augmentations.
  \item By varying how we choose negative examples (a previously unexplored direction), we find consistent improvements in multiple transfer tasks, outperforming IR, LA, and CMC.

\end{itemize}

\section{Background}
\label{sec:background}
We provide a review of three contrastive learning algorithms as described by their respective authors.
We will revisit each of these with the lens of mutual information in Sec.~\ref{sec:exist:algo}.

\textbf{Instance Discrimination (or Instance Recognition, IR)}
Introduced by Wu et. al. \cite{wu2018unsupervised}, IR was the first to classify examples as their own labels.
Let $x^{(i)}$ for $i$ in $[N]$ enumerate the images in a dataset. The function $g_\theta$ is a neural network mapping images to vectors of reals. The IR objective is given by
% Let $\nu$ denote a function that maps an image to an image (one of the input's augmentations). \cite{wu2018unsupervised} does not
% explicitly write out $\nu$, but we choose to do so to ease later analysis.
% IR then maximizes the following:
\begin{equation}
  \mathcal{L}^{\text{IR}}(x^{(i)}, M) = \log p(i|x^{(i)}, M) \text{ where } p(i|x^{(i)}, M) = \frac{e^{g_\theta(x^{(i)})^T M[i] / \omega}}{\sum_{j=1}^N e^{g_\theta(x^{(i)})^T M[j] / \omega}}.
  \label{eqn:ir}
\end{equation}
% \ndg{is embedding E or g? reconcile with above. also $\nu$ needs to be a stochastic function, since there are multiple augmentations, but then this eqn doesn't make sense... sum over augemntations or something?}
Here, $\omega$ is used to prevent gradient saturation.
The denominator in $p(i|x^{(i)},M)$ requires $N$ forward passes with $g_\theta$,  which is prohibitively expensive.
Wu et. al. suggest two approaches to ameliorate the cost.
First, they use a \textit{memory bank} $M$ to store the representations for every image.
The $i$-th representation $M[i]$ is updated using a linear combination of the stored entry and a new representation every epoch: $M[i] = \alpha * M[i] + (1-\alpha) * g_\theta(x^{(i)})$ where $\alpha \in [0, 1)$.
Second, the authors approximate $\sum_{j=1}^N e^{g_\theta(x^{(i)})^T M[j] / \omega} \approx \kappa\sum_{j=1}^K e^{g_\theta(x^{(i)})^T M[i_j] / \omega}$  where each $i_j$ is sampled uniformly from $[N]$, $K \ll N$, and $\kappa$ is a hardcoded constant.
% As a small aside, IR uses the composition of several image augmentations, but this is left out of Eq.~\ref{eqn:ir} as in \cite{wu2018unsupervised}.  We will revisit this in Sec.~\ref{sec:exist:algo}.
% If $\alpha = 0$, the memory bank stores only the representation of the view of image $x_i$ from the last epoch.
% The contribution of any single image decays exponentially to $0$.

\textbf{Local Aggregation (LA)}
The goal of IR is to learn representations such that it is equally easy to discriminate any example from the others.
However, such uniformity may be undesirable: images of the same class should intuitively be closer in representation than other images.
% But under the IR representation, all images would be equidistant.
With this as motivation, LA \cite{zhuang2019local} seeks to pull ``nearby'' images closer while  ``pushing'' other images away:
\begin{equation}
  \mathcal{L}^{\text{LA}}(x^{(i)}, M) = \log \frac{p(C^{(i)} \cap B^{(i)} | x^{(i)}, M)}{p(B^{(i)} | x^{(i)}, M)} \text{ where } p(I|x, M) = \sum_{i \in I}p(i|x, M),
  \label{eqn:la}
\end{equation}
and $I$ is any set of indices.
Given the $i$-th image $x^{(i)}$, its \textit{background neighbor set} $B^{(i)}$  contains the $K$ closest examples to $M[i]$ in embedding space.
Second, the \textit{close neighbor set} $C^{(i)}$, contains elements that belong to the same cluster as $M[i]$ where clusters are defined by K-means on embeddings.
In practice, $C^{(i)}$ is a subset of $B^{(i)}$.
% In practice, several clustering algorithms are used and the final $C_i$ is defined to be the intersection or union of the neighbors defined by each clustering algorithm.
% Intuitively, the elements of $C_i$ should be ``closer'' to $x_i$ than the elements of $B_i$, which acts as a baseline level of ``closeness.''
Throughout training, the elements of $B^{(i)}$ and $C^{(i)}$ change. LA outperforms IR by 6\% on the transfer task of ImageNet classification.

\textbf{Contrastive Multiview Coding (CMC)} CMC \cite{tian2019contrastive} adapts IR to decompose an input image into the luminance (L) and AB-color channels.
Then, CMC is the sum of two IR objectives where the memory banks for each modality are \textit{swapped}, encouraging the representation of the luminance of an image to be ``close'' to the representation of the AB-color of that image, and vice versa:
\begin{equation}
    \mathcal{L}^{\textup{CMC}}(x^{(i)}, M) = \mathcal{L}^{\textup{IR}}(x^{(i)}_{\textup{L}}, M_{\textup{ab}}) + \mathcal{L}^{\textup{IR}}(x^{(i)}_{\textup{ab}}, M_{\textup{L}})
    \label{eq:cmc}
\end{equation}
% where $M_{\textup{modality}}$ represents the memory bank storing representations for a single modality.
In practice, CMC outperforms IR by almost 10\% in ImageNet classification.
% \paragraph{Momentum Contrast (MoCo)} MoCo \cite{he2019momentum} replaces the memory bank, which may have problems with representations growing stale, with a FIFO queue (storing examples in order of training such that the ``oldest'' minibatch is dropped first). To approximate the denominator of the IR objective, the queue uses the least stale representations from recent minibatches. The authors find improvements in transfer classification accuracy (on ImageNet) with MoCo. In this work, we largely use a memory bank and leave detailed study of MoCo to future work.

\subsection{An Unexpected Reliance on Data Augmentation}
As is standard practice, IR, LA, and CMC transform images with a composition of cropping, color jitter, flipping, and grayscale conversion.
In the original papers, these algorithms do not emphasize the importance of data augmentation:
% \footnote{In the CMC paper \cite{tian2019contrastive}, while the authors introduce the notion of a ``view,'' it is not connected to cropping, flipping, or the basic augmentations used.}
in Eqs.~\ref{eqn:ir}-~\ref{eq:cmc},
% data augmentation does not explicitly appear;
the encoder $g_\theta$ is assumed to act on the image $x^{(i)}$ directly, although in reality it acts on a transformed image.
% However, given its widespread use, it would be surprising if data augmentation did not improve the performance on transfer tasks of all of these contrastive learning algorithms.
Despite this, we conjecture that without data augmentation, contrastive learning would not enjoy the success it has found in practice.
% In fact, without the inductive bias of neural networks, instance discrimination without augmentation is trivial.

Without augmentations, the IR objective is a function only of the embeddings of data points, $g^{(i)} = g_\theta(x^{(i)})$. For an L$^2$ normalized encoder  $g_\theta$, the objective pushes these embedded points towards a uniform distribution on the surface of a sphere. Thus IR does not distinguish the locations of the data points themselves so long as they are uniformly distributed; it is \textit{permutation-invariant}:

\begin{thm}
  Fix a random variable $X$ with $N$ realizations $\mathcal{D} = \{ x^{(i)},...,x^{(N)} \}$. Enumerate an embedding $g^{(1)},...,g^{(N)} \in \mathbb{R}^d$ of $\mathcal{D}$. Then for any permutation $\pi$ of $[N]$ (with notation as in Eq.~\ref{eqn:ir}):
  \begin{equation*}
    \scriptstyle
    \sum_{i} \log \left(
    \frac{e^{({g^{(i)}})^T M[i] / \omega}}{\sum_{j=1}^N e^{({g^{(i)}})^T M[j] / \omega}} \right)
    =
    \sum_{i} \log \left(
    \frac{e^{({g^{\pi(i)}})^T M[\pi(i)] / \omega}}{\sum_{j=1}^N e^{{(g^{\pi(i)}})^T M[\pi(j)] / \omega}} \right)
  \end{equation*}
 In particular, the optima of the IR objective, without augmentation, are invariant under permutation of the embedding vectors of data points.
  \label{thm:marketing}
\end{thm}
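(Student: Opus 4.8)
The plan is to prove the identity by a change of summation variable, exploiting that $\pi$ is a bijection of $[N]$ in two separate places: once for the outer sum over data points, and once for the inner sum appearing in each softmax denominator. No analytic input is needed; this is a purely combinatorial reindexing.

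First I would rewrite the right-hand side by substituting $k=\pi(i)$ in the outer sum. Since $\pi\colon [N]\to[N]$ is a bijection, as $i$ ranges over $[N]$ so does $k$, and the outer sum becomes $\sum_{k}\log\bigl(e^{(g^{(k)})^T M[k]/\omega}/\sum_{j=1}^N e^{(g^{(k)})^T M[\pi(j)]/\omega}\bigr)$, whose numerators already agree with those on the left-hand side term by term. Next I would treat the denominator: substituting $\ell=\pi(j)$ and again using bijectivity of $\pi$, we get $\sum_{j=1}^N e^{(g^{(k)})^T M[\pi(j)]/\omega}=\sum_{\ell=1}^N e^{(g^{(k)})^T M[\ell]/\omega}$, which is exactly the denominator of the $k$-th term on the left-hand side. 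Combining the two reindexings shows that each summand on the right equals the corresponding summand on the left, which proves the stated equality.

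For the final ``in particular'' clause, note that $\pi$ induces an action on configurations of embedding vectors — together with the associated memory-bank entries, since in the IR update $M[i]$ tracks $g^{(i)}$ — sending $(g^{(i)})_{i}$ to $(g^{(\pi(i))})_{i}$, so that a permuted configuration is again a valid configuration. The identity just proved says the IR objective is constant on every orbit of this action. Consequently, if a configuration maximizes the objective — or, more generally, is a critical point of it — then so is every permuted configuration, which is the asserted invariance of the optima.

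There is essentially no obstacle here: the argument is a double reindexing. The only points requiring genuine care are to carry out \emph{both} reindexings (the outer sum \emph{and} the denominator sum), invoking bijectivity of $\pi$ in each, and, in the last step, to be explicit that the permutation is applied consistently to the $g^{(i)}$ and the $M[i]$ simultaneously, so that the permuted object is a legitimate configuration of the same optimization problem.
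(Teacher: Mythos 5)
Your proof is correct and is essentially the same argument as the paper's, which simply asserts ``As sums are invariant to permutation, the result is immediate''; you have just made the two reindexings (outer sum via $k=\pi(i)$, denominator sum via $\ell=\pi(j)$) and the consistent-permutation reading of the ``in particular'' clause explicit.
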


\begin{wrapfigure}{l}{0.4\textwidth}
  \centering
  \begin{subfigure}[b]{0.19\textwidth}
    \centering
    \includegraphics[width=\textwidth]{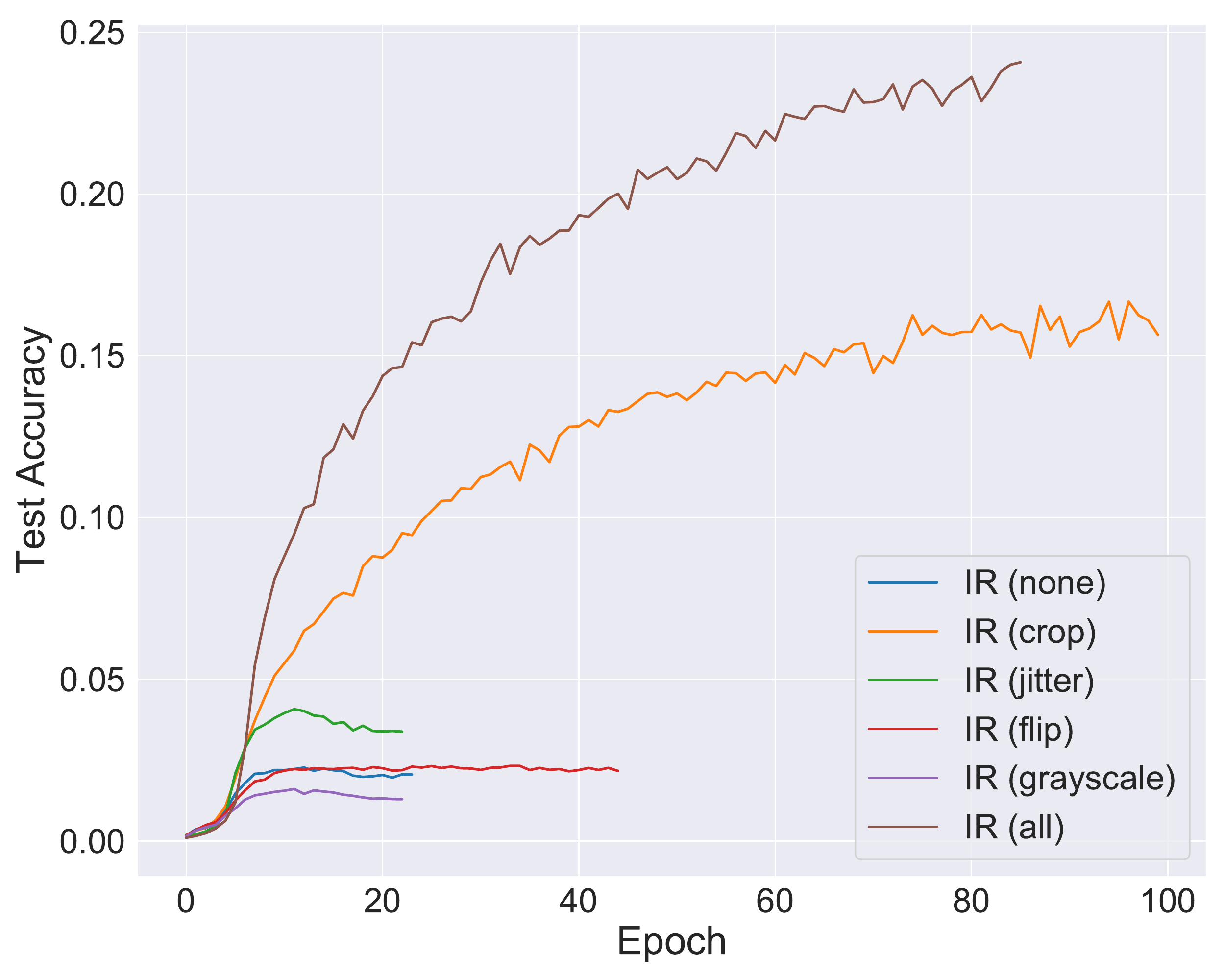}
    \caption{IR)}
    \vspace{0.15em}
  \end{subfigure}
    \centering
%   \begin{subfigure}[b]{0.20\textwidth}
%     \includegraphics[width=\textwidth]{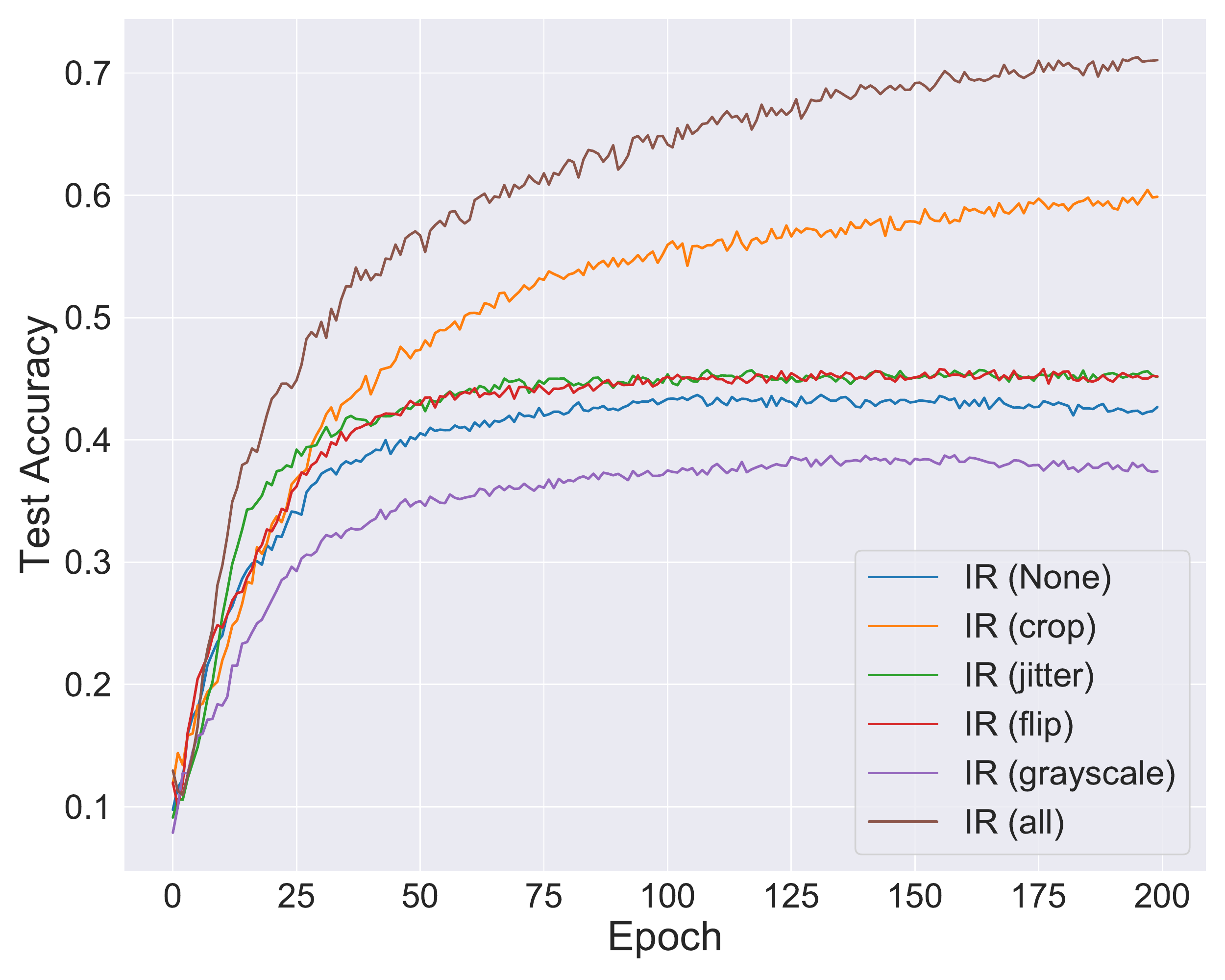}
%     \caption{IR (CIFAR10)}
%     \vspace{0.15em}
%   \end{subfigure}
  \begin{subfigure}[b]{0.19\textwidth}
    \centering
    \includegraphics[width=\textwidth]{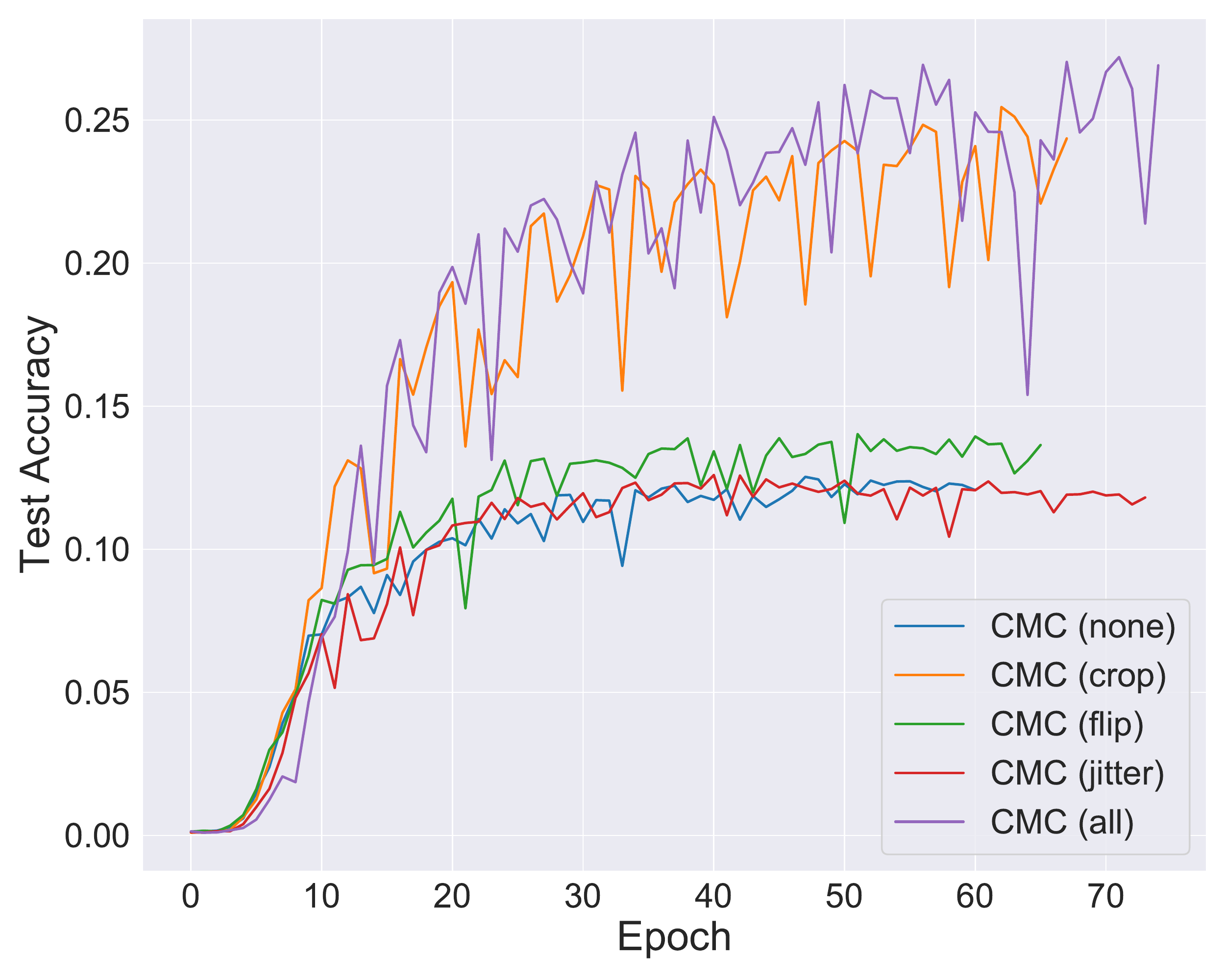}
    \caption{CMC)}
  \end{subfigure}
%   \begin{subfigure}[b]{0.20\textwidth}
%     \centering
%     \includegraphics[width=\textwidth]{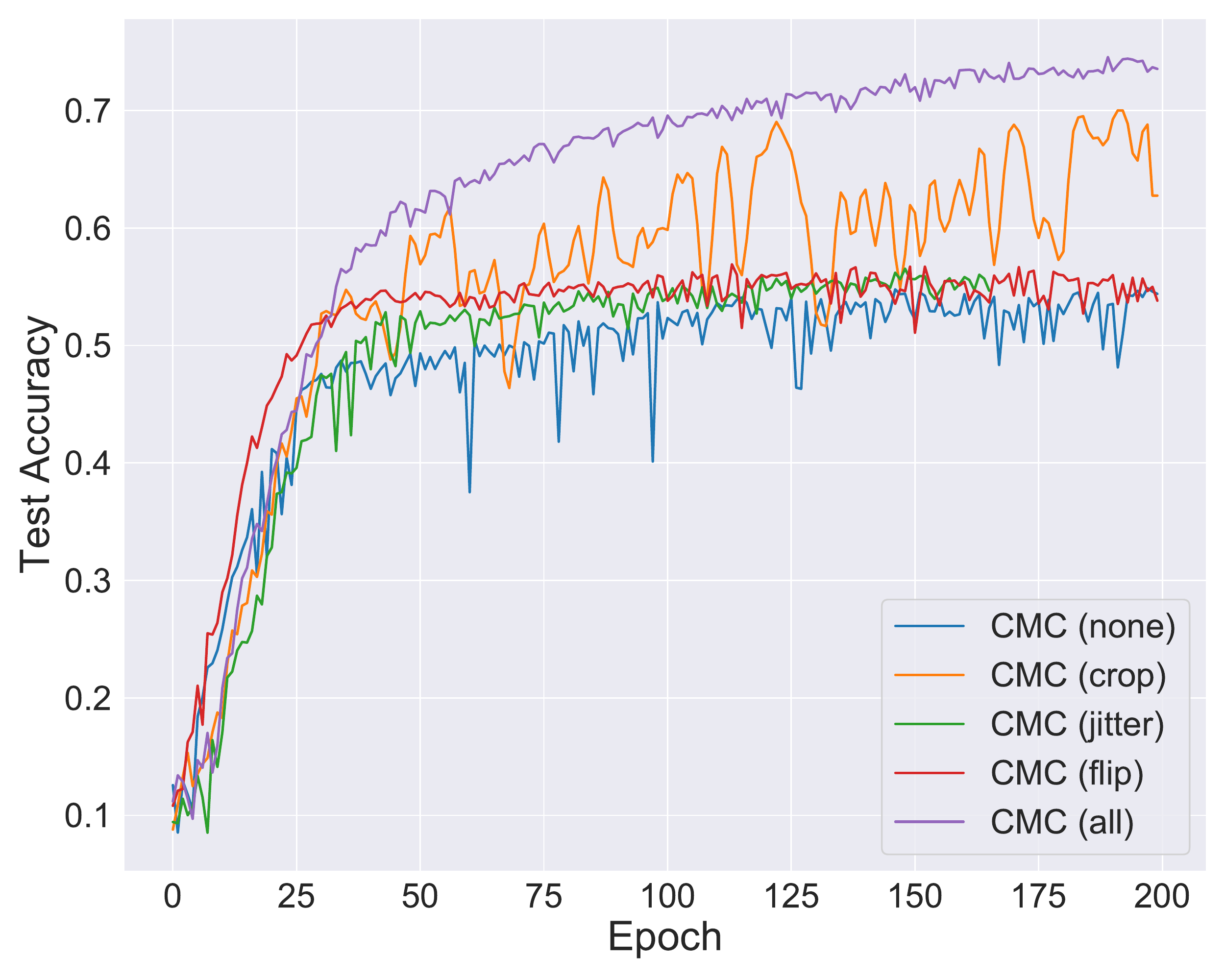}
%     \caption{CMC (CIFAR10)}
%   \end{subfigure}
  \caption{Effect of choice of views on representation quality for IR and CMC on ImageNet. See Fig.~\ref{fig:viewset_lesion_dup} for CIFAR10.}
  \label{fig:viewset_lesion}
\end{wrapfigure}

Now, succeeding on a transfer task requires the embeddings of data points with similar task labels to be close.
While there exists a permutation such that examples of the same class are placed next to each other, there are many others where they are not.
It is only with data augmentation that the invariance of Thm.~\ref{thm:marketing} is broken: if different data points can yield the same view (a ``collision'') then their embeddings cannot be arbitrarily permuted.
This implies that augmentation is the crux of learning a good representation. Moreover, not all augmentations are equally good:
permutation invariance will be reduced most when the graph of shared views is connected but not dense.

The discussion so far idealizes the embedding functions by ignoring their intrinsic biases: architecture and implicit regularization from SGD may favor certain optima.
In the non-idealized setting, IR with no augmentations may already produce non-trivial representations.
We turn to experimental evidence to determine the impact of these intrinsic biases compared to those introduce by data augmentation.
We train IR and CMC on ImageNet and CIFAR10 with different subsets of augmentations.
% For instance, we may optimize IR  but only with horizontal flips. Or, we may use no augmentations at all.
To measure the quality of the representation, for every image in the test set we predict the label of its closest image in the training set \cite{zhuang2019local}.
% by L$_2$ distance in the representation space \cite{zhuang2019local}.
% A better representation would achieve a higher accuracy.
% A better representation would properly place unseen images of a class nearby to ones seen in training.
% , thereby classifying the unseen image correctly.

Fig.~\ref{fig:viewset_lesion} clearly shows that without augmentations (the blue line), the representations learned by IR and CMC are significantly worse (though not trivial) than with all augmentations (the brown line). Further, we confirm that not all augmentations lead to good representations: collision-free transformations (e.g. flipping, grayscale) are on par with using no augmentations at all.
But, cropping, which certainly introduces collisions, accounts for most of the benefit amongst the basic image augmentations.

Having highlighted the importance of data augmentations, we restate IR to explicitly include them.
% A similar thing can be done for LA, CMC, and SimCLR.
Define a \textit{view function}, $\nu: X \times \mathcal{A} \rightarrow X$ that maps the realization of a random variable and an index to another realization.
% Let $\mathcal{A}$ be an ordered set of such indices. Also, define a distribution over indices $p(a)$ that scores the likelihood for each $a \in \mathcal{A}$. When we write $\nu(x,a)$ versus $\nu(x,b)$, we mean to choose two different views of $x$.
% We will write $\nu(x,a)$ twice when we wish to refer to the same view twice.
% As a concrete example, if $X$ represents an image, $\nu(x, a)$ could output a cropping or a rotation of $x$.
Each index $a$ in $\mathcal{A}$ is associated with a composition of augmentations (e.g. cropping plus jitter plus rotation). Assume $p(a)$ is uniform over $\mathcal{A}$. The IR objective becomes
% Let the term \textit{view} to represent any ``data augmentation'' e.g. cropping or AB filter.
% % To start, we introduce some notation.
% For a data point $x$, let $v_x$ be a set of views of $x$.
% % Each datum $x$ will have its own view set $v_x$.
% \ndg{simplify to just introduce a view fn with an arbitrary indexing set, and say that in the standard setting this includes all the augmentations?}
% All elements of $v_x$ have the same type and shape as $x$. In particular, we assume $x$ is a member of $v_x$.
% Without loss of generality, let the size of the set $|v_x|$ be the same for all $x$, denoted $|v|$.
% Define a sequence of $|v|$ indices $\mathcal{A}$ that assigns an arbitrary ordering to any set $v_x$.
% Next, define an \textit{view function} $\nu$ that maps $x$ and an index $a \in \mathcal{A}$ to the $a$-th element of $v_x$.
% That is, $\nu(x,a) = v_x[a]$.
% A final notational point:
% Then, IR can be written as
% \ndg{think you need an expectation over views or something?}
\begin{equation}
    \scriptstyle
  \mathcal{L}^{\text{IR}}(x^{(i)}, M) = \mathbb{E}_{p(a)}\left[ \log \frac{e^{g_\theta(\nu(x^{(i)}, a))^T M[i] / \omega}}{\sum_{j=1}^N e^{g_\theta(\nu(x^{(i)}, a))^T M[j] / \omega}} \right],\label{eqn:ir_new}\\
\end{equation}
where the representation $M[i]$ of the $i$-th image $x^{(i)}$ is updated at the $m$-th epoch by the equation
$M[i] = \alpha * M[i] + (1 - \alpha) * g_\theta(\nu(x^{(i)}, a_m)) \text{ for }a_m \sim p(a)$.
We will show that Eq.~\ref{eqn:ir_new} is equivalent to a bound on mutual information \emph{between views}, explaining the importance of data augmentations. To do so, we first rehearse a commonly used lower bound on mutual information.

\section{Equivalence of Instance Discrimination and Mutual Information}
\label{sec:mi_rep}

% We are now equipped with the machinery to show an equivalence between contrastive learning  and maximizing mutual information.
Connections between mutual information and representation learning have been  suggested for a family of masked language models \cite{kong2019mutual}.
However, the connection has not been deeply explored and a closer look in the visual domain uncovers several insights surrounding contrastive learning.
% \subsection{A Lower Bound on Mutual Information}
% \label{sec:mi}

Mutual information (MI) measures the statistical dependence between two random variables, $X$ and $Y_1$.
% Intuitively, mutual information can be interpreted as how much observing the value of $Y$ reduces uncertainty of the value of $X$.
% For instance, two independent random variables have a mutual information of 0: knowing one provides no new information about the other.
% Formally, we write $\mathcal{I}(X; Y) = \mathbb{E}_{X=x, y\sim p}\left[\log \frac{p(x,y)}{p(x)p(y)}\right]$
% where $p(x,y)$ is the joint distribution over $X$ and $Y$.
% In general, it is computationally intractable to compute.
% This is especially true in machine learning settings as $X$ and $Y$ are frequently high dimensional.
In lieu of infeasible integrals, a popular approach is to lower bound MI with InfoNCE \cite{poole2019variational,gutmann2010noise,oord2018representation}:
\begin{equation}
    \scriptstyle
  \text{MI}(X;Y_1) \geq
  \mathcal{I}^{\textup{NCE}}(X;Y_1) = \mathcal{I}^{\textup{NCE}}(X;Y_1, Y_{2:K}) =  \mathbb{E}_{p(x,y_1)}\left[f_{\theta,\phi}(x,y_1) - \mathbb{E}_{p(y_{2:K})}\left[\log \frac{1}{K} \sum_{j=1}^K e^{f_{\theta,\phi}(x, y_j)} \right]\right]
  \label{eqn:infonce}
\end{equation}
where $Y_{2:K}$ are $K-1$ independent copies of $Y_1$. The \textit{witness function} $f_{\theta,\phi}(x,y) = g_\theta(x)^T g_\phi(y)$ measures the ``compatibility'' of $x$ and $y$.
We use $g_\theta$ and $g_\phi$ to designate
% \ndg{should be g}
encoders that map realizations of $X$ and $Y_i$ to vectors.
% We call the output of each encoder a \textit{representation}.
% The value of the witness function alone is unnormalized.
The rightmost term in Eq.~\ref{eqn:infonce} serves to normalize $f_{\theta,\phi}(x,y)$ with respect to other realizations of $Y_1$.
% At times, we may drop the subscripts $\phi$, $\theta$ on $f$ for notational simplicity.
We use $y_{2:K} = \{y_2, \ldots, y_K \}$ to denote a set of \textit{negative samples}, or realizations of $Y_{2:K}$.
The standard choice for $p(y_{2:K})$ is $\prod_{i=2}^K p(y_i)$.
Note we use subscripts $_i$ to index values of random variables whereas we used superscripts $^{(i)}$ in Sec.~\ref{sec:background} to index a dataset.

In machine learning, we often treat examples from a dataset $\mathcal{D}$ as samples from an empirical distribution $p_{\mathcal{D}}$. In Eq.~\ref{eqn:infonce}, if we replace $p(x, y_1)$ and $p(y_{2:K})$ with their empirical analogs, $p_{\mathcal{D}}(x, y_1)$ and $p_{\mathcal{D}}(y_{2:K})$, we call this estimator $\hat{\mathcal{I}}^{\text{NCE}}(X; Y_1)$, or empirical InfoNCE.
% That is,  i.i.d. samples drawn from the marginal.
% In Sec.~\ref{sec:appendix:infonce}, we include a proof to show $\mathcal{I}(X;Y) \geq \mathcal{I}^{\textup{NCE}}(X;Y)$ from \cite{poole2019variational}.

\subsection{Equivalence of IR and CMC to InfoNCE}
\label{sec:exist:algo}
To move Eq.~\ref{eqn:infonce} toward Eq.~\ref{eqn:ir_new},
consider lower bounding the MI between two \textit{weighted view sets} with InfoNCE.
For views $\nu^*: X \times 2^\mathcal{A} \rightarrow 2^X$ of index \textit{sets} $A \subseteq \mathcal{A}$, and a set of $|A|$ weights $w$, define the extended encoder as $g^*_\theta(v^*(x, A), w) = \sum_{\substack{x_m \in v^*(x, A)}} w[m] \cdot g_\theta(x_m)$ where $w[m]$ fetches the $m$-th weight.
This encoder leads to the extended witness function $f^*_\theta(\nu^*(x, A), \nu^*(x', A'); w, w') = g^*_\theta(\nu^*(x, A), w)^T g^*_\theta(\nu^*(x', A'), w') / \omega$ for two realizations $x$, $x'$.
Treating weighted view sets as random variables derived from $X$, we express the InfoNCE bound as
\begin{align}
    \scriptstyle
    \mathcal{I}^{\text{NCE}}(X; \nu^*) &= \scriptstyle \mathbb{E}_{A,A'\sim p(A)}\left[\mathcal{I}^{\text{NCE}}(\nu^*(X, A), \nu^*(X, A'))\right] \\
    &= \scriptstyle \mathbb{E}_{p(x_{1:K})}\mathbb{E}_{A_{1:K},B,B' \sim p(A)}\left[\log \frac{e^{f^*_{\theta}(\nu^*(x_1,B), \nu^*(x_1, B'); w_B, w_{B'})}}{\frac{1}{K}\sum_{j=1}^K e^{f^*_{\theta}(\nu^*(x_1,B), \nu^*(x_j, A_j); w_B, w_{A_j}) }} \right].
    \label{eqn:infonce:view}
\end{align}
% where we abuse notation, also viewing $\nu^*(X, A)$ as the random variable describing the set $\{\nu(x, a) : a \in A\}$ for any realization $x$ of $X$, and the elements of $A$ sampled i.i.d.: $p(A) = \prod_{a_i \in A} p(a_i)$.

Our insight is to characterize the memory bank $M$ as a weighted view set. Given the $i$-th image $x^{(i)}$ and $M$ with update rate $\alpha$, we estimate the $i$-th entry as $M[i] = \sum_{n=1}^{|\mathcal{A}|} \alpha^{n-1} g_\theta(\nu(x^{(i)}, a_n)) \approx \sum_{m=1}^{n_\alpha} \alpha^{m-1} g_\theta(\nu(x^{(i)}, a_m))$.
The left expression enumerates over all elements of the index set $\mathcal{A}$ while the right sums over $a_m \sim p(a)$, the $m$-th sampled index from $\mathcal{A}$, over $n_\alpha$ epochs of training.
Because the contribution of any view to the memory bank entry exponentially decays, the first sum is tightly approximated by the second where $n_\alpha$ is a function of $\alpha$.
If $\alpha = 0$, then $n_\alpha = 1$.

For each example $x^{(i)}$, construct two index sets: $\{a\}$, the index sampled from $p(a)$ for the current epoch, and $A^{(i)} \subseteq \mathcal{A}$, the indices sampled over the last $n_\alpha$ epochs of training. Next, fix weights $w_1 = \{ 1 \}$ and $w_2 = \{ 1, \alpha, \ldots, \alpha^{{n_\alpha}-1} \}$. Consider the special case of Eq.~\ref{eqn:infonce:view} where the witness function is $f^*_\theta(\nu^*(x^{(i)}, \{a\}), \nu^*(x^{(i)}, A^{(i)}); w_1, w_2)$, the weighted dot product between the current view of $x^{(i)}$ and the stored representation from the bank.
% Note that $g_\theta(v_S(x_i, a), W_S) = E_\theta(\nu(x_i, a))$.
% In other words, we use $\mathcal{I}^{\text{NCE}}(X; \nu^*)$ as shorthand for a bound on MI between two \textit{weighted view sets} of $X$.
% \ndg{i'm a little weirded out by the notation $\mathcal{I}^{\text{NCE}}(X; X)$.... what we mean is average of infonce between views, right? maybe we can make that more explicit? by like defining $\mathcal{I}(X, \nu) = \mathbb{E}_{\nu_1,\nu_2} \mathcal{I}^{\text{NCE}}(\nu_1(X); \nu_2(X))$ or something?}
The next lemma shows that this special case of InfoNCE is equivalent to IR and CMC. The proof can be found in Sec.~\ref{sec:prooflemir_mi}.

\begin{lem}
    Let $p_\mathcal{D}(x)$ be an empirical distribution over images $x$, $\mathcal{D}$ a dataset whose members are sampled i.i.d. from  $p_\mathcal{D}$, and $M$ a memory bank. Let $\mathcal{A}_{\textup{IR}}$ and $\mathcal{A}_{\textup{CMC}}$ be two sets of indices where each index is associated with an image augmentation in IR and an image augmentation in CMC, respectively. Now, define two extended view functions, $\nu^*_{\textup{IR}}: X \times 2^{\mathcal{A}_{\textup{IR}}} \rightarrow 2^X$ and $\nu^*_{\textup{CMC}}: X \times 2^{\mathcal{A}_{\textup{CMC}}} \rightarrow 2^X$. If we use $K$ negative samples, then
    $\hat{\mathcal{I}}^{\textup{NCE}}(X; \nu^*_{\textup{IR}}) \equiv \mathbb{E}_{p_\mathcal{D}}\left[\mathcal{L}^{\textup{IR}}(x, M)\right] + \log \frac{K}{\kappa}$, and $\hat{\mathcal{I}}^{\textup{NCE}}(X; \nu^*_{\textup{CMC}}) \equiv \mathbb{E}_{p_\mathcal{D}}\left[\mathcal{L}^{\textup{CMC}}(x, M)\right]/2 + \log \frac{K}{\kappa}$ where the symbol $\equiv$ implies equivalence.
  \label{lem:ir_mi}
\end{lem}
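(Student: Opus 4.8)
The plan is to unfold both sides of each claimed equivalence until they are syntactically the same expression, treating the memory bank as the weighted view set $\nu^*(x^{(i)}, A^{(i)})$ with weights $w_2 = \{1,\alpha,\dots,\alpha^{n_\alpha-1}\}$ exactly as set up in the paragraph preceding the lemma. The key observation driving everything is the identity $g^*_\theta(\nu^*(x^{(i)}, A^{(i)}), w_2) = \sum_{m=1}^{n_\alpha}\alpha^{m-1} g_\theta(\nu(x^{(i)}, a_m)) = M[i]$, which is precisely the approximation to the memory-bank update rule stated in the text, together with $g^*_\theta(\nu^*(x^{(i)},\{a\}), w_1) = g_\theta(\nu(x^{(i)},a))$ since $w_1 = \{1\}$. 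Substituting these into the extended witness function gives $f^*_\theta(\nu^*(x^{(i)},\{a\}), \nu^*(x^{(i)}, A^{(i)}); w_1, w_2) = g_\theta(\nu(x^{(i)},a))^T M[i] / \omega$, which is exactly the exponent appearing in $\mathcal{L}^{\textup{IR}}$ of Eq.~\ref{eqn:ir_new}.

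\textbf{For the IR half}, I would start from the empirical InfoNCE bound $\hat{\mathcal{I}}^{\textup{NCE}}(X; \nu^*_{\textup{IR}})$ written out as in Eq.~\ref{eqn:infonce:view}, replacing $p(x_{1:K})$ with the empirical product $p_\mathcal{D}$. The ``positive'' term in the numerator uses the same view set for $x_1$ in both slots, but since what enters the memory bank is a different (earlier-epoch) sample of augmentation indices than the current one, and since $p(a)$ is uniform and the bank is treated as fixed in the objective, the positive-pair witness reduces to $g_\theta(\nu(x_1,a))^T M[1]/\omega$. For the denominator, each of the $K$ terms is $g_\theta(\nu(x_1,a))^T M[i_j]/\omega$ with $i_j$ drawn from the dataset — matching Wu et al.'s $K$-sample approximation $\sum_{j=1}^K e^{g_\theta(x^{(i)})^T M[i_j]/\omega}$. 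Then I would pull the $\tfrac{1}{K}$ inside the $\log$ out as an additive $-\log\frac{1}{K} = \log K$, and absorb the hardcoded constant $\kappa$ from the IR approximation $\sum_{j=1}^N \approx \kappa\sum_{j=1}^K$ as $-\log\kappa$; reindexing the expectation over $p_\mathcal{D}(x_1)$ as $\mathbb{E}_{p_\mathcal{D}}[\mathcal{L}^{\textup{IR}}(x,M)]$ and collecting the two constants yields the additive $\log\frac{K}{\kappa}$. The $\omega$ and $L^2$-normalization play no role here since they sit inside $f^*_\theta$ and $g_\theta$ respectively and are carried along identically on both sides.

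\textbf{For the CMC half}, the argument is the same run twice with the luminance/AB-color split. Here $\mathcal{A}_{\textup{CMC}}$ indexes the channel-decomposition-plus-augmentation, so $\nu^*_{\textup{CMC}}$ produces a two-element weighted view set per image (one L-view, one AB-view), and the extended encoder sums $g_\theta$ over both channels; the cross terms $g_\theta(x^{(i)}_{\textup{L}})^T M_{\textup{ab}}[i] + g_\theta(x^{(i)}_{\textup{ab}})^T M_{\textup{L}}[i]$ appear because the memory banks are swapped as in Eq.~\ref{eq:cmc}. Since $\mathcal{L}^{\textup{CMC}}$ is the \emph{sum} of two IR-style log-ratios while a single InfoNCE numerator/denominator pairs one view-set slot against another, one copy of InfoNCE recovers $\tfrac12\mathbb{E}_{p_\mathcal{D}}[\mathcal{L}^{\textup{CMC}}(x,M)]$ plus the same $\log\frac{K}{\kappa}$ constant — the factor $\tfrac12$ being exactly the symmetrization that the symbol $\equiv$ is meant to tolerate.

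\textbf{The main obstacle} is not any one calculation but making the identification of the memory bank with a weighted view set fully rigorous: the bank entry $M[i]$ is literally $\sum_{n=1}^{|\mathcal{A}|}\alpha^{n-1}g_\theta(\nu(x^{(i)},a_n))$ only in the limit, and the paper's ``$\approx$'' truncation at $n_\alpha$ epochs has to be promoted to an exact equality for the ``$\equiv$'' claim to hold — so I would be explicit that the equivalence is stated modulo this truncation (and modulo the two sampling approximations internal to IR, namely the $K$-negative-sample and $\kappa$ approximations). A secondary subtlety is bookkeeping the distinction between the epoch-$m$ sampled augmentation index $a_m$ feeding the bank update and the current-epoch index $a$ feeding the positive witness: both are distributed as $p(a)$, so in expectation they are interchangeable, but I would flag that the objective treats $M$ as a constant (stop-gradient) so this mismatch is harmless. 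Everything else — exponentiating, moving $\log\tfrac1K$ and $\log\kappa$ outside, and reindexing expectations — is mechanical.
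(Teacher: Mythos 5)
Your approach matches the paper's proof essentially step for step: identify the memory bank entry $M[i]$ with the weighted view set $g^*_\theta(\nu^*(x^{(i)},A^{(i)}),w_2)$, specialize Eq.~\ref{eqn:infonce:view} to the index sets $\{a\}$ and $A^{(\rho(x_1))}$ with weights $w_1=\{1\}$, $w_2=\{1,\alpha,\dots,\alpha^{n_\alpha-1}\}$, pull out $\log K$ and $-\log\kappa$ via the $K$-sample and $\kappa$ approximations to the softmax denominator, and then obtain the CMC case by running the IR argument once for the L-to-ab direction and once for ab-to-L. One small internal inconsistency in your CMC paragraph: you first describe the extended encoder as \emph{summing} $g_\theta$ over both channels, which would produce four dot-product terms $g_\theta(x_L)^TM_L + g_\theta(x_L)^TM_{ab} + g_\theta(x_{ab})^TM_L + g_\theta(x_{ab})^TM_{ab}$ rather than the two cross terms you want; the paper instead bisects $\mathcal{A}_{\textup{CMC}}$ into $\mathcal{A}_{\textup{CMC-L}}$ and $\mathcal{A}_{\textup{CMC-ab}}$ and treats each pairing as a \emph{separate} InfoNCE bound, then averages over the two orderings of index-set draws in $\mathbb{E}_{A,A'\sim p(A)}$, which is what produces the clean factor of $\tfrac12$ — and is in fact what your final sentence ("one copy of InfoNCE pairs one view-set slot against another") gestures at, so the fix is just to drop the channel-summing description. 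Your explicit flagging of where the truncation and sampling approximations live is a genuine improvement in clarity over the paper, which absorbs them into the $\equiv$ symbol without comment.
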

% \ndg{view sets are different for IR and CMC, so lemma statement isn't quite right.}
 Lemma~\ref{lem:ir_mi} provides insight into IR and CMC: the objectives both lower bound MI but using different view sets.
% The view set for CMC is partitioned into two disjoint sets.
As L and ab capture nearly disjoint information, CMC imposes a strong information bottleneck between any two views. Thus, we see why in Fig.~\ref{fig:viewset_lesion}, CMC maintains a higher performance without any augmentations compared to IR: the view set for CMC is always nontrivial.
% In fact, Fig.~\ref{fig:mi_views}e shows the L-ab view set to be at the apex of the curve between MI and accuracy.
% Second, the notion of ``view'' as an L or AB filter of an image versus ``view'' as cropping or adding jitter are one and the same. Whereas the original paper \cite{tian2019contrastive} focuses on the former exclusively, we find this more general interpretation to be useful --- as Fig.~\ref{fig:viewset_lesion}c and d show, performance still dramatically varies without cropping.
% Third, Fig.~\ref{fig:viewset_lesion}c and d also show that without any augmentations, CMC still maintains a baseline performance much higher than IR does. The MI framework makes it easy to understand why this might be: even with no augmentations,

\subsection{Simplifying Instance Discrimination}
\label{sec:mi_insights}
% Having found experimentally that view sets are not created equal, a natural question arises: \textit{what distinguishes a good view set from a bad view set?}
% Specifically, what properties of a view set make it good for representation learning?
% If we could answer that, we could then choose the best view sets for each domain and dataset in an educated manner.
% As a first step, we list a few properties of a view set that we find to be necessary but perhaps not sufficient, followed by a toy example.
% For a toy example testing each properties, see Sec.~\ref{sec:toy_view}.

% But as these were somewhat arbitrarily chosen, are they all equally important to learning a representation? Or are some representations more important than others?
% Without yet stating the properties of a good view and its connection to mutual information, we can run a simple experiment:
The equivalence to mutual information can help us pick hyperparameters and simplify IR.

\begin{figure}[h!]
  \centering
  \begin{subfigure}[b]{0.15\textwidth}
    \centering
    \includegraphics[width=\textwidth]{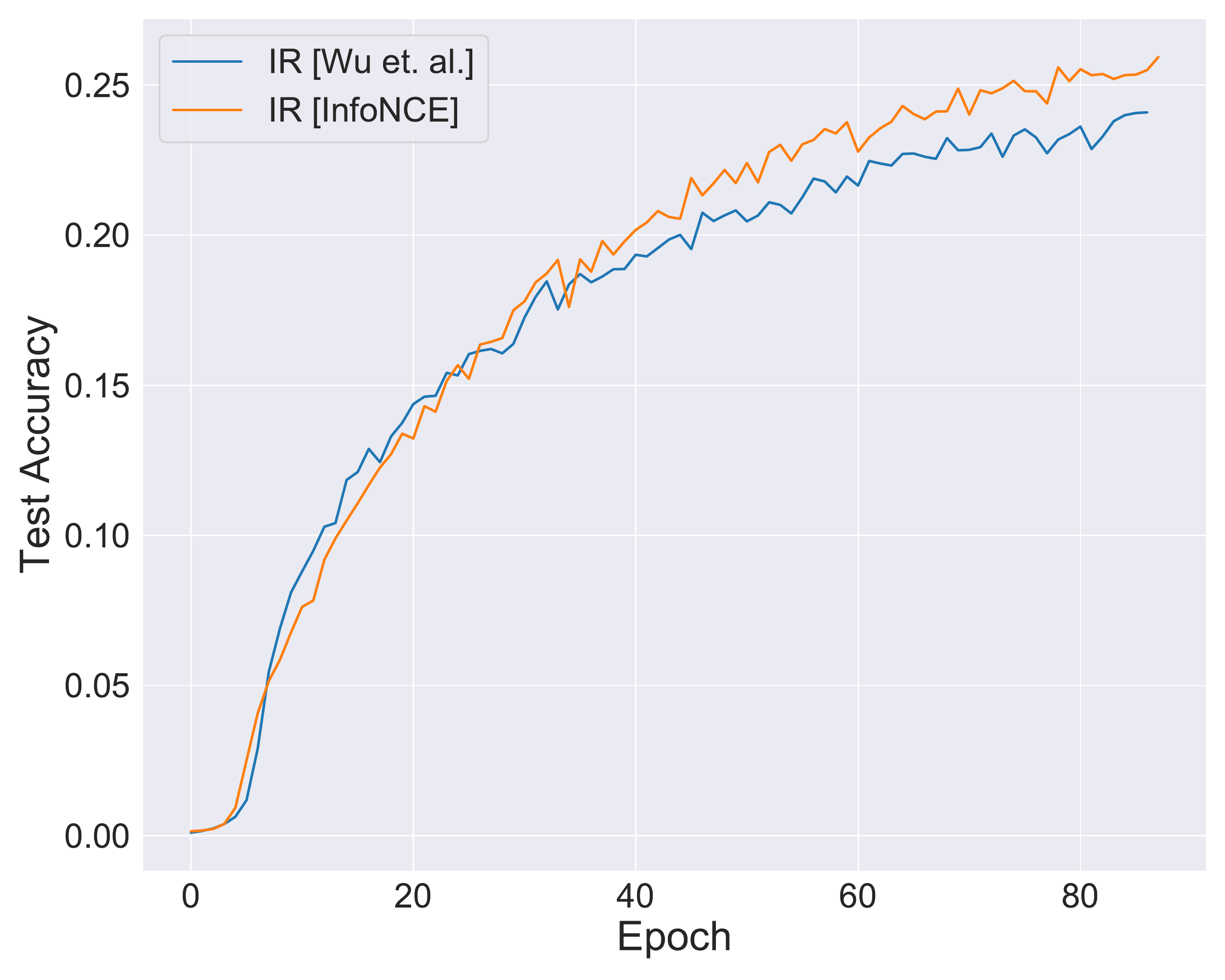}
    \caption{Stability ($\dagger$)}
  \end{subfigure}
  \begin{subfigure}[b]{0.15\textwidth}
    \centering
    \includegraphics[width=\textwidth]{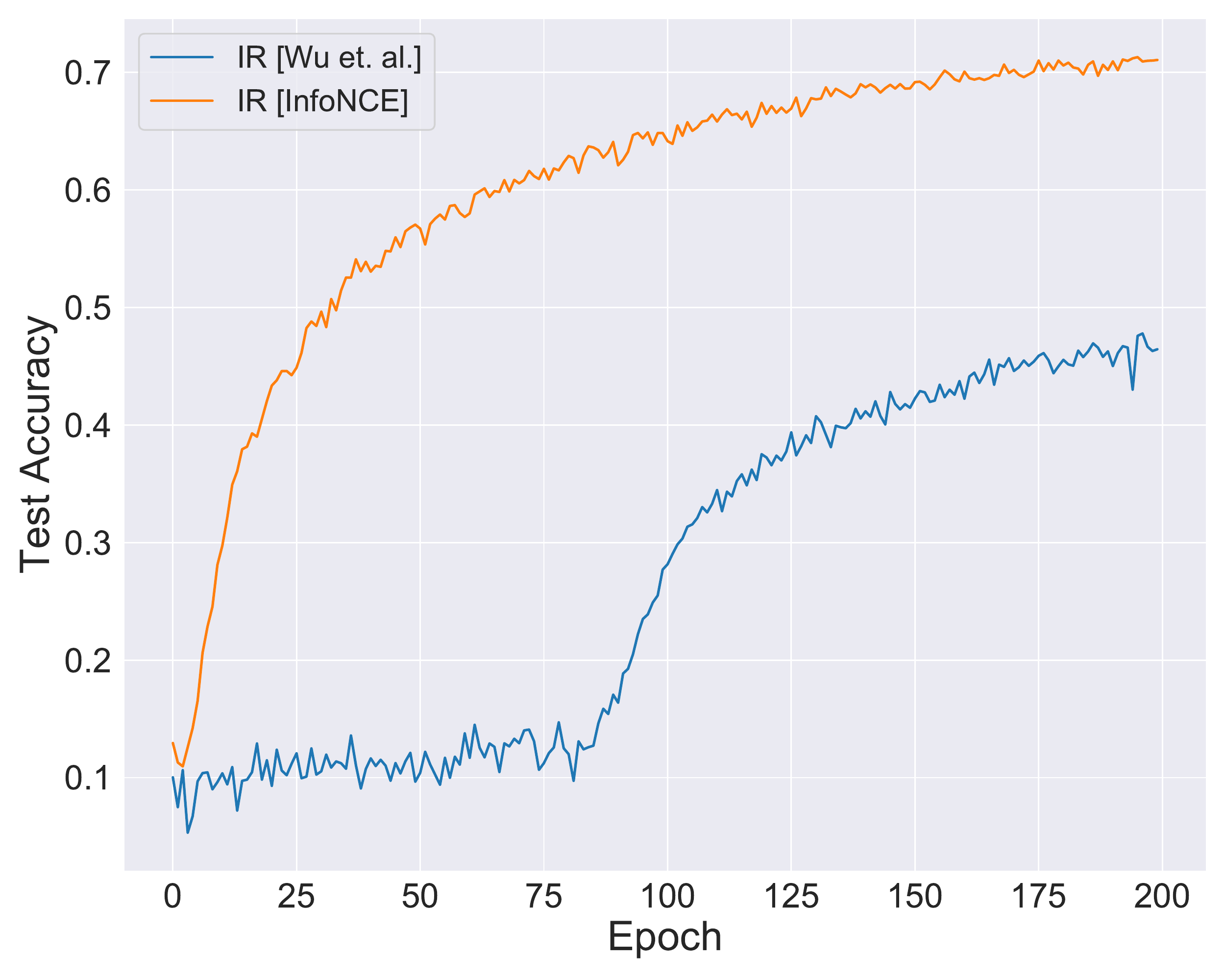}
    \caption{Stability ($\ddagger$)}
  \end{subfigure}
  \begin{subfigure}[b]{0.15\textwidth}
    \centering
    \includegraphics[width=\textwidth]{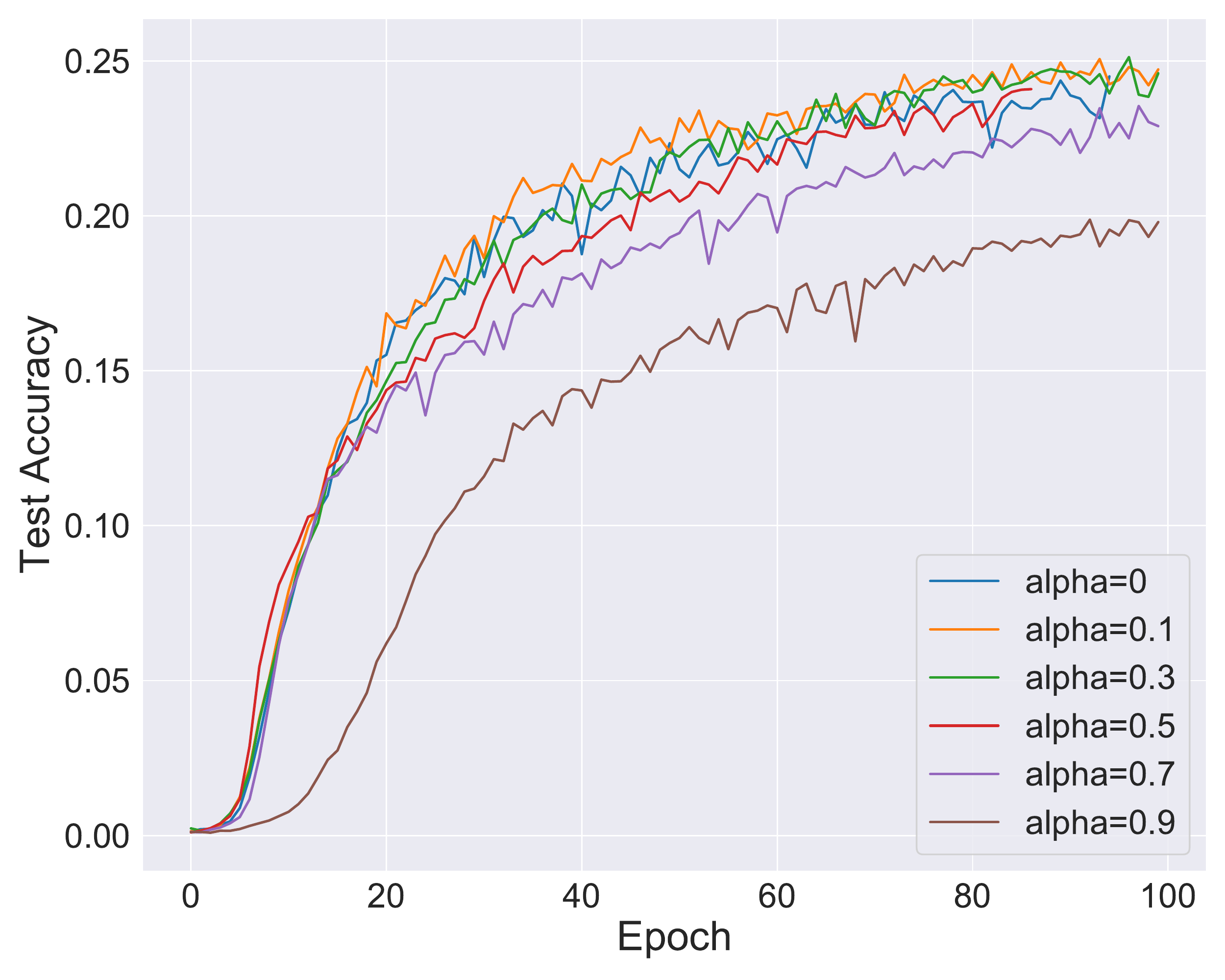}
    \caption{$\alpha$+IR ($\dagger$)}
  \end{subfigure}
  \begin{subfigure}[b]{0.15\textwidth}
    \centering
    \includegraphics[width=\textwidth]{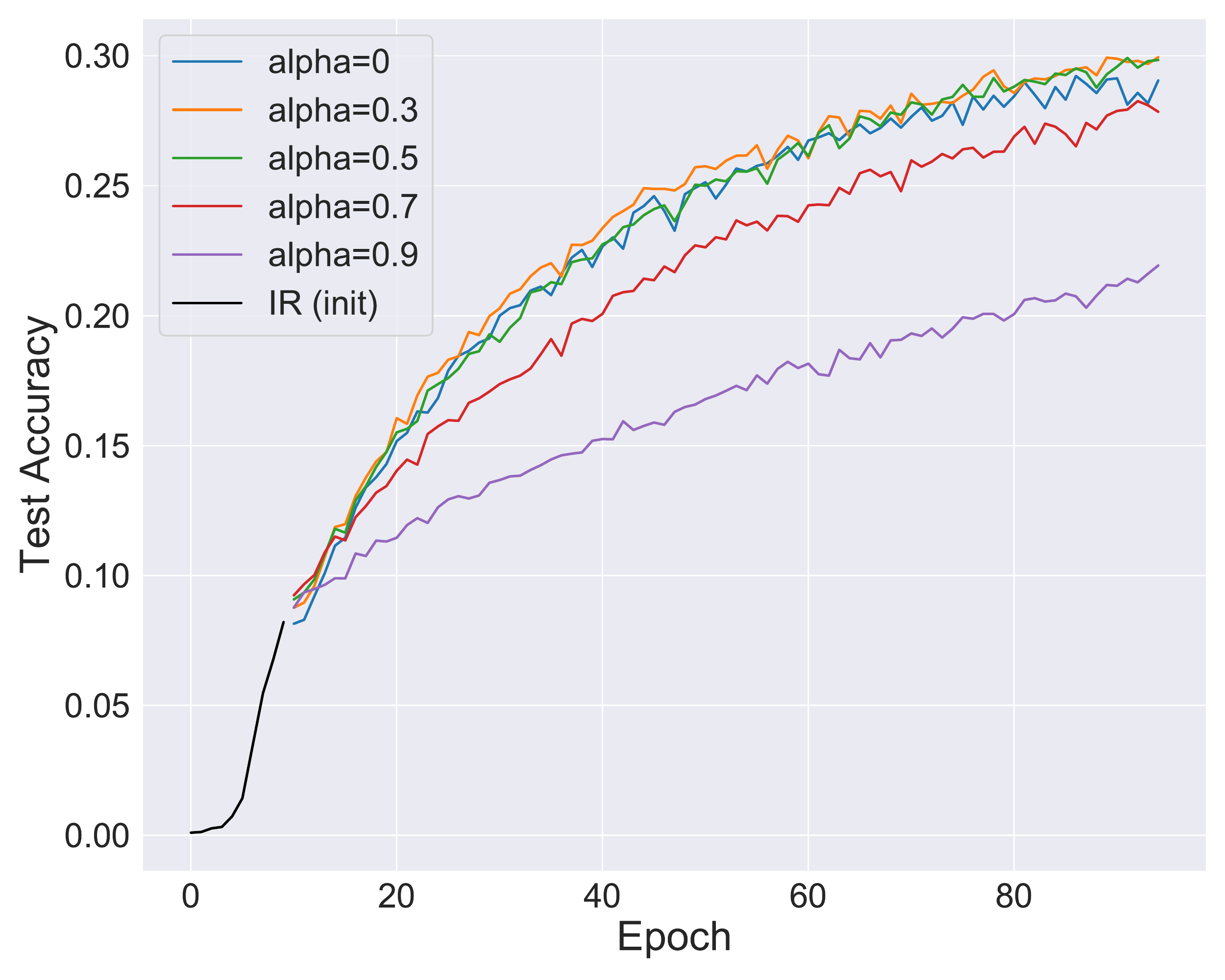}
    \caption{$\alpha$+LA ($\dagger$)}
  \end{subfigure}
    \begin{subfigure}[b]{0.15\textwidth}
    \centering
    \includegraphics[width=\textwidth]{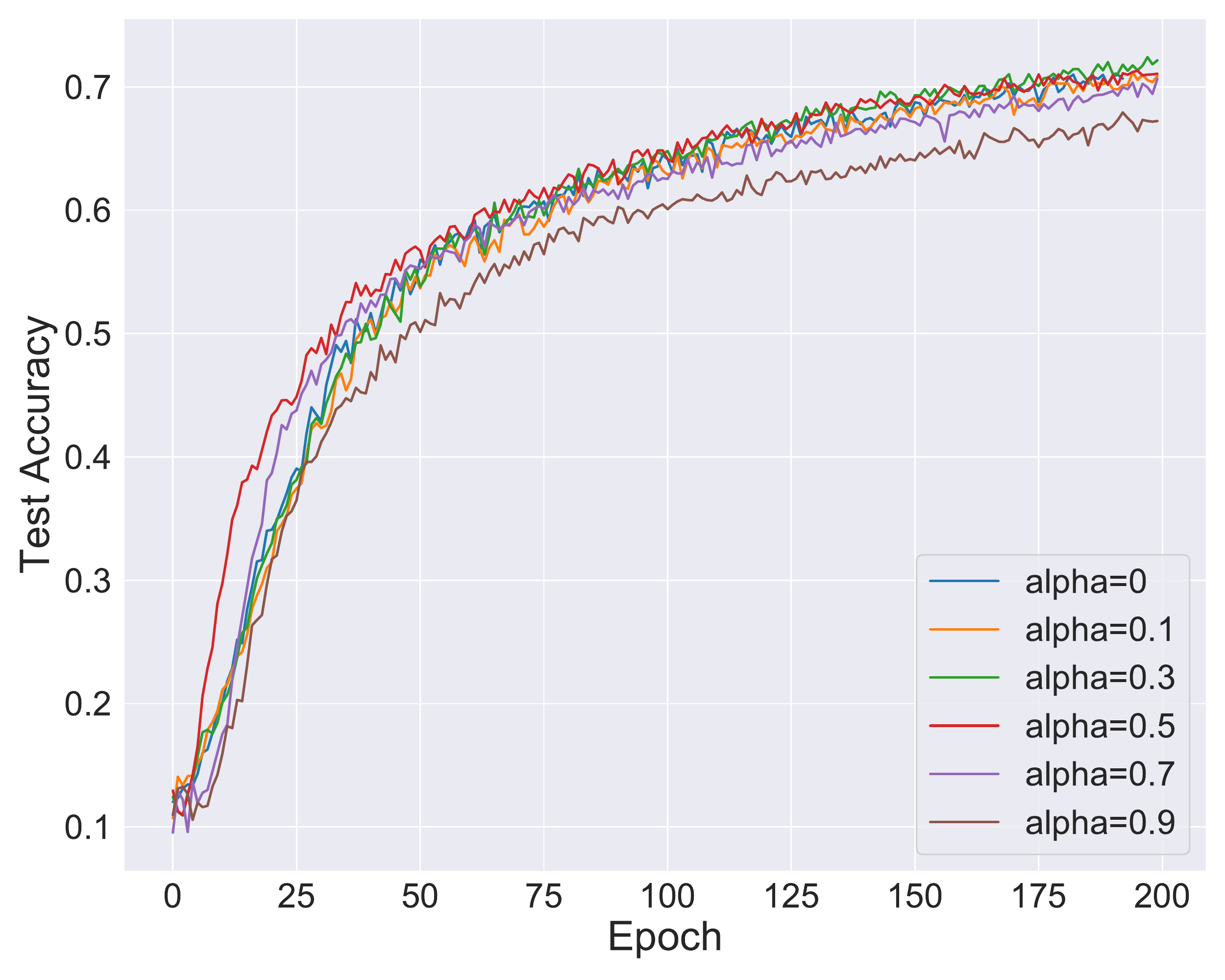}
    \caption{$\alpha$+IR ($\ddagger$)}
  \end{subfigure}
  \begin{subfigure}[b]{0.15\textwidth}
    \centering
    \includegraphics[width=\textwidth]{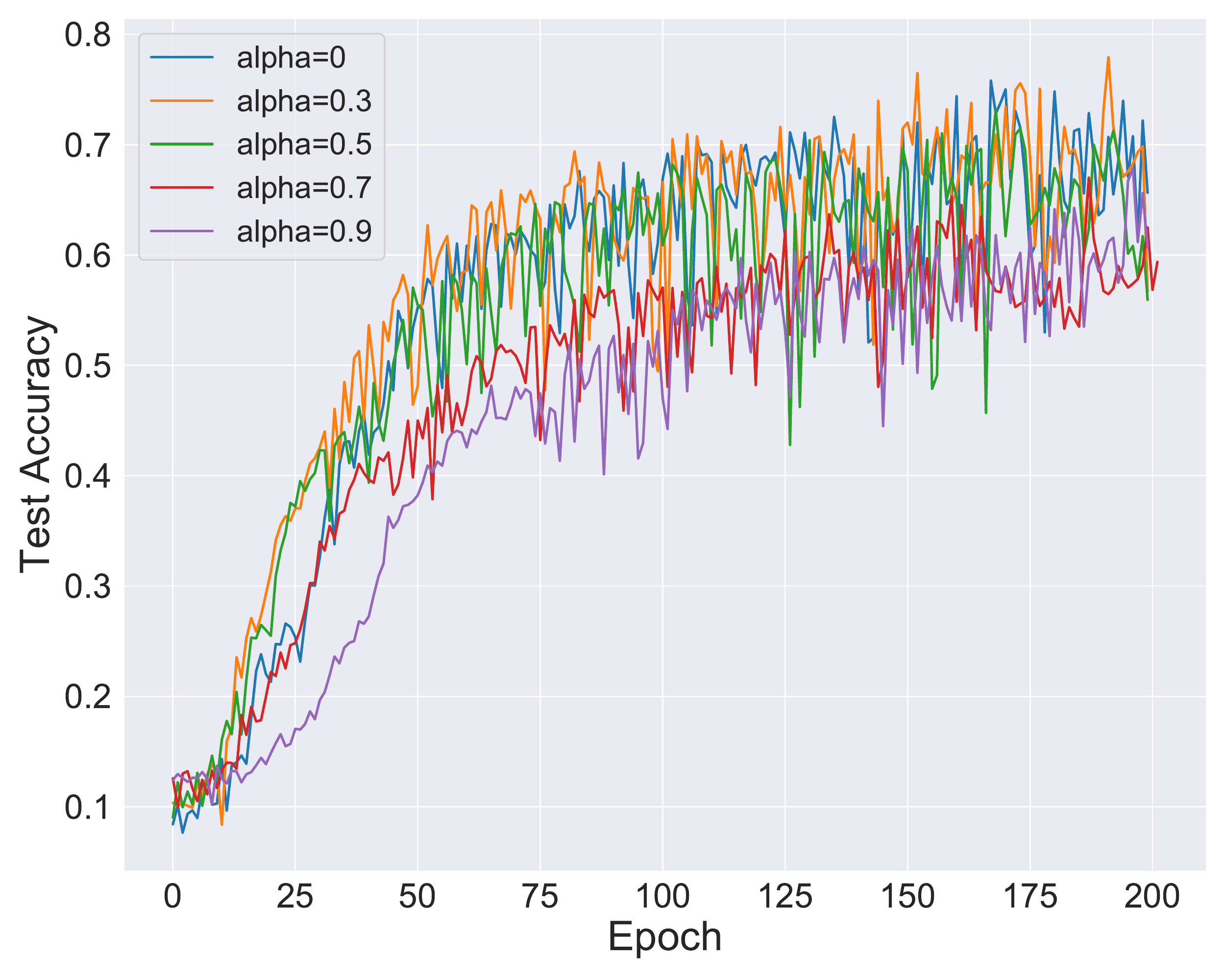}
    \caption{$\alpha$+CMC ($\ddagger$)}
  \end{subfigure}
  \caption{Nearest neighbor classification accuracy comparing IR with InfoNCE in stability (a,b), and different $\alpha$ for the memory bank of IR, LA (e-h) on Imagenet ($\dagger$) and CIFAR10 ($\ddagger$).}
  \label{fig:discussion}
\end{figure}

\textbf{Softmax ``hacks'' are unnecessary.}$\quad$
The original IR implementation used several innovations to make a million-category classification problem tractable.
For instance, IR approximates the denominator in Eq.~\ref{eqn:ir} with $K$ samples scaled by $\kappa = \frac{2876934.2}{1281167}$.
Such practices were propagated to LA and CMC in their official implementations.
While this works well for ImageNet, it is not clear how to set the constant for other datasets. For small datasets like CIFAR10, such large constants introduce numerical instability in themselves.
However, once we draw the relation between IR and MI, we immediately see that \textit{there is no need to compute the softmax} (and no need for $\kappa$); InfoNCE only requires a \texttt{logsumexp} over the $K$ samples, a much more stable operation.
Fig.~\ref{fig:discussion}c and d show the effect of switching from the original IR code to InfoNCE. While we expected to find the large impact in CIFAR10, we also find that in ImageNet (for which the constants were tuned) our simplification improves performance.
% As a final note, we do find that temperature is still necessary and like in \cite{chen2020simple}, a higher temperature than the default of 0.07 is better.
Later, we refer to this ``simplified IR'', as IR$^{\text{nce}}$.

\textbf{The memory bank is not critical.}$\quad$
The memory bank makes Eq.~\ref{eqn:infonce:view} quite involved.
Yet the MI between weighted views should be very similar to the average MI between the underlying views.
We thus consider the special case when $\alpha = 0$.
Here, the $i$-th entry $M[i]$ stores only the encoding of a single view chosen in the last epoch.
As such, we can simplify and \textit{remove the memory bank altogether}.
% $\mathcal{I}^{\text{NCE}}(X; \nu) = \mathbb{E}_{p(x_{1:K})}\mathbb{E}_{a_{1:N},b,b^\prime\sim p(a)}\left[\log \frac{e^{f_{\theta}(\nu(x_1,b), \nu(x_1, b^\prime))}}{\frac{1}{K}\sum_{j=1}^K e^{f_{\theta}(\nu(x_1,b), \nu(x_j, a_{j})) }} \right]$.
% \ndg{if space needed could remove this eqn, since it follows from the above more general one...}
We compare these formulations experimentally by nearest neighbor classification. Fig.~\ref{fig:discussion}e-g show results varying $\alpha$ from 0 (no memory bank) to near 1 (a very slowly updated bank).
% As $\alpha$ approaches 1, we are effectively storing all possible views in the memory bank in which case the weighted combination will appear like noise.
We find performance when $\alpha=0$ and when $\alpha=0.5$ (the standard approach) is  equal across algorithms and datasets.
This suggests that we can replace $M$ with two random views every iteration.
SimCLR \cite{chen2020simple} has suggested a similar finding -- and the MI framework makes such a simplification natural.
With this formulation, we show SimCLR is equivalent to InfoNCE as well (see lemma~\ref{lem:simclr}).
In the experiments below we keep the memory bank to make comparison to previous work easier.

\section{Equivalence of Local Aggregation and Mutual Information}
\label{sec:lavince}
Having shown an equivalence between IR, CMC, SimCLR, and InfoNCE, we might wish to do the same for LA.
However, the close and background neighbor sets of LA are not obviously related to MI.
To uncover the relation, we generalize InfoNCE to support sampling from a variational distribution.
% Along the way, we will propose a few new contrastive learning algorithms that ``interpolate'' between IR and LA.

\subsection{A Variational Lower Bound on Mutual Information}
\label{sec:vince}
Recall that InfoNCE draws negative samples independently from $p(y)$. However,
% Previous papers \cite{poole2019variational} have found that InfoNCE requires a large number of negative samples to reduce the variance of its estimate.
we may wish to choose negative samples from a different distribution over sets $q(y_{2:K})$ or even conditionally sample negatives, $q(y_{2:K}|y_1)$ where $y_1 \sim p(y)$.
While previous literature presents InfoNCE with an arbitrary distribution \cite{oord2018representation,kong2019mutual} that would justify either of these choices, we have not found a proof supporting this.
% Other derivations \cite{poole2019variational} make stronger assumptions on the form of $q(\mathcal{B})$.
One of our contributions is to formally define a class of variational distributions $q(y_{2:K}|y_1)$ such that Eq.~\ref{eqn:infonce} remains a valid lower bound if we replace $p(y_{2:K})$ with $q(y_{2:K}|y_1)$:
% , which will enable us to show that InfoNCE with a variational distribution for negative sampling lower bounds InfoNCE with negatives drawn from the marginal. Transitively, this implies it lower bounds MI.
% \begin{thm}
% Fix a distribution $\mathbb{P}$ over $(\mathbb{R}^d, \mathcal{B}_{\mathbb{R}^d})$. Fix any $x^* \in \mathbb{R}^d$ and any $f: \mathbb{R}^d \times \mathbb{R}^d \rightarrow \mathbb{R}$. Define $g(x) = e^{f(x^*, x)}$  and suppose that $g$ is $\mathbb{P}$-integrable with mean $c$. Picking two thresholds $\gamma \geq \tau > \log c$ in $\overline{\mathbb{R}}$, let $S_{\tau,\gamma} = \{x \, | \, \tau \leq f(x^*, x) \leq \gamma\}.$
% Suppose that $\mathbb{P}(S_{\tau,\gamma})> 0$, and define $\mathbb{Q}_{\tau,\gamma} (A) = \mathbb{P}(A | S_{\tau,\gamma})$ for any Borel $A$. Then $\mathbb{E}_\mathbb{P}[g(x)] < \mathbb{E}_{\mathbb{Q}_{\tau,\gamma}}[g(x)]$.
% \label{thm:vince}
% \end{thm}

\begin{thm}
Fix a distribution $\mathbb{P}$ over $(\mathbb{R}^d, \mathcal{B}_{\mathbb{R}^d})$. Fix any $x^* \in \mathbb{R}^d$ and any $f: \mathbb{R}^d \times \mathbb{R}^d \rightarrow \mathbb{R}$. Define $g(x) = e^{f(x^*, x)}$  and suppose that $g$ is $\mathbb{P}$-integrable with mean $c$. Pick a set $T\subset \mathbb{R}$ lower-bounded by some $\tau > \log c$ in $\overline{\mathbb{R}}$ and define $S_{T} = \{x | f(x^*,x) \in T\}$ to be the pre-image under $f(x^*,\cdot)$ of $T$; suppose that $\mathbb{P}(S_{T})> 0$. Define $\mathbb{Q}_{T} (A) = \mathbb{P}(A | S_{T})$ for any Borel $A$. Then $\mathbb{E}_\mathbb{P}[g(x)] < \mathbb{E}_{\mathbb{Q}_{T}}[g(x)]$.
\label{thm:vince}
\end{thm}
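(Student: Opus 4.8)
The plan is to show that conditioning on the event $S_T$ strictly increases the $\mathbb{P}$-expectation of $g$, by exploiting that $S_T$ is precisely the set where $g$ is large. First I would record the key structural fact: since $g(x) = e^{f(x^*,x)}$ is a strictly increasing function of $f(x^*,x)$, and $T$ is lower-bounded by $\tau$, on $S_T$ we have $g(x) = e^{f(x^*,x)} \ge e^{\tau} > e^{\log c} = c$, while on the complement $S_T^c$ we have only the trivial bound $g(x) \ge 0$. So $S_T$ isolates a region where $g$ strictly exceeds its global mean $c$.

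Next I would split the mean $c = \mathbb{E}_\mathbb{P}[g]$ according to $S_T$ and its complement. Write $p = \mathbb{P}(S_T) \in (0,1]$ (if $p = 1$ then $\mathbb{Q}_T = \mathbb{P}$ and one must argue more carefully, but in that case $g \ge e^\tau > c$ a.s.\ forces $c \ge e^\tau > c$, a contradiction, so in fact $p < 1$). Then
\begin{equation*}
c = \mathbb{E}_\mathbb{P}[g \mathbf{1}_{S_T}] + \mathbb{E}_\mathbb{P}[g \mathbf{1}_{S_T^c}] = p \cdot \mathbb{E}_{\mathbb{Q}_T}[g] + \mathbb{E}_\mathbb{P}[g \mathbf{1}_{S_T^c}],
\end{equation*}
using that $\mathbb{Q}_T(A) = \mathbb{P}(A \cap S_T)/p$ so $\mathbb{E}_{\mathbb{Q}_T}[g] = \mathbb{E}_\mathbb{P}[g\mathbf{1}_{S_T}]/p$. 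Rearranging gives $\mathbb{E}_{\mathbb{Q}_T}[g] = \bigl(c - \mathbb{E}_\mathbb{P}[g\mathbf{1}_{S_T^c}]\bigr)/p$. I want to show this exceeds $c$, i.e.\ that $c - \mathbb{E}_\mathbb{P}[g\mathbf{1}_{S_T^c}] > pc$, equivalently $(1-p)c > \mathbb{E}_\mathbb{P}[g\mathbf{1}_{S_T^c}]$.

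To finish, bound the right-hand side: on $S_T^c$ we have $f(x^*,x) \notin T$, and since $T$ is lower-bounded by $\tau$, the complement of $T$ within the range of values... actually the cleaner route is the following. On $S_T$, $g \ge e^\tau > c$, hence $\mathbb{E}_\mathbb{P}[g\mathbf{1}_{S_T}] \ge e^\tau p > cp$. Therefore $\mathbb{E}_\mathbb{P}[g\mathbf{1}_{S_T^c}] = c - \mathbb{E}_\mathbb{P}[g\mathbf{1}_{S_T}] < c - cp = (1-p)c$, which is exactly the inequality needed. Substituting back, $\mathbb{E}_{\mathbb{Q}_T}[g] = \bigl(c - \mathbb{E}_\mathbb{P}[g\mathbf{1}_{S_T^c}]\bigr)/p > (c - (1-p)c)/p = c = \mathbb{E}_\mathbb{P}[g]$, completing the proof.

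The main obstacle is bookkeeping around edge cases rather than any deep step: one must confirm $\mathbb{Q}_T$ is a well-defined probability measure (guaranteed by $\mathbb{P}(S_T) > 0$), handle the possibility $\tau = \infty$ or $c = 0$ (if $c = 0$ then $g = 0$ $\mathbb{P}$-a.s., but then $\mathbb{P}(S_T) > 0$ with $g \ge e^\tau > 0$ on $S_T$ is contradictory, so $c > 0$), and verify measurability of $S_T$ — which holds because $T$ being an interval-like subset of $\overline{\mathbb{R}}$ bounded below makes it Borel, and $f(x^*,\cdot)$ is assumed measurable (implicitly, via $\mathbb{P}$-integrability of $g$). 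The strictness of the final inequality comes precisely from the strict gap $\tau > \log c$, i.e.\ $e^\tau > c$, combined with $p > 0$.
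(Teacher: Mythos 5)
Your proof is correct and rests on the same key observation as the paper's: on $S_T$ one has $g \ge e^\tau > c$, so conditioning on $S_T$ must push the expectation above $c$. The only difference is cosmetic: the paper applies this directly by writing $\mathbb{E}_{\mathbb{Q}_T}[g] = \mathbb{E}_{\mathbb{Q}_T}[\mathbf{1}_{S_T} g] \ge e^\tau > c$, whereas you detour through the decomposition $c = p\,\mathbb{E}_{\mathbb{Q}_T}[g] + \mathbb{E}_\mathbb{P}[g\mathbf{1}_{S_T^c}]$ and a rearrangement --- in fact your own intermediate inequality $\mathbb{E}_\mathbb{P}[g\mathbf{1}_{S_T}] \ge e^\tau p$ already gives $\mathbb{E}_{\mathbb{Q}_T}[g] \ge e^\tau$ upon dividing by $p$, so the remaining steps and the $p<1$ case analysis are superfluous.
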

% We highlight two special cases of Thm.~\ref{thm:vince} when $T = [\tau, \infty)$ and $T = [\tau, \gamma]$ for some $\gamma > \tau$.
Next, we define the variational InfoNCE, or VINCE, estimator and  use Thm.~\ref{thm:vince} to show that it lower-bounds
InfoNCE, and thus mutual information.
\begin{coro}
 Fix $X$ and $Y_1,...,Y_K$, the latter i.i.d. according to a distribution $p$. Suppose we sample $x, y_1 \sim p(x, y_1)$. For $f: (Y_i, Y_j)\rightarrow \R$ define $S_T = \{ y | \tau \leq f(y_1,y) \in T \}$ with $\tau > \log \mathbb{E}_{p(y)}[e^{f(y_1,y)}]$ and $T$ lower bounded by $\tau$. For any Borel $A = A_2 \times....\times A_K$, define a  distribution over $Y_{2:K}= (Y_2,...,Y_K)$ by $q_T(Y_{2:K} \in A) = \prod_{j=2}^K p(A_j | S_T)$.
 If we let $\mathcal{I}^{\text{VINCE}}(X; Y_1) =  \mathbb{E}_{p(x,y_1)}\mathbb{E}_{y_{2:K}\sim q_T}\left[\log \frac{e^{f(x, y_1)}}{\frac{1}{K}\sum_{j=1}^K e^{f(x, y_j)}}\right]$, then $ \mathcal{I}^{\text{VINCE}} \leq \mathcal{I}^{\text{NCE}}$.
%  In the special case $T = [\tau,\gamma]$, we write $q_{\tau,\gamma}(Y_{2:K} \in A) = \prod_{j=2}^K p(A_j|S_{\tau,\gamma})$. The same conclusion holds.
  \label{coro:vince}
\end{coro}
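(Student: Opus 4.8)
The plan is to strip off the positive term and then apply Theorem~\ref{thm:vince} to the negative samples. Since $\log\frac{e^{f(x,y_1)}}{\frac1K\sum_{j=1}^K e^{f(x,y_j)}}=f(x,y_1)-\log\frac1K\sum_{j=1}^K e^{f(x,y_j)}$ and the positive pair $(x,y_1)$ is drawn from the same $p(x,y_1)$ in both estimators, the term $\mathbb{E}_{p(x,y_1)}[f(x,y_1)]$ cancels in $\mathcal{I}^{\text{NCE}}-\mathcal{I}^{\text{VINCE}}$. Hence it suffices to prove, for every fixed realization $(x,y_1)$, that
\[ \mathbb{E}_{y_{2:K}\sim q_T}\Bigl[\log\tfrac1K{\textstyle\sum_{j=1}^K}e^{f(x,y_j)}\Bigr]\ \ge\ \mathbb{E}_{y_{2:K}\sim p}\Bigl[\log\tfrac1K{\textstyle\sum_{j=1}^K}e^{f(x,y_j)}\Bigr], \]
i.e.\ that restricting the negatives to $S_T$ only enlarges the expected log-normalizer; averaging over $p(x,y_1)$ then gives $\mathcal{I}^{\text{VINCE}}\le\mathcal{I}^{\text{NCE}}$, and chaining with Eq.~\ref{eqn:infonce} gives $\mathcal{I}^{\text{VINCE}}\le\text{MI}$.

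Fix $(x,y_1)$, so that $S_T$ is a fixed Borel set with $p(S_T)>0$; under $q_T$ the $y_2,\dots,y_K$ are i.i.d.\ $p(\cdot\mid S_T)$, under NCE they are i.i.d.\ $p$, and the log-normalizer is nondecreasing in each $f(x,y_j)$. The key is the conditioning mechanism of Theorem~\ref{thm:vince}: because $S_T$ sits inside the region where the relevant witness exceeds $\tau>\log\mathbb{E}_p[e^{f(x,Y)}]$, conditioning on $S_T$ stochastically increases that witness — indeed the pushforward law of $f(x,Y)$ under $p(\cdot\mid S_T)$ dominates the one under $p$ (the conditional tail at any threshold $<\tau$ is $1$, and at any threshold $\ge\tau$ it equals the unconditional tail divided by $p(S_T)\le1$). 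Since the log-normalizer sees $(y_2,\dots,y_K)$ only through $(f(x,y_2),\dots,f(x,y_K))$, I would couple the two product laws coordinatewise by independent quantile couplings so the conditioned draws dominate the unconditioned ones pointwise; monotonicity then yields the displayed inequality after taking expectations. This is exactly the structure in which Theorem~\ref{thm:vince} is meant to be used, so I would organize the write-up around that theorem.

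I expect the main obstacle to be keeping the arguments of $f$ consistent. Theorem~\ref{thm:vince} compares the conditional and unconditional means of one fixed function $e^{f(x^*,\cdot)}$, and the stochastic-dominance step above needs the set $S_T$ we condition on to be driven by the same witness $f(x,\cdot)$ that appears in the normalizer; with $S_T=\{y:\tau\le f(y_1,y)\}$ written via the positive $y_1$, one must additionally use that $x$ and $y_1$ are views of the same image so $f(x,\cdot)$ and $f(y_1,\cdot)$ rank the negatives compatibly (or else state the result in that regime). The rest is routine bookkeeping inherited from Theorem~\ref{thm:vince}: $g=e^{f(x,\cdot)}$ must be $p$-integrable with finite mean $c$ and finite $\log c$ (immediate for the bilinear witness $f_{\theta,\phi}(x,y)=g_\theta(x)^T g_\phi(y)$ with bounded or $L^2$-normalized embeddings), $\tau$ must be finite in $\overline{\mathbb{R}}$, and $p(S_T)>0$ must hold, all of which can be checked before running the coupling argument.
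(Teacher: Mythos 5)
Your route is genuinely different from the paper's, and your diagnosis that stochastic dominance (not merely mean dominance) is needed to pass the inequality through $\log$ is sharper than the paper's own reasoning. The paper applies Theorem~\ref{thm:vince} to each summand to get $\mathbb{E}_{p}[e^{f(x,y_j)}]\le\mathbb{E}_{q_T}[e^{f(x,y_j)}]$, invokes linearity for the sum, and then asserts that ``for any monotone increasing $h$'' the inequality survives under $\mathbb{E}[h(\cdot)]$. That step is a non sequitur as stated: $\mathbb{E}[A]\le\mathbb{E}[B]$ does not imply $\mathbb{E}[h(A)]\le\mathbb{E}[h(B)]$ for monotone $h$. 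Your decomposition (cancel the positive term, then compare normalizers by a coordinate-wise coupling that dominates pointwise) is the correct way to make such a step rigorous when first-order stochastic dominance is actually available, and your flagging of the argument mismatch between $f(y_1,\cdot)$ (which defines $S_T$) and $f(x,\cdot)$ (which appears in the normalizer) is a real issue that the paper's invocation of Theorem~\ref{thm:vince} silently ignores.

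However, there is a gap in your dominance claim. Your tail formula --- that at any threshold $\ge\tau$ the conditional tail equals the unconditional tail divided by $p(S_T)$ --- is correct only for $T=[\tau,\infty)$. For a set $T$ that is bounded above, e.g.\ $T=[\tau,\gamma]$ (the RING case the paper explicitly needs), the conditional tail of $f(y_1,Y)$ under $q_T$ vanishes above $\gamma$ while the unconditional tail need not, so first-order stochastic dominance fails and the quantile coupling you propose cannot yield a pointwise dominating draw. As written, your argument covers BALL but not RING or CAVE, even though the corollary is stated for every $T$ lower-bounded by $\tau$. The paper's proof shares this blind spot (the per-summand mean inequality from Theorem~\ref{thm:vince} does hold for all such $T$, but the subsequent monotone-transform step is unjustified for any $T$). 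To close the gap you would need either to restrict the corollary to upper rays $T=[\tau,\infty)$, or to argue via a weaker stochastic order adapted to the specific increasing, concave-in-each-$e^{f(x,y_j)}$ form of $\log\frac{1}{K}\sum_j e^{f(x,y_j)}$, rather than the generic first-order coupling.
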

% \ndg{carefully double check above statements... some typos. eg $Y_1$ shouldn't be indep of $X$?}
With Coro.~\ref{coro:vince}, we are equipped to formalize the connection between LA and information.

\subsection{Equivalence of LA and VINCE}
\label{sec:lavince}

Focusing first on the background neighborhood, consider sampling from $q_{[\tau,\infty)}(x|\nu(x_1,a))$ as defined in Thm~\ref{thm:vince} with $x_1$ being the current image and $a$ the current index in $\mathcal{A}$.
A larger threshold $\tau$ chooses negatives that more closely resemble $g_\theta(\nu(x_1,a))$, the representation of the current view of $x_1$.
% \ndg{it's actual a view ($y$) of the image ($x$), isn't it?}
This poses a more challenging problem: encoders must now distinguish between more similar objects, forcing the representation to be more semantically meaningful.
Replacing $p(x_1)$ with $q_{[\tau,\infty)}(x|\nu(x_1,a))$ immediately suggests a new algorithm that ``interpolates'' between IR and LA.

\begin{lem}
Let $X_1$ be a random variable and $X_{2:K}$ be i.i.d. copies. Let  $\mathcal{D}$ be a dataset of realizations of $X_1$ sampled from an empirical distribution $p_{\mathcal{D}}$. Fix a realization $x_1 \in \mathcal{D}$, an index $a \in \mathcal{A}$, and a view function $\nu$. Assuming $\tau > \log \mathbb{E}_{x \sim p(x)}\left[ e^{f_\theta(\nu(x_1,a), x)} \right]$, we define  $q_T(x_{2:K}|\nu(x_1,a)) = \prod_{j=2}^K q_T(x_j|\nu(x_1,a))$ for any set $T$ lower bounded by $\tau$, from which we draw  realizations of $X_{2:K}$. Assume $x_j \in \mathcal{D}$ for $j\in [K]$ and let $\rho(x_j)$ return the index of $x_j$ in $\mathcal{D}$.
For any $T$, we define $\scriptstyle\mathcal{L}^T(x_1, M) = \mathbb{E}_{p(a)}\mathbb{E}_{q_T(x_{2:K}|\nu(x_1, a))}\left[\log\frac{e^{g_\theta(\nu(x_1, a))^T M[\rho(x_1)]/\omega}}{\sum_{j=1}^K  e^{g_\theta(\nu(x_1, a))^T M[\rho(x_j)]/\omega}}\right]$. Then  $\hat{\mathcal{I}}^{\textup{VINCE}}(X_1; \nu^*) \equiv \mathbb{E}_{p_\mathcal{D}(x)}[\mathcal{L}^T(x, M)] + \log K$. In particular, we call $\mathcal{L}^{\textup{BALL}}(x_1, M) = \mathcal{L}^{[\tau, \infty)}(x_1, M)$, Ball Discrimination (BALL).
% and \textit{Annulus Discrimination} (ANN) as $\mathcal{L}^{\text{ANN}}(x_i, M) = \mathbb{E}_{p(a)}\left[\log\frac{p(i|x_i, M)}{p(B_i \setminus C_i|x_i, M)}\right]$.
% Then, BALL and ANN both lower bound MI between weighted view sets. That is,
% $\mathbb{E}_{p(x_i))}[\mathcal{L}^{\textup{BALL}}(x_i, M) + \log |B_i|]$ and $ \mathbb{E}_{p(x_i))}[\mathcal{L}^{\textup{ANN}}(x_i, M) + \log |B_i \setminus C_i|]$ are equivalent to VINCE (with two different variational distributions).
% In the former, we draw elements of $B_i$ from a variational distribution $q_\tau(x|x_i)$. In the latter, we sample elements of $B_i \setminus C_i$ sampled from different $q_{\tau,\gamma}(x|x_i)$ with finite $\gamma$.
 \label{lem:coro_ballann}
\end{lem}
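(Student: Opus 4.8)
The plan is to unfold the empirical VINCE estimator $\hat{\mathcal{I}}^{\textup{VINCE}}(X_1;\nu^*)$ from Coro.~\ref{coro:vince} and match it term-by-term to $\mathbb{E}_{p_\mathcal{D}(x)}[\mathcal{L}^T(x,M)]$, in the same spirit as the proof of Lemma~\ref{lem:ir_mi}. First I would instantiate the generic VINCE formula with the view-set random variables: the ``positive'' random variable is the current single view $\nu^*(X_1,\{a\})$ and its paired copy is the stored representation $\nu^*(X_1,A^{(1)})$ with the memory-bank weights $w_1=\{1\}$, $w_2=\{1,\alpha,\ldots,\alpha^{n_\alpha-1}\}$, exactly as in the setup preceding Lemma~\ref{lem:ir_mi}; the negatives $y_{2:K}$ are drawn from the product variational distribution $q_T(\cdot\mid \nu(x_1,a))$. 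Under the extended-encoder identity $g^*_\theta(\nu^*(x^{(i)},A^{(i)}),w_2)=\sum_{m}\alpha^{m-1}g_\theta(\nu(x^{(i)},a_m))$, which is the approximation already established for $M[i]$, the extended witness function $f^*_\theta$ collapses to the weighted dot product $g_\theta(\nu(x_1,a))^T M[\rho(x_1)]/\omega$, and the negative-sample witness terms become $g_\theta(\nu(x_1,a))^T M[\rho(x_j)]/\omega$ once we note that each negative $x_j\in\mathcal{D}$ has its stored bank entry indexed by $\rho(x_j)$.

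The second step is bookkeeping on the normalizing constant. The empirical VINCE objective contains $\log\frac{1}{K}\sum_{j=1}^K e^{f^*_\theta(\cdots)}$ in the denominator, whereas $\mathcal{L}^T$ is written with $\sum_{j=1}^K e^{g_\theta(\nu(x_1,a))^T M[\rho(x_j)]/\omega}$ and no $\frac1K$; the difference is exactly the additive $\log K$ appearing on the right-hand side of the claimed equivalence. I would also check that the $j=1$ term in the VINCE denominator is the positive term $e^{f(x,y_1)}$ paired consistently with the numerator, matching the convention in $\mathcal{L}^T$ where the $j=1$ summand is $e^{g_\theta(\nu(x_1,a))^T M[\rho(x_1)]/\omega}$. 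Finally, I would take $\mathbb{E}_{p_\mathcal{D}(x)}$ on both sides and identify the outer expectation $\mathbb{E}_{p(x,y_1)}$ in VINCE with $\mathbb{E}_{p_\mathcal{D}(x_1)}\mathbb{E}_{p(a)}$, using that in the empirical setting $p_\mathcal{D}(x,y_1)$ factors as ``pick an image, pick an augmentation index.'' Specializing $T=[\tau,\infty)$ then yields the $\mathcal{L}^{\textup{BALL}}$ statement, and the hypothesis $\tau>\log\mathbb{E}_{x\sim p(x)}[e^{f_\theta(\nu(x_1,a),x)}]$ is precisely what Thm.~\ref{thm:vince} and Coro.~\ref{coro:vince} require for $q_T$ to be well-defined and for the bound $\mathcal{I}^{\textup{VINCE}}\le\mathcal{I}^{\textup{NCE}}$ to hold.

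The main obstacle I anticipate is the memory-bank approximation step: strictly speaking $M[\rho(x_1)]$ is the exponentially-weighted sum over \emph{all} past epochs, while the extended view set $A^{(1)}$ captures only the last $n_\alpha$ of them, so the equivalence ``$\equiv$'' is really an equality up to the truncation error controlled by $\alpha^{n_\alpha}$. I would handle this exactly as the excerpt does for Lemma~\ref{lem:ir_mi} — treating $n_\alpha$ as the $\alpha$-dependent horizon beyond which contributions are negligible, so that the tight approximation is folded into the symbol $\equiv$ — rather than proving a quantitative bound. A secondary subtlety is ensuring the variational distribution is applied to the \emph{negatives} $x_{2:K}$ and not to $x_1$ (which is still drawn from $p_\mathcal{D}$), and that $\rho$ is well-defined on the support of $q_T$, i.e. that $q_T$ is supported on $\mathcal{D}$; this follows because $q_T(\cdot\mid\nu(x_1,a))$ is $p$ conditioned on the Borel set $S_T$, and $p=p_\mathcal{D}$ is supported on $\mathcal{D}$, so conditioning only removes mass. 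Once these points are in place, the remaining manipulations are the routine algebraic rearrangements already rehearsed in Sec.~\ref{sec:prooflemir_mi}.
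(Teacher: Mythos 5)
Your proposal is correct and follows essentially the same route as the paper: expand the memory bank entry $M[\rho(x_1)]$ via the extended encoder $g^*_\theta(\nu^*(x_1,A^{(\rho(x_1))}),w_2)$, collapse the extended witness $f^*_\theta$ to the weighted dot product, absorb the $\tfrac{1}{K}$ into the $+\log K$ offset, and identify the outer expectations, concluding with the specialization $T=[\tau,\infty)$. Your explicit caveats — that the ``$\equiv$'' absorbs the $\alpha^{n_\alpha}$ truncation of the memory bank, that $q_T$ restricts only the negatives and is supported on $\mathcal{D}$ so $\rho$ remains well-defined — make the argument slightly more careful than the paper's own write-up but do not change its substance.
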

% \ndg{i think we need to say that $B_i$ is a ball in embedding space for this to be true? or at least to be equiv to VINCE. and similarly for $C_i$.}
% \ndg{introduce ball, ring, cave here. btw is cave even covered by our theorem??}

The primary distinction of BALL from IR is that negative samples are drawn from a restricted domain. Thus, we cannot equate the BALL estimator to InfoNCE; we must rely on VINCE, which provides the machinery to use a conditional distribution with smaller support.
% While the BALL and ANN objectives are both equivalent to VINCE, only the latter has a finite threshold $\gamma$ used to define the close neighbor set $C_i$.
% Further, while ANN and LA both use a close neighbor set, they differ in that LA pulls the representations of elements in $C_i$ closer to that of $x_i$ whereas ANN does not.
We will use BALL to show that LA lower bounds MI between weighted view sets.
% While BALL and LA both sample negatives from the same distribution, only the latter pulls the representation of a subset of the negatives closer.
To do so, consider a simplified version of LA where we assume that the close neighbor set contains only the current image. That is, $C^{(\rho(x_1))} = \{\rho(x_1)\}$ in the notation of Sec.~\ref{sec:background}. We call this LA$_0$. We show that LA$_0$ is equivalent to BALL.
\begin{lem}
Fix $x \sim p_{\mathcal{D}}(x)$ and $C^{(\rho(x))} = \{\rho(x)\}$. Then $\mathcal{L}^{\textup{LA}_0}(x, M) = \mathcal{L}^{\textup{BALL}}(x, M)$.
  \label{lem:la_ml}
\end{lem}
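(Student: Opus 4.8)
The plan is to unwind both sides of the claimed identity to their definitions and observe that they reduce to the same expression once we impose the simplifying assumption $C^{(\rho(x))} = \{\rho(x)\}$. First I would write out $\mathcal{L}^{\textup{LA}_0}(x,M)$ by substituting $C^{(\rho(x))} = \{\rho(x)\}$ into Eq.~\ref{eqn:la}. Since $C^{(\rho(x))} \subseteq B^{(\rho(x))}$ always, we have $C^{(\rho(x))} \cap B^{(\rho(x))} = \{\rho(x)\}$, so the numerator $p(C^{(\rho(x))} \cap B^{(\rho(x))} \mid x, M)$ collapses to the single term $p(\rho(x) \mid x, M)$, and the objective becomes $\log \frac{p(\rho(x)\mid x,M)}{p(B^{(\rho(x))}\mid x,M)} = \log \frac{p(\rho(x)\mid x,M)}{\sum_{k \in B^{(\rho(x))}} p(k\mid x,M)}$. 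Expanding each $p(k\mid x, M) = e^{g_\theta(x)^T M[k]/\omega} / Z$ with the common normalizer $Z = \sum_{j=1}^N e^{g_\theta(x)^T M[j]/\omega}$, the $Z$'s cancel, leaving $\log \frac{e^{g_\theta(x)^T M[\rho(x)]/\omega}}{\sum_{k\in B^{(\rho(x))}} e^{g_\theta(x)^T M[k]/\omega}}$.

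Next I would write out $\mathcal{L}^{\textup{BALL}}(x,M) = \mathcal{L}^{[\tau,\infty)}(x,M)$ from Lemma~\ref{lem:coro_ballann}. By that lemma with $T = [\tau,\infty)$, this is $\mathbb{E}_{p(a)}\mathbb{E}_{q_{[\tau,\infty)}(x_{2:K}\mid \nu(x_1,a))}\bigl[\log \frac{e^{g_\theta(\nu(x_1,a))^T M[\rho(x_1)]/\omega}}{\sum_{j=1}^K e^{g_\theta(\nu(x_1,a))^T M[\rho(x_j)]/\omega}}\bigr]$, where the negatives $x_2,\dots,x_K$ are drawn from the restricted set $S_{[\tau,\infty)} = \{x : f_\theta(\nu(x_1,a),x) \geq \tau\}$, i.e.\ exactly those $x \in \mathcal{D}$ whose stored representation $M[\rho(x)]$ has dot-product with $g_\theta(\nu(x_1,a))$ at least $\omega\tau$ (here I am using $f_\theta(\nu(x_1,a),x) = g_\theta(\nu(x_1,a))^T M[\rho(x)]/\omega$, the witness function of IR as identified in Lemma~\ref{lem:ir_mi}). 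The key observation is that this restricted support, together with the current point $x_1$ itself as the $j=1$ term, is precisely the \emph{background neighbor set} $B^{(\rho(x_1))}$ of LA: LA defines $B^{(i)}$ as the $K$ nearest examples to $M[i]$ in embedding space, which is the same as thresholding the dot-product (or equivalently the cosine similarity for $L^2$-normalized encoders) at the appropriate $\tau$. Choosing $\tau$ so that $|S_{[\tau,\infty)}| = K-1$ makes the denominators coincide term-by-term.

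With both sides reduced, the remaining step is bookkeeping: in $\mathcal{L}^{\textup{LA}_0}$ there is no expectation over the augmentation index $a$ written explicitly because Eq.~\ref{eqn:la} was stated in the un-augmented notation, but under the augmented restatement (Eq.~\ref{eqn:ir_new}) the LA objective likewise carries an $\mathbb{E}_{p(a)}$, and the $x_1$ appearing in the numerator of BALL gets replaced by $\nu(x_1,a)$ consistently on both sides; matching $\rho(x_1)$ with the index $i$ and the set $\{\rho(x_2),\dots,\rho(x_K)\}$ with $B^{(i)}\setminus\{i\}$ finishes the identification. So the structure is: (i) collapse the LA numerator using $C \cap B = C = \{\rho(x)\}$; (ii) cancel the partition function $Z$; (iii) recognize $B^{(\rho(x))}$ as the thresholded set $S_{[\tau,\infty)} \cup \{\rho(x)\}$ for the matching $\tau$; (iv) read off that the two logarithms are identical, then take $\mathbb{E}_{p(a)}$ of both.

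The main obstacle is step (iii): carefully checking that LA's combinatorial definition of $B^{(i)}$ (the top-$K$ nearest neighbors) really does coincide with VINCE's analytic restriction set $S_T = \{x : f_\theta(\nu(x_1,a),x) \geq \tau\}$, including getting the cardinality right ($K$ vs.\ $K-1$, and whether $\rho(x_1)$ is counted among its own neighbors) and confirming that the hypothesis $\tau > \log \mathbb{E}_{x\sim p(x)}[e^{f_\theta(\nu(x_1,a),x)}]$ from Lemma~\ref{lem:coro_ballann} is compatible with the $\tau$ that yields exactly $K-1$ neighbors — in the empirical/finite setting this is a mild condition but it should be stated. Everything else is the routine cancellation of the shared softmax normalizer and relabeling indices via $\rho$.
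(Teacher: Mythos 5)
Your proposal takes essentially the same route as the paper's proof: substitute $C\cap B=\{\rho(x)\}$, cancel the shared partition function $Z$, and identify the background neighbor set $B^{(\rho(x))}$ with the VINCE restricted sampling set so that the $\log$ expressions match term-by-term. The one place you diverge is in how that final identification is justified: the paper treats $B^{(i)}$ as built by \emph{uniform sampling} from the $b$ closest examples and argues that, because $\mathcal{D}$ is i.i.d.\ from $p_{\mathcal{D}}$, this uniform sampling is distributionally equivalent to drawing i.i.d.\ from the truncated marginal $q_{[\tau,\infty)}$ --- which directly produces the $\mathbb{E}_{q_{[\tau,\infty)}}$ appearing in $\mathcal{L}^{\textup{BALL}}$. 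You instead read $B^{(i)}$ as a deterministic top-$K$ set (as Sec.~\ref{sec:background} describes it) and match it to $S_{[\tau,\infty)}$ by choosing $\tau$ to admit exactly $K-1$ points; this requires you to quietly collapse BALL's expectation over $q_T$ to a single fixed draw, a step the paper's distributional reading avoids. The caveat you flag about whether the $\tau$ that yields $|S_{[\tau,\infty)}|=K-1$ also satisfies $\tau>\log\mathbb{E}_p[e^{f}]$ is a genuine loose end that the paper's proof also leaves implicit.
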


Now, the numerator of the LA$_0$ objective contains a term  $e^{g_\theta(\nu(x^{(i)}, a))^T M[i]}$ whereas the numerator of the more general LA is $\sum_{k \in C^{(i)}} e^{g_\theta(\nu(x^{(i)}, a))^T M[k]}$. Note that the dot product between embeddings of any two views of $x^{(i)}$ must be at least as large as the dot product between embeddings of a view of $x^{(i)}$ and a view of $x^{(j)}$, with equality if and only if views collide. Since $|C^{(i)}|e^{g_\theta(\nu(x^{(i)}, a))^T M[i]} \geq \sum_{k \in C^{(i)}} e^{g_\theta(\nu(x^{(i)}, a))^T M[k]}$, we can bound LA by LA$_0$ subject to an additive constant.

% But note that the dot product between embeddings of any two views of $x$ must be at least as large as the dot product between embeddings of a view of $x$ and a view of another image, with equality if and only if views collide.
% In other words, we can bound LA by LA$_0$ subject to a constant.
% Because the view set increases \ndg{?} we can bound LA by LA$_0$.
% The next Lemma formally bounds LA by LA$_0$.
\begin{lem} Fix $x \sim p_{\mathcal{D}}(x)$ and $\rho(x) \in C^{(\rho(x))}$. Then $\mathcal{L}^{\textup{LA}_0}(x, M) \geq \mathcal{L}^{\textup{LA}}(x, M) - \log |C^{(\rho(x))}|$.
\label{lem:la_la0}
\end{lem}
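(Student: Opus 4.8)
The plan is to cancel the common softmax normalizer in both ratios, reduce the claim to a single exponential-sum inequality over the close neighbor set, and then invoke the dot-product monotonicity remark stated just before the lemma.

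First I would unfold the two objectives. Write $i=\rho(x)$, and recall $p(I\mid x,M)=\sum_{k\in I}p(k\mid x,M)$ with $p(k\mid x,M)\propto e^{g_\theta(\nu(x,a))^{T}M[k]/\omega}$. In the ratio defining $\mathcal{L}^{\textup{LA}}$ the normalizing constant $\sum_{j=1}^{N}e^{g_\theta(\nu(x,a))^{T}M[j]/\omega}$ is identical in numerator and denominator and cancels, and likewise for $\mathcal{L}^{\textup{LA}_0}$. Using $C^{(i)}\subseteq B^{(i)}$ (so $C^{(i)}\cap B^{(i)}=C^{(i)}$) together with the hypothesis $\rho(x)\in C^{(\rho(x))}$ (hence $i\in C^{(i)}\subseteq B^{(i)}$), this gives, for a fixed augmentation index $a$,
\[
\mathcal{L}^{\textup{LA}}(x,M)=\log\frac{\sum_{k\in C^{(i)}}e^{g_\theta(\nu(x,a))^{T}M[k]/\omega}}{\sum_{k\in B^{(i)}}e^{g_\theta(\nu(x,a))^{T}M[k]/\omega}},\qquad \mathcal{L}^{\textup{LA}_0}(x,M)=\log\frac{e^{g_\theta(\nu(x,a))^{T}M[i]/\omega}}{\sum_{k\in B^{(i)}}e^{g_\theta(\nu(x,a))^{T}M[k]/\omega}},
\]
the outer $\mathbb{E}_{p(a)}$ suppressed since every step below is pointwise in $a$ and the expectation is monotone.

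Second I would invoke the monotonicity observation preceding the lemma: for any $k\in C^{(i)}$ the compatibility $g_\theta(\nu(x,a))^{T}M[k]$ is no larger than the self-compatibility $g_\theta(\nu(x,a))^{T}M[i]$ (a view of $x^{(i)}$ is at least as compatible with a view of $x^{(i)}$ as with a view of a different image, and $M[k]$ acts as a weighted view set of $x^{(k)}$). Exponentiating and summing over the $|C^{(i)}|$ elements of $C^{(i)}$ yields $\sum_{k\in C^{(i)}}e^{g_\theta(\nu(x,a))^{T}M[k]/\omega}\le |C^{(i)}|\,e^{g_\theta(\nu(x,a))^{T}M[i]/\omega}$. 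Dividing by the strictly positive denominator $\sum_{k\in B^{(i)}}e^{g_\theta(\nu(x,a))^{T}M[k]/\omega}$ and taking logarithms gives $\mathcal{L}^{\textup{LA}}(x,M)\le \log|C^{(i)}|+\mathcal{L}^{\textup{LA}_0}(x,M)$; rearranging and reinstating $\mathbb{E}_{p(a)}$ yields $\mathcal{L}^{\textup{LA}_0}(x,M)\ge \mathcal{L}^{\textup{LA}}(x,M)-\log|C^{(\rho(x))}|$.

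The only non-mechanical step is the second one, the per-term inequality $g_\theta(\nu(x,a))^{T}M[k]\le g_\theta(\nu(x,a))^{T}M[i]$ for $k\in C^{(i)}$; I would treat it as supplied by the remark that immediately precedes the lemma (it is the same within-image-versus-across-image comparison used there to pass from $|C^{(i)}|$ copies of the numerator to $\sum_{k\in C^{(i)}}$), noting it is consistent with an $L^{2}$-normalized encoder under which distinct views of one image embed closer to each other than to views of other images. The hypothesis $\rho(x)\in C^{(\rho(x))}$ is precisely what guarantees $i\in C^{(i)}$, so that $|C^{(i)}|\ge 1$ (making $\log|C^{(\rho(x))}|\ge 0$) and the self term $e^{g_\theta(\nu(x,a))^{T}M[i]/\omega}$ genuinely appears in the numerator of $\mathcal{L}^{\textup{LA}}$.
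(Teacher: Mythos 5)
Your proof is correct and takes essentially the same route as the paper's. The key step in both arguments is the per-term inequality $g_\theta(\nu(x,a))^T M[k] \le g_\theta(\nu(x,a))^T M[i]$ for $k \in C^{(i)}$, which is used to bound $\tfrac{1}{|C^{(i)}|}\sum_{k\in C^{(i)}} e^{g_\theta(\nu(x,a))^T M[k]/\omega}$ by the self-term $e^{g_\theta(\nu(x,a))^T M[i]/\omega}$; your version is marginally cleaner in that it cancels the global normalizer $Z$ explicitly and keeps $\mathbb{E}_{p(a)}$ outside the $\log$, arguing pointwise in $a$, whereas the paper carries $Z$ along and places $\mathbb{E}_{p(a)}$ inside.
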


As LA is bounded by BALL, it is bounded by VINCE and thus MI as well.

We end with an observation about LA.
The overall objective for BALL includes an expectation over view pairs.
An elegant way to understand ``close neighbors'' in LA is as an extension to this view distribution: treat the views of all elements from $C^{(i)}$ as views of $x^{(i)}$.
% With BALL, we defined a VINCE distribution $q_{[\tau,\infty)}(x|\nu(x_1, a))$ to sample negative examples. We can similarly define a second distribution $q_{T'}(x|\nu(x_1, a))$, where the set $T'$ is lower bounded by $\tau$, to sample ``close neighbors'' for $x_1$. Ideally, we would like to think about LA as constructing an extended view set for $x_1$. That is, it treats the views of all elements sampled from $q_{T'}$ as views of $x_1$.
% While close to the truth, this intuition is not quite right as
The actual LA objective makes one additional change, applying Jensen's inequality to move the sum over close neighbors inside the log, yielding a tighter bound on BALL:
\begin{align*}
    \scriptstyle
    % \mathcal{L}^{\textup{LA}}(x_1, M) & \scriptstyle=
    \mathbb{E}_{p(a)}\mathbb{E}_{q_{[\tau, \infty)}}\left[\log \frac{ \frac{1}{|C|}\sum_{l \in C } e^{g_\theta(\nu(x_1, a))^T M[l] }  }{ \sum_{j=1}^K e^{g_\theta(\nu(x_1, a))^T M[\rho(x_j)]} } \right] \scriptstyle \geq \mathbb{E}_{p(a)}\mathbb{E}_{q_{[\tau, \infty)}}\mathbb{E}_{q_{C}}\left[\sum_{l \in \{\rho({x'_l})\} } \log \frac{ e^{g_\theta(\nu(x_1, a))^T M[l] }  }{ \sum_{j=1}^K e^{g_\theta(\nu(x_1, a))^T M[\rho(x_j)]} } \right]
\end{align*}
where $x_{2:K} \sim q_{[\tau,\infty)}$, $q_C$ is uniform over close neighbors (to the current view), and $x'_{2:L} \sim q_{C}$. The left expression is $\mathcal{L}^{\textup{LA}}(x_1, M)$ and the right expression is the objective that extends the view set.

\subsection{Simplifying and Generalizing Local Aggregation}
As in Sec.~\ref{sec:mi_insights}, we can simplify LA after formulating it as mutual information. In particular, these insights lead to new variants of Ball Discrimination and Local Aggregation.

% \textbf{An easier alternative to KMeans clustering.}
% The purpose of KMeans in LA is to populate the close neighbor set $C_i$, which ideally contains examples that can be considered views of the current image. While KMeans works well, it requires additional infrastructure and compute. We consider a simpler alternative: define $C_i$ using a second ball centered around $x_i$ whose radius is smaller than the ball used to define $B_i$. See Fig.~\ref{fig:models}(c) for a diagram. In Table~\ref{tab:othertransfertask}, we add superscript $^{\text{ring}}$ when using this simpler close neighbor set and a superscript $^{\text{kmeans}}$ otherwise. Further, this parameterization  has a more intuitive analog when annealing: we initialize $C_i = \{i\}$, meaning $\gamma = \infty$, and slowly increase its size throughout training i.e. increase $\gamma$ to approach $\tau$.
% We use the combined superscript $^{\text{ring+anneal}}$ to designate this.

\textbf{Ring and Cave Discrimination.}
We consider two extensions of BALL, where the negative samples are picked from a ball centered around $g_\theta(\nu(x_1,a))$ for some $a$. Like LA, we might fear that the elements very close to $x_1$ in the ball would serve as poor negative examples. However, LA treats these elements as both close neighbors and negatives, which seems contradictory. Instead, we propose two ways to cut out a region in the ball of negative samples. First, take a smaller ball with the same center. Define Ring Discrimination (RING) as $\mathcal{L}^{\textup{RING}}(x_1, M) = \mathbb{E}_{p(a)}\mathbb{E}_{q_{[\tau,\gamma]}(x_{2:K}|\nu(x_1,a))}\left[\log \frac{e^{g_\theta(\nu(x_1, a))^T M[\rho(x_1)]/\omega}}{\sum_{j=1}^K e^{g_\theta(\nu(x_1, a))^T M[\rho(x_j)]/\omega}}\right]$. That is, RING samples negatives from a distribution $q_{[\tau,\gamma]}$ over a ring formed by the difference between the two balls.

Next, like LA, consider running K-means on samples, $x_1 \sim p_\mathcal{D}(x)$ and $x_{2:K} \sim q_{[\tau,\infty)}$ (the outer ball). Let $S_1$ contain elements with the same cluster assignment as $x_1$. Define CAVE Discrimination (CAVE) as $\mathcal{L}^{\textup{CAVE}}(x_1, M) = \mathbb{E}_{p(a)}\mathbb{E}_{q_{[\tau,\infty)\setminus S_1}(x_{2:K}|\nu(x_1,a))}\left[\log \frac{e^{g_\theta(\nu(x_1, a))^T M[\rho(x_1)]/\omega}}{\sum_{j=1}^K e^{g_\theta(\nu(x_1, a))^T M[\rho(x_j)]/\omega}}\right]$. That is, CAVE samples negatives from a distribution $q_{[\tau,\infty)\setminus S_1}$ over the outer ball subtracting a set.
% Like LA, CAVE uses a ``background neighborhood'' $B$ ( the outer ball) and ``close neighborhood'' $C$ (the set $S_1$) but LA chooses negatives from all of $B$ whereas RING chooses from $B \setminus C$.
Lemma~\ref{lem:ring_cave} show RING and CAVE lower bound MI between weighted view sets.
% We show proofs in the supplement that RING and CAVE lower bound the mutual information between weighted view sets.
\vspace{-1em}
\begin{figure}[h!]
  \centering
  \begin{subfigure}[b]{0.13\textwidth}
    \centering
    \includegraphics[width=0.9\textwidth]{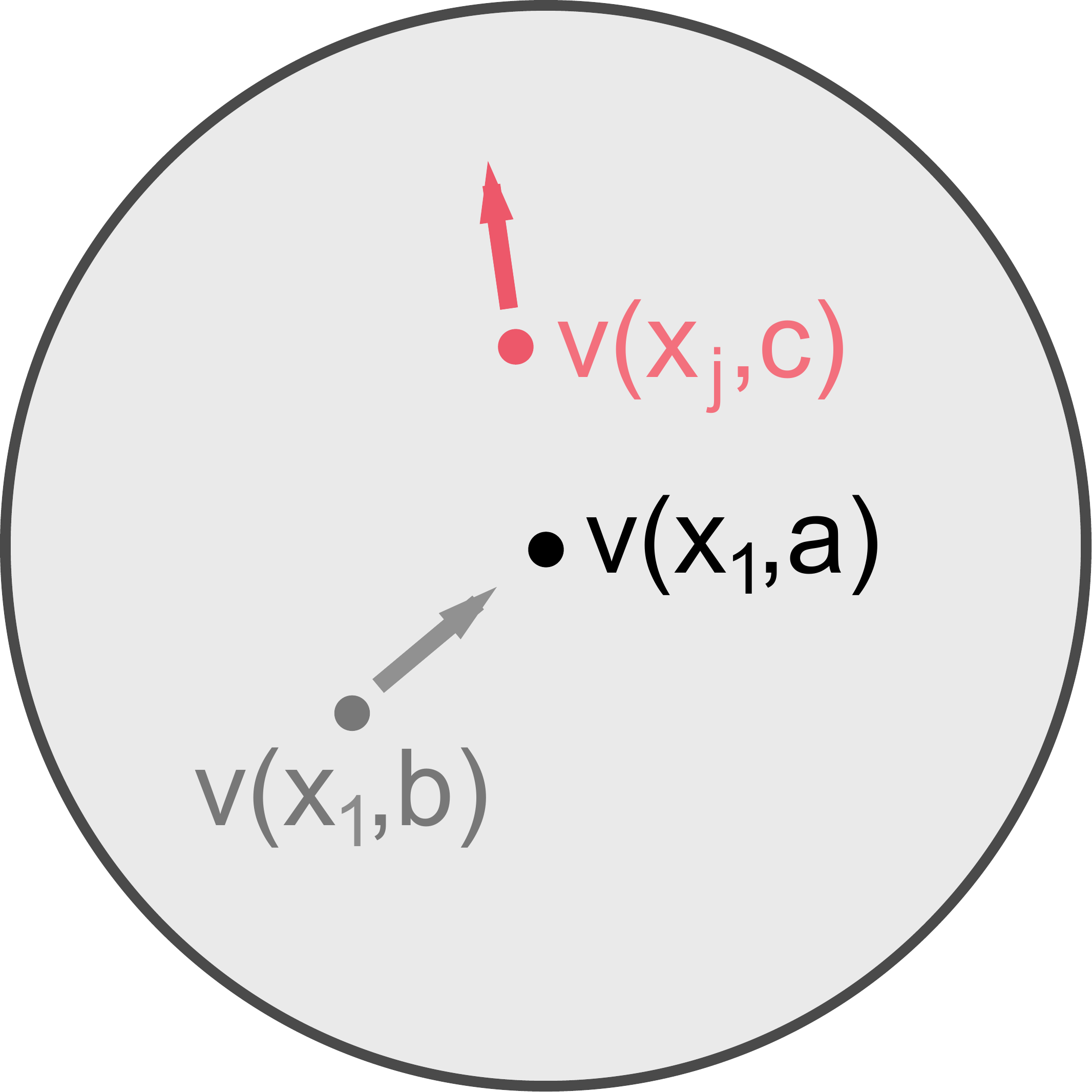}
    \caption{IR}
  \end{subfigure}
%   \begin{subfigure}[b]{0.16\textwidth}
%     \centering
%     \includegraphics[width=0.9\textwidth]{}
%     \caption{CMC}
%   \end{subfigure}
  \begin{subfigure}[b]{0.13\textwidth}
    \centering
    \includegraphics[width=0.9\textwidth]{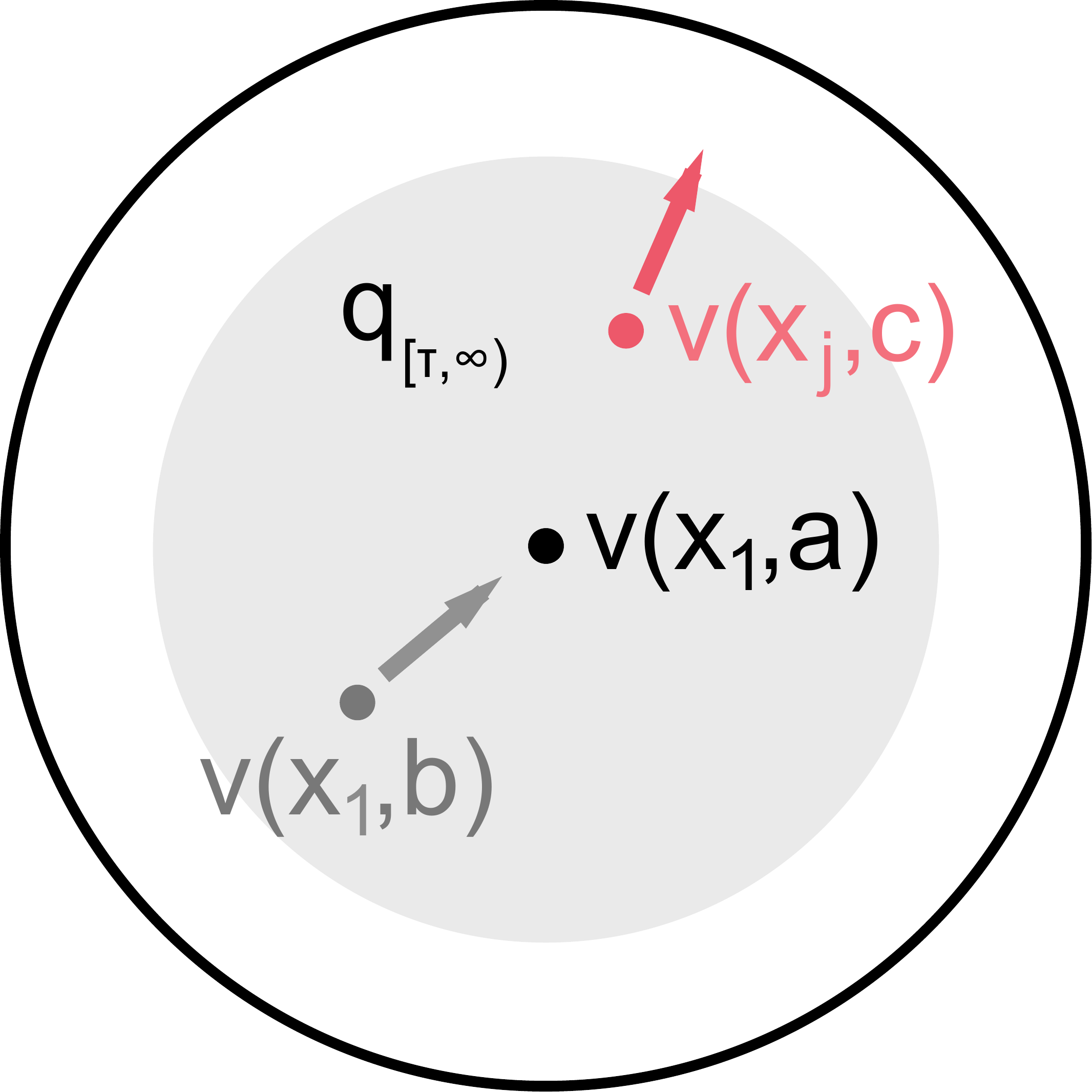}
    \caption{BALL}
  \end{subfigure}
  \begin{subfigure}[b]{0.13\textwidth}
    \centering
    \includegraphics[width=0.9\textwidth]{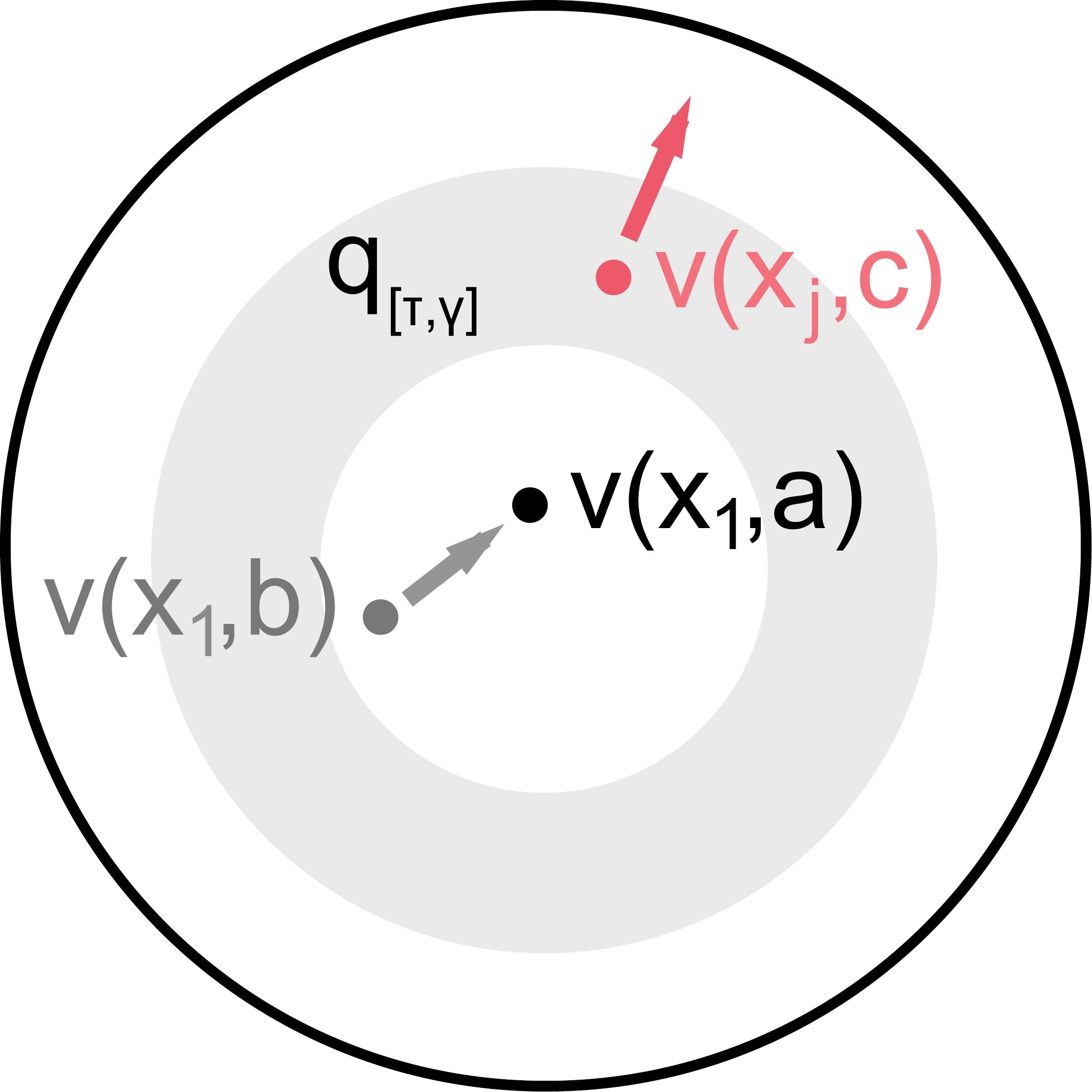}
    \caption{RING}
  \end{subfigure}
  \begin{subfigure}[b]{0.13\textwidth}
    \centering
    \includegraphics[width=0.9\textwidth]{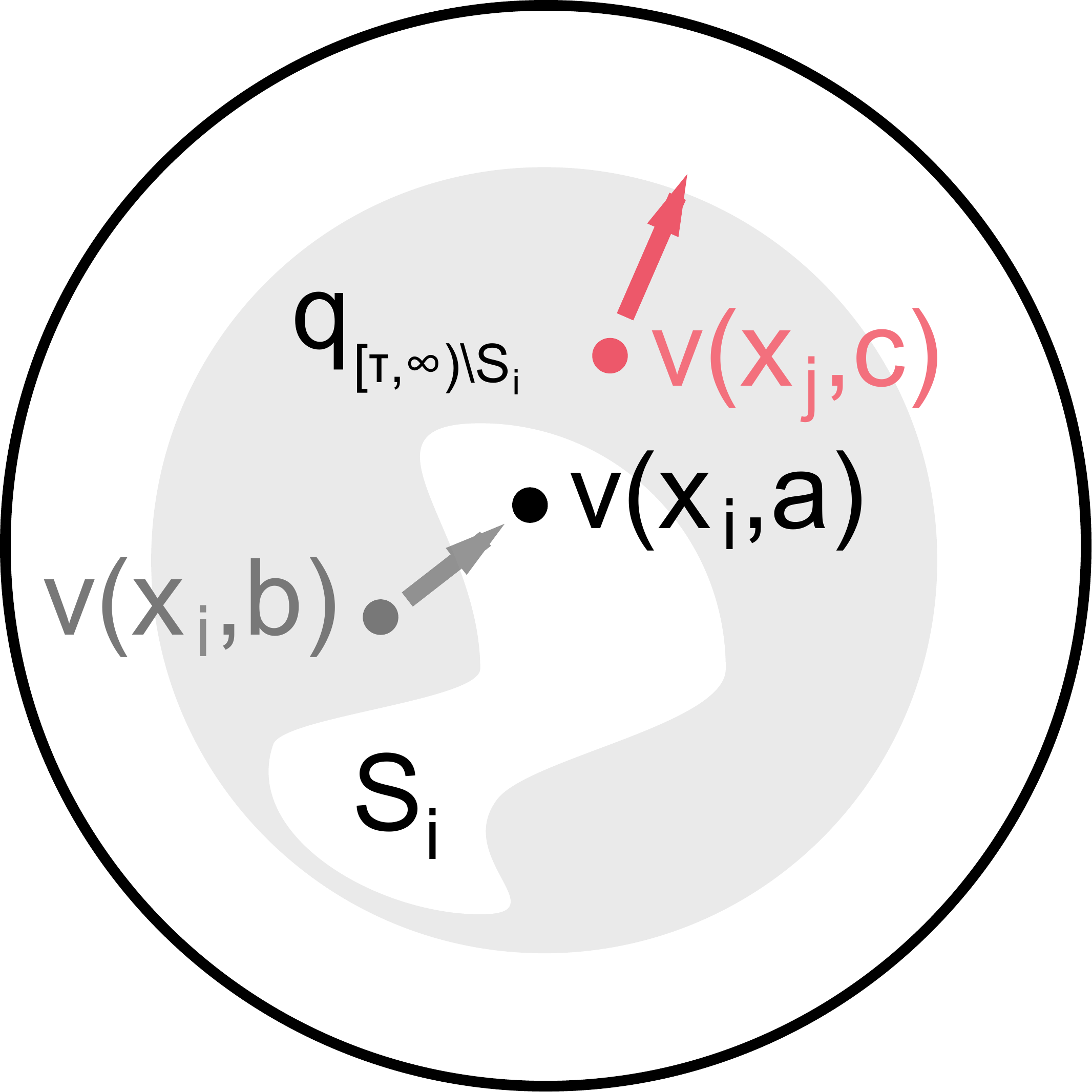}
    \caption{CAVE}
  \end{subfigure}
  \begin{subfigure}[b]{0.13\textwidth}
    \centering
    \includegraphics[width=0.9\textwidth]{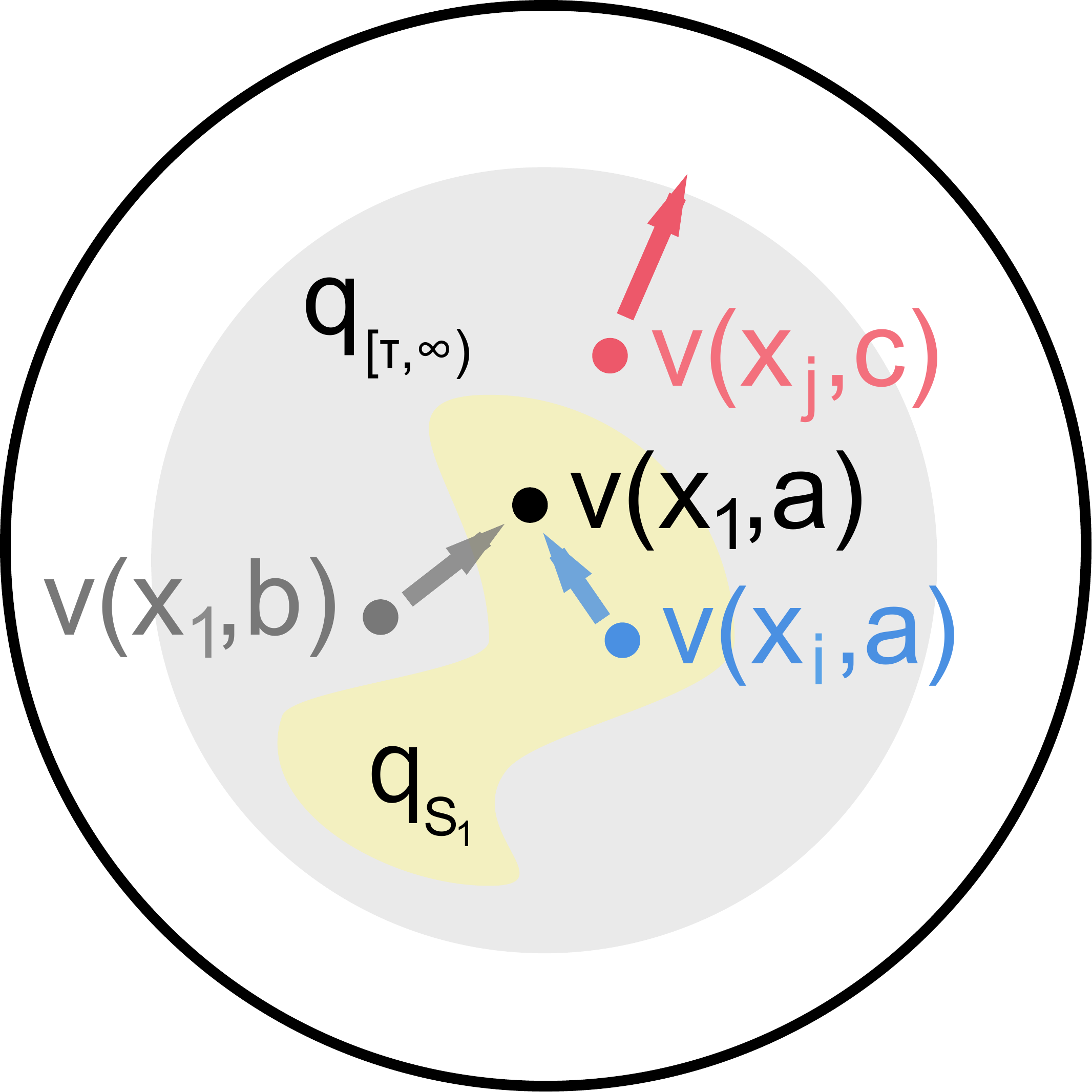}
    \caption{LA}
  \end{subfigure}
  \begin{subfigure}[b]{0.13\textwidth}
    \centering
    \includegraphics[width=0.9\textwidth]{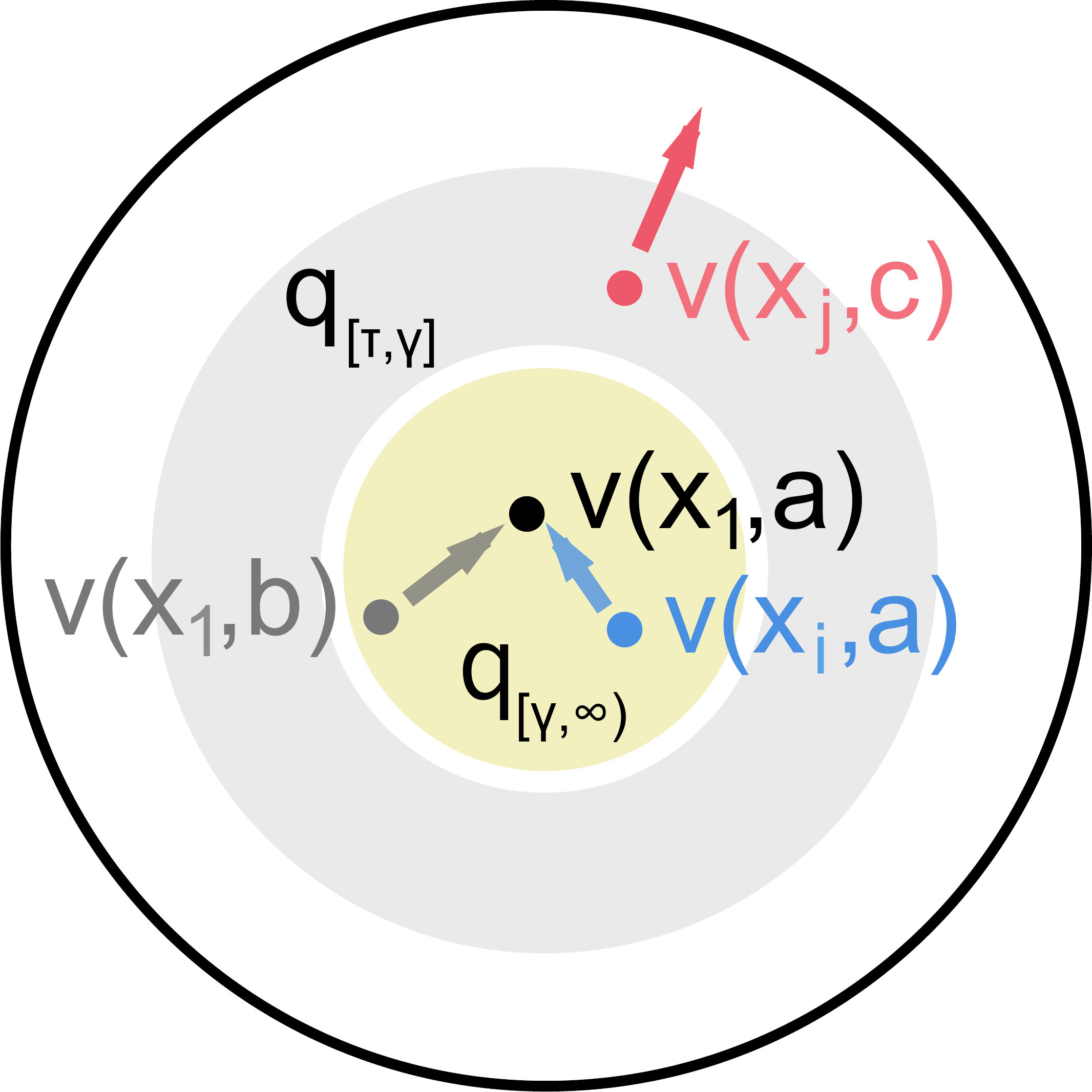}
    \caption{RING$^{\text{s-neigh}}$}
  \end{subfigure}
%   \begin{subfigure}[b]{0.13\textwidth}
%     \centering
%     \includegraphics[width=0.9\textwidth]{}
%     \caption{CAVE$^{\text{s-nei}}$}
%   \end{subfigure}
  \begin{subfigure}[b]{0.13\textwidth}
    \centering
    \includegraphics[width=0.9\textwidth]{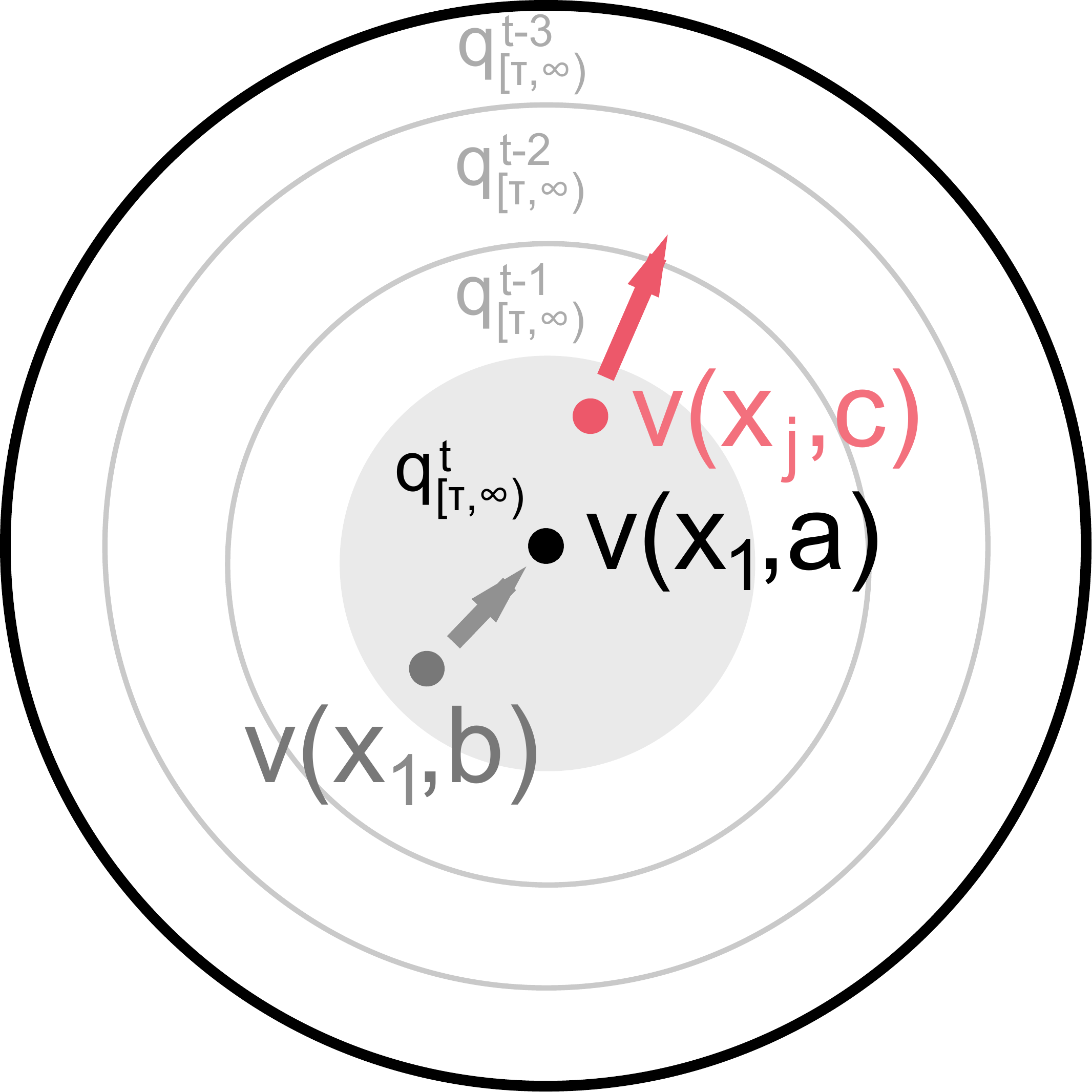}
    \caption{BALL$^{\text{anneal}}$}
  \end{subfigure}
%   \begin{subfigure}[b]{0.14\textwidth}
%     \centering
%     \includegraphics[width=0.8\textwidth]{}
%     \caption{ANN$^{\text{ring}}_{\text{+anneal}}$}
%   \end{subfigure}
  \caption{Various contrastive algorithms. Black: current view of image $x_1$; gray: other views of $x_1$; red: negative samples; gray: negative distribution; yellow: neighbor distribution.}
  \label{fig:models}
  \vspace{-2em}
\end{figure}

\textbf{K-Neighbor and S-Neighbor views.} We hypothesize that one of the strengths of LA is optimizing representations to put close neighbors near each other. Naturally, we  similarly extend BALL, RING, and CAVE to push close neighbors together. In addition to the VINCE distribution $q_T$ used for negative sampling (e.g. $T = [\tau,\gamma]$ for RING), define a second VINCE distribution $q_C$ with set $C$ lower bounded by $\tau$ to sample ``close neighbors''.
% Consider defining a set $I_1 = \{i_1, i_2, \ldots \}$ of \textit{indices} such that we consider the views of $x^{(i)}, i \in I_1$ to be views of $x_1$; thus the total view set is $\bigcup_{i \in I_1}\{\nu(x^{(i)},a) : a \in \mathcal{A}\}$.
We explore two possible choices for $q_C$. Fix $x_1$ as some example from our dataset. First, by \textit{S-neighborhood}, we set $C = [\gamma, \infty)$, the inner ball defined by RING. Second, by \textit{K-neighborhood}, we set $C = S_1$, the set of elements with the same cluster assignment as $x_1$ given by K-means. In our experiments, we denote each with a superscript $^{\text{s-neigh}}$ or $^{\text{k-neigh}}$ with no superscript meaning no close neighbors. As shown in  Sec.~\ref{sec:lavince}, when using a neighborhood, we sum over close neighbors inside the log, via Jensen's, to stay faithful to the design of LA. As an example, we may write $\scriptstyle \mathcal{L}^{\text{RING}^{\text{s-neigh}}}(x_1, M) = \mathbb{E}_{p(a)}\mathbb{E}_{x_{2:K} \sim q_{[\tau,\gamma]}}\mathbb{E}_{x'_{2:L} \sim q_{[\gamma,\infty)}}\left[\log \frac{\sum_{k \in \{\rho(x'_l)\}_{l=1}^L} e^{g_\theta(\nu(x_1, a))^T M[k]/\omega}}{\sum_{j=1}^K e^{g_\theta(\nu(x_1, a))^T M[\rho(x_j)]/\omega}}\right]$ where $x'_1 = x_1$ and $\rho$ maps $x$ to its index in the dataset.
Note that the negative distribution and the ``neighborhood'' distribution are two distinct objectives although the same sets can be used for both.
% We can similarly define S-neighborhood and K-neighborhood variants for BALL and CAVE.
% Note that the set for extended views and the set excluded for negative samples by RING or CAVE are two distinct objects although the same sets may be considered for both choices.

\begin{wraptable}{l}{7cm}
\tiny
\centering
\begin{tabular}{l|c|l|c}
\toprule
% Model & Top1 & Top5 & Model & Top1 & Top5\\
% \midrule
% IR \cite{wu2018unsupervised} & 64.3 & 95.9 & CMC \cite{tian2019contrastive} & 72.1 & 97.3 \\
% IR$^{\text{nce}}$ & 81.2 & 98.9 & CMC$^{\text{nce}}$ & 85.6 & 99.4 \\
% LA \cite{zhuang2019local} / BALL$^{\text{k-neigh}}$ & 81.8 & 99.1 & - & - & -  \\
% LA$^{\text{nce}}$ & 82.3 & 99.1 & - & - & - \\
% BALL & 81.4 & 99.0 & BALL$^{\textup{Lab}}$ & 85.7 & 99.5 \\
% BALL$^{\textup{anneal}}$ & 82.1 & 99.1 & BALL$^{\textup{Lab+anneal}}$ & 86.8 & 99.5 \\
% BALL$^{\textup{s-neigh}}$ & 75.1 & 95.9 & BALL$^{\textup{Lab+s-neigh}}$ & 63.3 & 94.5\\
% BALL$^{\textup{s-neigh+anneal}}$ & 84.8 & 99.3 & BALL$^{\textup{Lab+s-neigh+anneal}}$ & 87.0 & 99.4\\
% CAVE & 81.4 & 99.0 & CAVE$^{\textup{Lab}}$ & 86.1 & 99.5\\
% CAVE$^{\textup{anneal}}$ & 84.8 & 99.3 & CAVE$^{\textup{Lab+anneal}}$ & \textbf{87.8} & \textbf{99.5}\\
% RING & 84.7 & 99.3 & RING$^{\textup{Lab}}$ & 87.3 & 99.4 \\
% RING$^{\textup{s-neigh}}$ & 76.6 & 98.6 & RING$^{\textup{Lab+s-neigh}}$ & 68.0 & 95.7\\
% RING$^{\textup{anneal}}$ & 85.2 & 99.3 & RING$^{\textup{Lab+anneal}}$ & 87.6 & 99.5\\
% RING$^{\textup{s-neigh+anneal}}$ & \textbf{85.5} & \textbf{99.4} & RING$^{\textup{Lab+s-neigh+anneal}}$ & \textbf{87.8} & \textbf{99.5}\\
Model & Top1 & Model & Top1 \\
\midrule
IR \cite{wu2018unsupervised} & 64.3 & CMC \cite{tian2019contrastive} & 72.1 \\
IR$^{\text{nce}}$ & 81.2 & CMC$^{\text{nce}}$ & 85.6 \\
LA \cite{zhuang2019local} & 81.8 & - & -   \\
LA$^{\text{nce}}$ & 82.3  & - & -  \\
BALL & 81.4 & BALL$^{\textup{Lab}}$ & 85.7 \\
BALL$^{\textup{anneal}}$ & 82.1 & BALL$^{\textup{Lab+anneal}}$ & 86.8 \\
BALL$^{\textup{s-neigh}}$ & 75.1 & BALL$^{\textup{Lab+s-neigh}}$ & 63.3 \\
BALL$^{\textup{s-neigh+anneal}}$ & 84.8 & BALL$^{\textup{Lab+s-neigh+anneal}}$ & 87.0 \\
CAVE & 81.4 & CAVE$^{\textup{Lab}}$ & 86.1 \\
CAVE$^{\textup{anneal}}$ & 84.8 & CAVE$^{\textup{Lab+anneal}}$ & \textbf{87.8} \\
RING & 84.7 & RING$^{\textup{Lab}}$ & 87.3 \\
RING$^{\textup{s-neigh}}$ & 76.6 & RING$^{\textup{Lab+s-neigh}}$ & 68.0 \\
RING$^{\textup{anneal}}$ & 85.2 & RING$^{\textup{Lab+anneal}}$ & 87.6 \\
RING$^{\textup{s-neigh+anneal}}$ & \textbf{85.5} & RING$^{\textup{Lab+s-neigh+anneal}}$ & \textbf{87.8} \\
\bottomrule
\end{tabular}
\caption{CIFAR10 Transfer Accuracy}
\label{table:cifar}
\vspace{-2em}
\end{wraptable}

\textbf{Estimating the partition is unnecessary.}
 Both the numerator and denominator of the original formulation of LA (Eq.~\ref{eqn:la}) require estimating the dot product of the current image against all possible negative examples with a finite approximation. Even with a large number of samples, doing so twice can introduce unnecessary variance.
However, LA is equivalent to BALL with K-Neighbor views, which simplifies the computation to the difference of two $\texttt{logsumexp}$ terms (see lemma~\ref{lem:twologumexp}).
% This reduces variance and clearly shows the relation to the simplified IR from Sec.~\ref{sec:mi_insights}.
We also note the equivalence to MI demands $i \in C^{(i)}$, which is not always true in Eq.~\ref{eqn:la}. In practice, ensuring so improves performance as well.
% We also note that in the original LA, it is not strictly guaranteed that $i$ is in $B_i$ nor $C_i$, especially early in training when the learned representation space has not yet converged. We must, however, include $i \in C_i \subset B_i$ to preserve the relation to MI. In practice, doing so improves performance as well.
In Table~\ref{table:cifar}, we refer to this simplified version as LA$^{\text{nce}}$.

\textbf{Annealing the variational distribution.}
Early in learning, representations may be too noisy to properly distinguish similar images when choosing negative examples (see Fig.~\ref{fig:limitation}).
We thus consider \textit{annealing} $\tau$, increasing it on each gradient step, beginning at $-\infty$.
% Doing so is well-defined as the variational distribution is static at any iteration, thus maintaining a bound.
In the results, we use the superscript $^{\text{anneal}}$ when annealing is used. For CAVE models, annealing $\tau$ will shrink the size of the set $S_1$ defined by K-means automatically. For RING, we can choose to further anneal $\gamma$ from $\infty$ to $\tau$. That is, while we shrink the outer ball, we also  enlarge the inner ball.
% We call this BALL$^{\text{anneal}}$. Similarly, we define ANN$^{\text{anneal}}$ where the close neighbor set is a function (e.g. KMeans) of the backgound neighbor set (thus being annealed as well).

\begin{table}[t!]
\tiny
\centering
% \begin{subtable}[h]{0.24\textwidth}
% \begin{tabular}{c|c|c}
% \toprule
% Model & Top1 & Top5 \\
% \midrule
% IR & $81.2$ & $98.9$ \\
% BALL & $81.4$ & $99.0$ \\
% ANN & $81.4$ & $99.0$ \\
% LA & $81.8$ & $99.1$ \\
% BALL+ & $82.1$ & $99.1$ \\
% ANN+ & $\mathbf{84.8}$ & $\mathbf{99.3}$ \\
% \bottomrule
% \end{tabular}
% \caption{CIFAR10: Pred. Acc}
% \end{subtable}
% \begin{subtable}[h]{0.24\textwidth}
% \begin{tabular}{c|c|c}
% \toprule
% Model & Top1 & Top5 \\
% \midrule
% CMC & $85.6$ & $99.4$ \\
% CMC-BALL & $85.7$ & $99.5$ \\
% CMC-ANN & $86.1$ & $99.5$ \\
% -- & -- & -- \\
% CMC-BALL+ & $86.8$ & $99.5$ \\
% CMC-ANN+ & $\mathbf{87.8}$ & $\mathbf{99.5}$ \\
% \bottomrule
% \end{tabular}
% \caption{CIFAR10: Pred. Acc}
% \end{subtable}
\begin{subtable}[h]{0.49\textwidth}
\centering
\begin{tabular}{l|c|c|l|c|c}
\toprule
Model & Top1 & Top5 & Model & Top1 & Top5 \\
\midrule
IR & 43.2 & 67.0 & CMC & 48.2 & 72.4 \\
BALL & 45.3 & 68.6 & BALL$^{\text{Lab}}$ & 48.9 & 73.1 \\
CAVE & 46.6 & 70.4  & CAVE$^{\text{Lab}}$ & 49.2 & 73.3 \\
LA & 48.0 & 72.4 & -- & -- & -- \\
BALL$_{\text{anneal}}$ & 47.3 & 71.1 & BALL$^{\text{Lab+anneal}}$ & 49.7 & 73.6 \\
CAVE$_{\text{anneal}}$ & \textbf{48.4} & \textbf{72.5} & CAVE$^{\text{Lab+anneal}}$ & \textbf{50.5} & \textbf{74.0} \\
\bottomrule
\end{tabular}
\caption{ImageNet: Classification Accuracy}
\end{subtable}
\begin{subtable}[h]{0.49\textwidth}
    \centering
    \begin{tabular}{l|c c c | c c c }
    \toprule
    Model & AP$^{\text{bb}}$ & AP$_{50}^{\text{bb}}$ & AP$_{75}^{\text{bb}}$ & AP$^{\text{mk}}$ & AP$_{50}^{\text{mk}}$ & AP$_{75}^{\text{mk}}$ \\
    \midrule
    IR & 8.6 & 19.0 & 6.6 & 8.5 & 17.4 & 7.4 \\
    BALL & 9.4 & 20.3 & 7.3 & 9.3 & 18.8 & 8.4 \\
    CAVE & 9.9 & 21.2 & 7.8 & 9.7 & 19.7 & 8.6 \\
    LA & 10.2 & 22.0 & 8.1 & 10.0 & 20.3 & 9.0 \\
    BALL$^{\text{anneal}}$ & 9.8 & 21.0 & 7.7 & 9.8 & 19.8 & 8.8 \\
    CAVE$^{\text{anneal}}$ & \textbf{10.7} & \textbf{22.8} & \textbf{8.5} & \textbf{10.7} & \textbf{20.9} & \textbf{9.4} \\
    \bottomrule
    \end{tabular}
    \caption{COCO: Object Detection and Inst. Segmentation}
    % Mask R-CNN, R$_{18}$-FPN, 1x schedule
\end{subtable}
\begin{subtable}[h]{0.33\textwidth}
    \centering
    \begin{tabular}{l|c c c }
    \toprule
    Model & AP$^{\text{kp}}$ & AP$_{50}^{\text{kp}}$ & AP$_{75}^{\text{kp}}$ \\
    \midrule
    IR & $34.6$ & $63.0$ & $32.9$ \\
    BALL & $34.9$ & $63.6$ & $33.8$ \\
    CAVE & $35.9$ & $64.9$ & $34.2$  \\
    LA & $36.3$ & $65.3$ & $35.1$ \\
    BALL$^{\text{anneal}}$ & $36.3$ & $65.2$ & $35.0$ \\
    CAVE$^{\text{anneal}}$ & $\mathbf{37.0}$ & $\mathbf{66.1}$ & $\mathbf{35.6}$\\
    \bottomrule
    \end{tabular}
    \caption{COCO: Keypoint Detection}
    % R-CNN, R$_{18}$-FPN
\end{subtable}
\begin{subtable}[h]{0.32\textwidth}
    \centering
    \begin{tabular}{l|c c c}
    \toprule
    Model & AP$^{\text{bb}}$ & AP$_{50}^{\text{bb}}$ & AP$_{75}^{\text{bb}}$  \\
    \midrule
    IR & $5.5$ & $14.5$ & $3.3$ \\
    BALL & $6.0$ & $15.7$ & $3.5$ \\
    CAVE & $6.6$ & $17.6$ & $4.1$ \\
    LA & $\mathbf{7.6}$ & $20.0$ & $\mathbf{4.3}$ \\
    BALL$^{\text{anneal}}$ & $7.1$ & $18.3$  & $4.1$ \\
    CAVE$^{\text{anneal}}$ & $7.5$ & $\mathbf{20.1}$ & $4.3$ \\
    \bottomrule
    \end{tabular}
    \caption{VOC: Object Detection}
    % : Faster R-CNN, R$_{18}$-C4
\end{subtable}
\begin{subtable}[h]{0.32\textwidth}
    \centering
    \begin{tabular}{l|c c c}
    \toprule
    Model & AP$^{\text{mk}}$ & AP$_{50}^{\text{mk}}$ & AP$_{75}^{\text{mk}}$  \\
    \midrule
    IR & $3.0$ & $5.9$ & $2.8$ \\
    BALL & $3.4$ & $6.5$ & $3.3$ \\
    CAVE & $3.8$ & $7.3$ & $3.5$ \\
    LA & $3.8$ & $7.2$ & $3.5$ \\
    BALL$^{\text{anneal}}$ & $4.0$ & $7.7$ & $3.8$ \\
    CAVE$^{\text{anneal}}$ & $\mathbf{4.3}$ & $\mathbf{8.2}$ & $\mathbf{4.0}$ \\
    \bottomrule
    \end{tabular}
    \caption{LVIS: Instance Segmentation}
    %  Mask R-CNN, R$_{18}$-FPN
\end{subtable}
\caption{Evaluation of the representations using six visual transfer tasks}
% : object classification on ImageNet (a); object detection (b), instance segmentation (b), and keypoint detection (c) on COCO; object detection on Pascal VOC'07 (d); and instance segmentation on LVIS (e). }
% \ndg{double check notations in table.. eg CMCANN is now CAVE, right? and, do table captions go above or below -- different for table 1 and 2 now. also, put tasks in subcaptions! (can move model specs into main caption or elsewhere.)}}
\label{tab:othertransfertask}
\vspace{-3em}
\end{table}

\section{Experiments}
\label{sec:mainresults}
\vspace{-1em}
We conduct two sets of experiments: one on CIFAR10 to compare all the objective variants, and one on ImageNet to compare select algorithms on several transfer tasks. In aggregate, we give the reader a thorough description of the algorithms uncovered by the theory and their relative performance.

\textbf{Comparing negative samples and view functions.}
% Our theoretical results focus algorithm choices on views used and distributions of negative samples.
Table~\ref{table:cifar} compares 26 algorithms, of which 23 are introduced in this paper. For each algorithm, we measure its performance on classification by training a logistic regression model on the frozen representations only (see Sec.~\ref{sec:mainappendix_sec5}).
% We follow the training paradigms in prior works \cite{wu2018unsupervised,zhuang2019local,tian2019contrastive,chen2020simple} and standardize hyperparameters (see Sec.~\ref{sec:mainappendix_sec5}).

We make a few observations: First, our simplifications to IR, CMC, and LA outperform their original implementations by up to 15\%. We note that the difference between LA$^{\textup{nce}}$ and LA is much less than the others since $\kappa$ cancels from the top and bottom of the objective. Still, we see improvements due to less randomness in LA$^{\textup{nce}}$. Second, harder negative samples build stronger representations as BALL, RING, CAVE outperform IR$^{\textup{nce}}$ by up to 3\%. In particular, RING outperforms CAVE suggesting it as a simpler but effective alternative to K-means. Further, if we anneal in the negative distributions, we find even better performance, with gains of 3-4\% over IR$^{\textup{nce}}$. Third, adding close neighbor views is not always beneficial: without annealing, we observe convergence to local minima as the representations cling to spurious dependencies in a noisy close neighbor set. This explains why RING even outperforms LA, which heavily relies on a well-defined neighborhood. However, with annealing, we can wait until the representation space is reasonable before adding neighbors. Together, this forms our best model, RING$^{\text{(Lab)+s-neigh+anneal}}$, outperforming IR$^{\text{nce}}$ by 4\%, LA$^{\text{nce}}$ by 3\%, and CMC$^{\text{nce}}$ by 2\% on CIFAR10 with gains of up to 20\% without the changes to improve stability.

\textbf{Comparing Transfer Tasks.} Picking six algorithms, we compare their representations on five transfer tasks for a more general measure of usefulness: classification on ImageNet, object detection, instance and keypoint segmentation on COCO \cite{lin2014microsoft},  object detection on Pascal VOC '07 \cite{everingham2010pascal}, and instance segmentation on LVIS \cite{gupta2019lvis}. Like prior work \cite{he2019momentum}, we use Detectron2 \cite{wu2019detectron2} but with a frozen ResNet18 backbone to focus on representation quality (see Sec.~\ref{sec:detectron}).
From Table~\ref{tab:othertransfertask}, we find more evidence of the importance of negative distributions as BALL and CAVE consistently outperform IR. Also, our best models $\text{CAVE}^{\text{(Lab)+anneal}}$ outperform IR, LA, and CMC across tasks.
% Together with Table~\ref{table:cifar}, these strong results exemplify the power of carefully chosen negative samples.

% \textbf{Takeaway \#1}$\quad$ We confirm that the ordering of performance for IR, BALL, ANN, and LA is as expected for ImageNet and CIFAR10. In particular, transfer accuracies show that BALL and ANN account for half the performance gains of LA over IR, agreeing with our analysis that both of the choices of negative sampling and view set are important for building strong representations.

% \textbf{Takeaway \#2}$\quad$
% To show that the mutual information framework generalizes to other contrastive learning models, we compare CMC to ball and annulus extensions of CMC, denoted by CMC-BALL and CMC-ANN. As Fig.~\ref{fig:results_knn}(c,d) and Table~\ref{tab:othertransfertask} find similar patterns to Takeaway \#1 and for CMC-based models. In particular, we can surpass the performance of CMC (and IR) by choosing harder negative samples with VINCE \textit{in both} ImageNet and CIFAR10.

% \textbf{Takeaway \#3}$\quad$
% BALL+ and ANN+ out-perform all other algorithms. For example, on CIFAR10, ANN+ surpasses LA by 3\% while CMC-ANN+ surpasses CMC by over 2\%. We see similar gains in ImageNet where CMC-ANN+ achieves an accuracy of 50.5\%, a difference of 2\% with CMC and LA.  These positive results exemplify the power of carefully choosing negative samples.

\section{Conclusion}
\vspace{-1em}
We presented an interpretation of representation learning based on mutual information between image views. This formulation led to more systematic understanding of a family of existing approaches.
% It further enabled simplifications of these approaches and new, better performing techniques.
In particular, we uncovered that the choices of views and negative sample distribution strongly influence the performance of contrastive learning.
By choosing more difficult negative samples, we surpassed high-performing algorithms like LA and CMC across several visual tasks.

Several theoretical and practical questions remain.
We identified the importance of views
% and some reasons why one set of views might be better than another; this leaves
but left open questions about what makes for a ``good'' view.
Our choice of negative examples was justified by the VINCE bound on MI. Yet VINCE is  looser than InfoNCE. Future work can elucidate the relationship between learning a good representation and tightening bounds.

\section*{Broader Impact}
% In order to provide a balanced perspective, authors are required to include a statement of the potential broader impact of their work, including its ethical aspects and future societal consequences. Authors should take care to discuss both positive and negative outcomes.

The recent successes of machine learning may have profound societal consequences. Yet the extremely data- and compute-intensive nature of recent methods limits their availability to a small segment of society. It is urgent that we find ways to make the use of machine learning possible for a wide variety of people with a wide variety of tasks.
One promising route is to solve particular tasks, with small data and little compute, by building from widely applicable and available \textit{pre-trained representations}.
Contrastive algorithms are becoming a popular method for  unsupervised learning of representation useful for transfer tasks.
This makes it important to provide a good theoretical understanding of these methods, and to make the implementations and trained representations publicly available.
Our work is a step toward making these techniques clearer and better performing.
We are committed to open sourcing our implementations and providing trained representations.
We believe this has the potential to allow many people to solve their own tasks by building on our representations.
Yet the representation learning algorithms themselves remain both data- and compute-intensive. This has the danger to provide an additional barrier to access.
Future research should ameliorate this risk by committing to open access to code and trained representations, and by research ways to reduce the data or compute requirements of these methods.

% This framework suggests several new directions. Future research could investigate automatically generating views or learning the parameters of the variational distributions. Visual algorithms like IR and LA no longer look very different from masked language modeling, as both families are unified under mutual information. Future work could pursue generalizations to multimodal domains.
% \section*{Acknowledgement}
% The authors thank Honglin Chen for helpful discussion and technical assistance.

\bibliographystyle{plain}
{\small
\linespread{1}
\bibliography{report}
}
\newpage
\appendix

\section{Proofs}

\subsection{Proof of Theorem~\ref{thm:marketing}}
\begin{proof}
As sums are invariant to permutation, the result is immediate.
\end{proof}

\subsection{Proof of Lemma~\ref{lem:ir_mi}}
\label{sec:prooflemir_mi}
\begin{proof}
We focus on showing an equivalence between InfoNCE and IR, upon which its relation to CMC becomes clear. We rename $X$ to $X_1$ and make $K-1$ i.i.d. copies of $X_1$, named $X_{2:K}$.

First, let the view function $\nu_{\textup{IR}}^*$ of index sets be defined with the standard image augmentations (cropping, flipping, color jitter, and grayscale) as used in training IR. Define an empirical estimate of $\mathcal{I}(X_1; \nu_{\textup{IR}}^*)$ using the dataset $\mathcal{D}$. That is,
\begin{equation*}
    {\scriptstyle
    \hat{\mathcal{I}}^{\text{NCE}}(X_1; \nu^*) = \mathbb{E}_{p_{\mathcal{D}}(x_{1:K})} \mathbb{E}_{A_{1:K},B,B' \sim p(A)}\left[\log \frac{e^{f^*_{\theta}(\nu^*(x_1,B), \nu^*(x_1, B'); w_B, w_{B'})}}{\frac{1}{K}\sum_{j=1}^K e^{f^*_{\theta}(\nu^*(x_1,B\}), \nu^*(x_j, A_j); w_B, w_{A_j}) }} \right]}
\end{equation*}
where $p_{\mathcal{D}}(x_{1:K}) = \prod_{j=1}^K p_{\mathcal{D}}(x_j)$.
In practice, we do not know $p_{\mathcal{D}}$, but since for all $x^{(i)}$ in $\mathcal{D}$, $x^{(i)} \sim p_{\mathcal{D}}$, we can sample $p_{\mathcal{D}}$ by uniformly picking from $\mathcal{D}$. So, we assume in our computation that for any $x \sim p_{\mathcal{D}}(x)$, $x$ is in $\mathcal{D}$.
Now, define an indexing function $\rho$ that maps a sample $x$ from $p_{\mathcal{D}}$ to an index in the dataset $\mathcal{D}$ such that $x \sim p_{\mathcal{D}} = x^{(\rho(x))} \in \mathcal{D}$. This is well-formed by our assumption.

Let $x_1$ represent the current image, a realization of $X_1$ sampled from $p_{\mathcal{D}}(x_1)$. As described in the main text, we consider a special case of $\hat{\mathcal{I}}^{\text{NCE}}(X; \nu^*)$ where the two index sets for the current image $x_1$ are $\{a\}$, with  $a$ being the sampled view index in $\mathcal{A}$ for the current epoch of training, and $A^{(\rho(x_1))}$, the set of sampled view indices from the last $n_\alpha$ epochs of training. Further, define weights $w_1 = \{1\}$ and $w_2 = \{1, \alpha, \ldots, \alpha^{n_\alpha - 1}\}$. Then, we can rewrite the above equation as:
\begin{align*}
    \scriptstyle
    \hat{\mathcal{I}}^{\text{NCE}}(X_1; \nu^*) &= \scriptstyle \mathbb{E}_{p_{\mathcal{D}}(x_{1:K})} \mathbb{E}_{a \sim p(a)}\left[\log \frac{e^{f^*_{\theta}(\nu^*(x_1,\{a\}), \nu^*(x_1, A^{(\rho(x_1))}); w_1, w_2)}}{\frac{1}{K}\sum_{j=1}^K e^{f^*_{\theta}(\nu^*(x_1,\{a\}\}), \nu^*(x_j, A^{(\rho(x_j))}); w_1, w_2)) }} \right] \\
    &= \scriptstyle \mathbb{E}_{p_{\mathcal{D}}(x_{1:K})} \mathbb{E}_{a \sim p(a)}\left[\log \frac{e^{g_{\theta}(\nu(x_1,a))^T (\sum_{k=1}^{|A^{\rho(x_1)}|} w_2[k] g_\theta(\nu(x_1, A^{(\rho(x_1))}[k])))}}{\frac{1}{K}\sum_{j=1}^K e^{g_{\theta}(\nu(x_1,a))^T (\sum_{k=1}^{|A^{\rho(x_1)}|} w_2[k] g_\theta(\nu(x_1, A^{(\rho(x_j))}[k])))}} \right] \\
    &= \scriptstyle \mathbb{E}_{p_{\mathcal{D}}(x_{1:K})} \mathbb{E}_{a \sim p(a)}\left[\log \frac{e^{g_{\theta}(\nu(x_1,a))^T M[\rho(x_1)]}}{\frac{1}{K}\sum_{j=1}^K e^{g_{\theta}(\nu(x_1,a))^T M[\rho(x_j)]}} \right]  \\
    &= \scriptstyle \mathbb{E}_{p_{\mathcal{D}}(x_{1})} \left[ \mathcal{L}^{\textup{IR}}(x_1, M) \right] + \log K - \log \kappa
\end{align*}
Note that $w_1, w_2$ are shared across the $K$ witness evaluations.
We apply the definition of a memory bank entry in the third inequality. To see why the last equality holds, consider:
% that although the original IR enumerates over all possible elements in the dataset $\mathcal{D}$, doing so is equivalent to sampling i.i.d. from the empirical distribution $p_{\mathcal{D}}$. Precisely,
\begin{align*}
    \scriptstyle \mathcal{L}^{\textup{IR}}(x_1, M) &\scriptstyle= \mathbb{E}_{p(a)}\left[\log \frac{e^{g_\theta(\nu(x_1, a)^T M[\rho(x_1)]}}{\sum_{j=1}^{N} e^{g_\theta(\nu(x_1, a)^T M[\rho(x_j)]}}\right] \\
    &\scriptstyle \approx \scriptstyle\mathbb{E}_{p(a)}\left[\log \frac{e^{g_\theta(\nu(x_1, a)^T M[\rho(x_1)]}}{\kappa \sum_{j=1}^{K} e^{g_\theta(\nu(x_1, a)^T M[\rho(x_j)]}}\right]\\
    & \scriptstyle= \mathbb{E}_{p(a)}\mathbb{E}_{p_{\mathcal{D}}(x_{2:K})}\left[\log \frac{e^{g_\theta(\nu(x_1, a)^T M[\rho(x_1)]}}{\kappa \sum_{j=1}^{K} e^{g_\theta(\nu(x_1, a)^T M[\rho(x_j)]}}\right] \\
   &\scriptstyle= \mathbb{E}_{p(a)}\mathbb{E}_{p_{\mathcal{D}}(x_{2:K})}\left[\log \frac{e^{g_\theta(\nu(x_1, a)^T M[\rho(x_1)]}}{\frac{1}{K}\sum_{j=1}^{K} e^{g_\theta(\nu(x_1, a)^T M[\rho(x_j)]}}\right] - \log K + \log \kappa
\end{align*}
where $N$ is the size of the dataset $\mathcal{D}$; $K <N$ is the number of samples and $\kappa$ is a constant. We note that second line is an approximation already made in the original implementation of IR. Further, the third line holds since the $K-1$ negative examples $x_{2:K}$ are sampled uniformly from the dataset $\mathcal{D}$, in other words, i.i.d. from the empirical distribution, $p_{\mathcal{D}}$.

Having shown the equivalence between InfoNCE and IR, we move to CMC.
The view set for CMC can be bisected into two: one of luminescence filters $\mathcal{A}_{\textup{CMC-L}}$, and one of AB-color filters $\mathcal{A}_{\textup{CMC-ab}}$.
Together, $\mathcal{A}_{\textup{CMC}} = \mathcal{A}_{\textup{CMC-L}} \cup \mathcal{A}_{\textup{CMC-ab}}$.
Recall that the CMC objective is a sum of two IR objectives:
\[\mathcal{L}^{\textup{CMC}}(x, M) = \mathcal{L}^{\textup{IR}}(x_{\textup{L}}, M_{\textup{ab}}) + \mathcal{L}^{\textup{IR}}(x_{\textup{ab}}, M_{\textup{L}}) \]
So if we define two extended view functions, $\nu^*_{\text{CMC-L}}: X \times 2^{\mathcal{A}_{\textup{CMC-L}}} \rightarrow 2^X$ and $\nu^*_{\text{CMC-ab}}: X \times 2^{\mathcal{A}_{\textup{CMC-ab}}} \rightarrow 2^X$. Then by the above, we have $\hat{\mathcal{I}}^{\text{NCE}}(X_1; \nu_{\text{CMC-L}}^*) \equiv \mathbb{E}_{p_{\mathcal{D}}(x_{1})} \left[ \mathcal{L}^{\textup{IR}}(x_{1, \textup{L}}, M_{\textup{ab}}) \right] + \log K - \log \kappa$ and $\hat{\mathcal{I}}^{\text{NCE}}(X_1; \nu_{\text{CMC-ab}}^*) \equiv \mathbb{E}_{p_{\mathcal{D}}(x_{1})} \left[ \mathcal{L}^{\textup{IR}}(x_{1, \textup{ab}}, M_{\textup{L}}) \right] + \log K - \log \kappa$.
So $\hat{\mathcal{I}}^{\text{NCE}}(X_1; \nu_{\text{CMC}}^*) \equiv \mathbb{E}_{p_{\mathcal{D}}(x_{1})} \left[ \mathcal{L}^{\textup{CMC}}(x_{1}, M) \right] / 2 + \log K - \log \kappa$ where $M = (M_{\textup{L}}, M_{\textup{ab}})$.
% Now, as the CMC objective is the sum of two IR objectives, we immediately have the second result that $\mathbb{E}_{p_{\mathcal{D}}(x_{1:K})} \left[ \mathcal{L}^{\textup{CMC}}(x_1, M) \right] / 2 + \log K$ is equivalent to InfoNCE. (Note that although CMC makes different choices of views, those differences are contained in $\nu^*_{\textup{CMC}}$.
% Similarly, the equivalence of InfoNCE to SimCLR follows from the fact that the bound holds regardless of how $g_\theta$ is parameterized. For instance, let $g_\theta = h_{\theta_1} \circ u_{\theta_2}$ where $\theta = \theta_1 \cup \theta_2$. That is, the encoder is a  composition of functions. Then SimCLR (with the computational changes like large batch sizes abstracted away) is equivalent to IR and thus to InfoNCE.
% We recognize that SimCLR, unlike IR, does not use a memory bank. But as noted in the main text, there is a simpler formulation of IR equivalent to InfoNCE where we replace the memory bank with the drawing of a second random view of $x_i$. As SimCLR draws negative samples from the same minibatch as $x_i$, which are chosen i.i.d., the equivalence holds.
\end{proof}

\subsection{Proof of Theorem~\ref{thm:vince}}
\label{sec:proof:thm:vince}
\begin{proof}
It suffices to show the inequalities $c < e^\tau \leq \mathbb{E}_{\mathbb{Q}_T}[g(x)]$. The first holds since $ \log c < \tau$ by assumption.
The second holds since $\mathbf{1}_{S_T}e^{\tau} \leq\mathbf{1}_{S_T} g(x)$, and taking the expectation $\mathbb{E}_{\mathbb{Q}_T}$ of both sides of this inequality gives the desired result upon observing that $\mathbb{E}_{\mathbb{Q}_T}[\mathbf{1}_{S_T}e^{\tau}] = \mathbb{Q}_T(S_T ) e^\tau= e^\tau$ and that $\mathbb{E}_{\mathbb{Q}_T}[\mathbf{1}_{S_T} g(x)] = \mathbb{E}_{\mathbb{Q}_T}[g(x)]$.
\end{proof}

\subsection{Proof of Corollary~\ref{coro:vince}}
\begin{proof}
    Note that each $e^{f(x,y_j)}$ for $j$ in $[K]$ satisfies the conditions on $g$ in Thm~\ref{thm:vince}. So for any $j$, since $\tau > \log  \mathbb{E}_{p(y)}[e^{f(y_1,y)}]$, by Thm~\ref{thm:vince}, we have that $\mathbb{E}_{p(y_j)}\left[e^{f(x,y_j)}\right] \leq \mathbb{E}_{q(y_j)}\left[e^{f(x,y_j)}\right]$. Then by linearity of expectation, $\mathbb{E}_{p(x,y_1)}\mathbb{E}_{p(y_{2:K})}\left[\frac{1}{K}\sum_{j=1}^K e^{f(x,y_j)}\right] \leq \mathbb{E}_{p(x,y_1)}\mathbb{E}_{q(y_{2:K})}\left[\frac{1}{K}\sum_{j=1}^K e^{f(x,y_j)}\right]$.
    Finally, let $h: \mathbb{R} \rightarrow \mathbb{R}$ be any monotonic increasing function.
    Then, $\mathbb{E}_{p(x,y_1)}\mathbb{E}_{p(y_{2:K})}\left[h(\frac{1}{K} \sum_{j=1}^K e^{f(x,y_j)}) \right] \leq  \mathbb{E}_{p(x,y_1)}\mathbb{E}_{q_T(y_{2:K})}\left[h(\frac{1}{K} \sum_{j=1}^K e^{f(x,y_j)} )\right]$.
    But if  $h = \log$, then $\mathcal{I}^{\text{VINCE}}(X;Y_1) \leq \mathcal{I}^{\text{NCE}}(X;Y_1)$.
%   If the range of $f$ falls between $\tau$ and $\gamma$, VINCE and InfoNCE are equivalent.
  As $-\tau \rightarrow \infty = \gamma$, the bound is tight.
\end{proof}

\subsection{Proof of Lemma~\ref{lem:coro_ballann}}
\label{sec:proof:lemmaball}
\begin{proof}
    We prove the more general statement for any set $T$ lower bounded by $\tau$, then focus on BALL as a special case.
    Recall that the  objective we are interested in is
    \begin{equation*}
        \scriptstyle
        \mathcal{L}^T(x_1, M) = \mathbb{E}_{p(a)}\mathbb{E}_{q_T(x_{2:K}|\nu(x_1, a))}\left[\log\frac{e^{g_\theta(\nu(x_1, a))^T M[\rho(x_1)]/\omega}}{\sum_{j=1}^K  e^{g_\theta(\nu(x_1, a))^T M[\rho(x_j)]/\omega}}\right]
    \end{equation*}
    where $x_1$ is a realization of random variable $X_1$, sampled an empirical distribution $p_{\mathcal{D}}(x_1)$, and index $a \sim p(a)$.
    We call this class of objectives $T$-Discrimination.
    By construction, $T$-Discrimination uses a negative sampling distribution $q_T$ that lies in the VINCE family of distributions.
    We use the notation $q_{\mathcal{D},T}$ to refer to the empirical distribution of $q_{T}$, much like $p_{\mathcal{D}}$ and $p$. In other words, samples from $q_{\mathcal{D},T}$ will be elements of $\mathcal{D}$. Although, because $q_{\mathcal{D},T}$ has restricted supported, not all members of $\mathcal{D}$, which are sampled i.i.d. from $p_{\mathcal{D}}$,  will be valid samples from $q_{\mathcal{D},T}$.
    By assumption, $\tau > \log \mathbb{E}_{x \sim p(x)}\left[ e^{f_\theta(\nu(x_1,a), x)} \right]$, so if we can show that $T$-Discrimination is equivalent to VINCE, we will get that it lower bounds mutual information between weighted view sets by Thm.~\ref{thm:vince}.

    To show the equivalence between $T$-Discrimination and VINCE, we need only  expand the memory bank as a weighted sum of views. Notice that  $g_\theta(\nu(x_1, a)) = g^*_\theta(\nu^*(x_1, \{a\}), w_1)$ and $M[\rho(x_1)] = g^*_\theta(\nu^*(x_1, A^{(\rho(x_1))}), w_2)$ where $a$ is view index at the current epoch and $A^{(\rho(x_1))}$ is a set of view indices from the last $n_\alpha$ epochs as presented in Sec.~\ref{sec:exist:algo}. Doing so immediately proves the statement:
    \begin{align*}
        \scriptstyle
        \mathbb{E}_{p_\mathcal{D}(x_1)}\left[\mathcal{L}^T(x_1, M)\right] + \log K &=\scriptstyle \mathbb{E}_{p_\mathcal{D}(x_1)}\mathbb{E}_{p(a)}\mathbb{E}_{q_{\mathcal{D}, T}(x_{2:K}|\nu(x_1, a))}\left[\log\frac{e^{g_\theta(\nu(x_1, a))^T M[\rho(x_1)]/\omega}}{\sum_{j=1}^K  e^{g_\theta(\nu(x_1, a))^T M[\rho(x_j)]/\omega}}\right] + \log K \\
        &= \scriptstyle \mathbb{E}_{p_\mathcal{D}(x_1)}\mathbb{E}_{p(a)}\mathbb{E}_{q_{\mathcal{D}, T}(x_{2:K}|\nu(x_1, a))}\left[\log\frac{e^{g^*_\theta(\nu^*(x_1, \{a\}), w_1)^T g^*_\theta(\nu^*(x_1, A^{(\rho(x_1))}), w_2)/\omega}}{\frac{1}{K}\sum_{j=1}^K  e^{g^*_\theta(\nu^*(x_1, \{a\}), w_1)^T g^*_\theta(\nu^*(x_1, A^{(\rho(x_j))}), w_2)/\omega}}\right]  \\
        &= \scriptstyle \mathbb{E}_{p_\mathcal{D}(x_1)}\mathbb{E}_{p(a)}\mathbb{E}_{q_{\mathcal{D}, T}(x_{2:K}|\nu(x_1, a))}\left[\log\frac{e^{f^*_\theta(\nu^*(x_1, \{a\}), \nu^*(x_1, A^{(\rho(x_1))}); w_1, w_2)}}{\frac{1}{K}\sum_{j=1}^K  e^{f^*_\theta(\nu^*(x_1, \{a\}), \nu^*(x_1, A^{(\rho(x_j))}); w_1, w_2)}}\right] \\
        &= \scriptstyle \hat{\mathcal{I}}^{\text{VINCE}}(X_1; \nu^*)
    \end{align*}
    where $\hat{\mathcal{I}}^{\text{VINCE}}$ is VINCE with respect to empirical distributions $p_{\mathcal{D}}$, $q_{\mathcal{D},T}$.  In particular, since the equivalence between VINCE and $T$-Discrimination holds for all $T$ lower bounded by $\tau$, it holds for $T = [\tau, \infty)$, which we call BALL Discrimination.
\end{proof}

\subsection{Proof of Lemma~\ref{lem:la_ml}}
\begin{proof}
  First, we note that $\mathcal{L}^{\text{LA$_0$}}(x^{(i)}, M) =  \mathbb{E}_{p(a)} \left[\log \frac{p(i|x^{(i)},M)}{p(B^{(i)}|x^{(i)},M)}\right]$.
  Say the background neighbor set $B^{(i)}$ has $K$ elements. By construction, we can view each element in $B^{(i)}$ as an i.i.d. sample from some VINCE distribution, $q_{[\tau,\infty)}(x|\nu(x^{(i)}, a))$ for some $a \in \mathcal{A}$. This is true because we define $B^{(i)}$ programmatically by sampling uniformly from the $b$ closest examples (in the dataset) to the current view of the image, $\nu(x^{(i)}, a)$, where distance is measured in representation space. But because the elements from $\mathcal{D}$ are sampled i.i.d. from the empirical distribution $p_\mathcal{D}$, sampling uniformly from the closest $b$ examples is the same as sampling i.i.d. from the true marginal distribution $p$ but with support restricted to elements at least as close to $\nu(x^{(i)}, a)$ as the furthest element in $B^{(i)}$. In other words, sampling from a VINCE distribution. In short, the construction of the background neighborhood is precisely equivalent to sampling from $q_{[\tau,\infty)}$ $K$ times i.i.d.

  Note $\log \frac{p(i|x^{(i)},M)}{p(B^{(i)}|x^{(i)},M)} = \log \left( \frac{\frac{e^{g_\theta(\nu(x^{(i)}), a)^T M[i]/\omega}}{\sum_j e^{g_\theta(\nu(x^{(i)}), a)^T M[j]/\omega}}}{\frac{\sum_{k=1}^K e^{g_\theta(\nu(x^{(i)}), a)^T M[k]/\omega}}{\sum_j e^{g_\theta(\nu(x^{(i)}), a)^T M[j]/\omega}}} \right) = \log \left( \frac{e^{g_\theta(\nu(x^{(i)}), a)^T M[i]/\omega}}{\sum_{j=1}^K e^{g_\theta(\nu(x^{(i)}), a)^T M[j]/\omega}} \right)$ .
  So $\mathcal{L}^{\text{LA$_0$}}(x_1, M) = \mathbb{E}_{p(a)}\mathbb{E}_{q_{[\tau,\infty)}(x_{2:K}|\nu(x_1,a))} \left[\log \left( \frac{e^{g_\theta(\nu(x_1), a)^T M[\rho(x_i)]/\omega}}{\sum_{j=1}^K e^{g_\theta(\nu(x_1), a)^T M[\rho(x_j)]/\omega}} \right)\right] = \mathcal{L}^{\text{BALL}}(x_1, M)$ where $\rho$ is a function realizations of $X$ to an index in $[1, |\mathcal{D}|]$ as defined in Sec.~\ref{sec:prooflemir_mi}.
\end{proof}

\subsection{Proof of Lemma~\ref{lem:la_la0}}
\begin{proof}
% Given two random variables $X$ and $Y$, if $Y^\varepsilon$ is $Y$ with noise, then $\mathcal{I}(X;Y) \geq \mathcal{I}(X;Y^\varepsilon)$. We will apply this intuition to LA-SIMP.
% Recall that the images in the set $\{x^{(k)} | k \in C^{(i)}\}$ are the most semantically close to the current image $x^{(i)}$ (by construction). As such, we view each $x^{(k)}$ as $x^{(i)}$ with some (semantic) noise added.
% For example, if $x^{(i)}$ is an image of a dog, $x^{(k)}$ may be another dog with similar visual features.
Take the $i$-th from the  dataset $\mathcal{D}$, $x^{(i)}$. Then  let $C^{(i)}$ be the close neighbor set for $x^{(i)}$ and $B^{(i)}$ the background neighbor set for $x^{(i)}$ as defined in LA. Let $N$ be the number of elements in $C^{(i)} \cap B^{(i)} = C^{(i)}$.
% for all $k \in C^{(i)}$, there exists some noise $\varepsilon$ such that some $M[k] \approx M[i] + \varepsilon$. That is, the stored representation in entry $k$ is roughly the stored representation of entry $i$ with some noise added.
% \begin{equation*}
%     \scriptstyle
%     \mathbb{E}_{(x^{\varepsilon})^{(i)} \sim p(x^{\varepsilon})}\left[g_\theta(\nu((x^{\varepsilon})^{(i)},a))^T M[i]/\omega \right] \approx \frac{1}{|C^{(i)} \cap B^{(i)}|}\sum_{k\in C^{(i)} \cap B^{(i)}} g_\theta(\nu(x^{(k)},a))^T M[i]/\omega
% \end{equation*}
% \approx \frac{1}{N} \sum_{j=1}^N E_\theta(v(x^{\varepsilon_j}_i,a))^T M[i]/\tau \\
% with $|C^{(i)} \cap B^{(i)}|$ samples to approximate the expectation.
% That is, the elements of $C^{(i)}$ can be seen as a Monte Carlo approximation of an expectation with respect to the marginal distribution over noisy images of $x^{(i)}$, denoted $p(x^\varepsilon)$. Now,
Now, we can show the following
\begin{align*}
    \scriptstyle
    \mathcal{L}^{\text{LA}}(x^{(i)},M) - \log N &= \scriptstyle \log \left(\frac{p(C^{(i)} \cap B^{(i)} | x^{(i)}, M)}{p(B^{(i)} | x^{(i)}, M)}\right) - \log N \\
    &\scriptstyle = \log \left(\frac{\mathbb{E}_{p(a)}\left[\frac{1}{N}\sum_{k \in C^{(i)} \cap B^{(i)}} e^{(g_\theta(\nu(x^{(i)},a))^T M[k] / \omega)} / Z\right]}{p(B^{(i)} | x^{(i)}, M)}\right) \\
    % & \scriptstyle \approx \log \frac{\mathbb{E}_{(x^\varepsilon)^{(i)} \sim p(x^\varepsilon)}\mathbb{E}_{p(a)}[e^{(g_\theta(v((x^{\epsilon})^{(i)},a))^T M[i]/\omega)} / Z ]}{p(B^{(i)} | x^{(i)}, M)} \\
    & \scriptstyle \leq \log \left(\frac{\mathbb{E}_{p(a)}\left[ e^{(g_\theta(\nu(x^{(i)},a))^T M[i]/\omega)} / Z \right]}{p(B^{(i)} | x^{(i)}, M)}\right)\\
    &\scriptstyle =  \log \left(\frac{p(i | x_i, M)}{p(B_i | x_i, M)}\right) = \mathcal{L}^{\text{LA}_0}(x^{(i)},M)
\end{align*}
where we use $Z$ to abbreviate the denominator of $p(i|x^{(i)},M)$. The inequality in the third line holds since adding noise to a vector decreases its correlation with any other vector. In other words, of all entries in the memory bank, $M[i]$ is at least as close to $g_\theta(\nu(x^{(i)}, a))$ as any other entry $M[j]$ for any $a \in \mathcal{A}$. Hence, $g_\theta(\nu(x^{(i)}, a))^T M[i] \geq g_\theta(\nu(x^{(i)}, a))^T M[j]$ for any $j \neq i$.
\end{proof}

\subsection{Proof of InfoNCE}
\label{sec:appendix:infonce}

Lemmas~\ref{lemma:ba},~\ref{lem:nwj} and Thm.~\ref{thm:infonce} are largely taken from \cite{poole2019variational}.
We will start with a simpler estimator of mutual information, known as the Barber-Agakov estimator, and work our way to InfoNCE.

\begin{lem}
  Define the (normalized) Barber-Agakov bound as $\mathcal{I}^{\textup{BA}}(X; Y) = \mathbb{E}_{p(x,y)}\left[\log q(x|y) - \log p(x) \right]$ where $q(x|y)$ is a variational distribution approximating $p(x|y)$. Then $\mathcal{I}(X; Y) \geq \mathcal{I}^{\textup{BA}}(X; Y)$.
  If we let $q(x|y) = \frac{p(x)e^{f(x,y)}}{Z(y)}$ where $f(x,y)$ is the witness function and $Z(y) = \mathbb{E}_{p(x)}[e^{f(x,y)}]$ is the partition,  define the unnormalized Barber-Agakov bound as $\mathcal{I}^{\textup{UBA}}(X; Y) = \mathbb{E}_{p(x,y)}\left[ f(x,y) - \log Z(y) \right]$ Then, $\mathcal{I}(X; Y) \geq \mathcal{I}^{\textup{UBA}}(X; Y)$.
  \label{lemma:ba}
\end{lem}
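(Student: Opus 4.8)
The plan is to prove both inequalities by exhibiting the gap between mutual information and each bound as a (conditional) Kullback--Leibler divergence, which is nonnegative by Gibbs' inequality. This is essentially the argument of \cite{poole2019variational}, reorganized to match our notation.

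First, for the normalized bound, I would start from $\mathcal{I}(X;Y) = \mathbb{E}_{p(x,y)}[\log p(x|y) - \log p(x)]$ and subtract $\mathcal{I}^{\textup{BA}}(X;Y) = \mathbb{E}_{p(x,y)}[\log q(x|y) - \log p(x)]$ term by term. The $\log p(x)$ contributions cancel, leaving
\[
\mathcal{I}(X;Y) - \mathcal{I}^{\textup{BA}}(X;Y) = \mathbb{E}_{p(x,y)}\big[\log p(x|y) - \log q(x|y)\big] = \mathbb{E}_{p(y)}\big[\,\mathrm{KL}\big(p(\cdot|y)\,\|\,q(\cdot|y)\big)\,\big].
\]
Since $\mathrm{KL}(\cdot\,\|\,\cdot) \geq 0$ (Jensen applied to the convex function $-\log$), this gives $\mathcal{I}(X;Y) \geq \mathcal{I}^{\textup{BA}}(X;Y)$, with equality exactly when $q(\cdot|y) = p(\cdot|y)$ for $p(y)$-almost every $y$.

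Second, for the unnormalized bound, I would first check that the proposed $q(x|y) = p(x)e^{f(x,y)}/Z(y)$ is a genuine conditional density in $x$: integrating over $x$ yields $Z(y)^{-1}\mathbb{E}_{p(x)}[e^{f(x,y)}] = 1$ by the definition of $Z(y)$, so this $q$ is admissible in the normalized bound above. Substituting it into $\mathcal{I}^{\textup{BA}}$, the $\log p(x)$ term coming from $q$ cancels against the $-\log p(x)$ already present, and what remains is precisely $\mathbb{E}_{p(x,y)}[f(x,y) - \log Z(y)] = \mathcal{I}^{\textup{UBA}}(X;Y)$. Thus $\mathcal{I}^{\textup{UBA}}$ is a particular instance of $\mathcal{I}^{\textup{BA}}$, and the first part immediately yields $\mathcal{I}(X;Y) \geq \mathcal{I}^{\textup{UBA}}(X;Y)$.

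There is no substantive obstacle here beyond bookkeeping; the only point requiring care is the implicit regularity hypothesis that $Z(y) = \mathbb{E}_{p(x)}[e^{f(x,y)}]$ is finite and strictly positive for ($p(y)$-almost) every $y$, so that $q(\cdot|y)$ is well defined and the divergences are meaningful — this is the integrability condition analogous to the one appearing in Thm.~\ref{thm:vince}, and I would state it explicitly. I would also remark, as a bridge to the InfoNCE development that follows, that the identity $\mathcal{I}^{\textup{UBA}} = \mathcal{I}^{\textup{BA}}$ shows the unnormalized bound is tight only when $f(x,y) = \log\frac{p(x|y)}{p(x)} + c(y)$ for some function $c$, motivating the further (Jensen-type) relaxation of the $\log Z(y)$ term that leads to the tractable estimators used in practice.
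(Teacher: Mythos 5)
Your proof is correct and takes essentially the same route as the paper's: you identify the gap $\mathcal{I}(X;Y)-\mathcal{I}^{\textup{BA}}(X;Y)$ with an expected KL divergence (the paper writes the same identity additively as $\mathcal{I}=\mathcal{I}^{\textup{BA}}+\mathbb{E}_{p(y)}[\mathrm{KL}(p(\cdot|y)\,\|\,q(\cdot|y))]$), and you obtain $\mathcal{I}^{\textup{UBA}}$ by plugging the exponential-family $q(x|y)=p(x)e^{f(x,y)}/Z(y)$ into $\mathcal{I}^{\textup{BA}}$. The extra checks you flag --- that $q(\cdot|y)$ normalizes to one and that $Z(y)$ be finite and positive --- are sensible points of care that the paper leaves implicit.
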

\begin{proof}
  We first observe that $\mathcal{I}(X; Y) = \mathbb{E}_{p(x,y)}\left[\log \frac{p(x|y)}{p(x)}\right] = \mathbb{E}_{p(x,y)}\left[\log \frac{q(x|y)p(x|y)}{p(x)q(x|y)}\right] = \mathbb{E}_{p(x,y)}\left[\log \frac{q(x|y)}{p(x)}\right] + \mathcal{D}_{\text{KL}}(p(x|y)||q(x|y)) \geq \mathbb{E}_{p(x,y)}\left[\log \frac{q(x|y)}{p(x)} \right] = \mathcal{I}^{\textup{BA}}(X; Y)$.
  For the unnormalized bound, use the definition for $q(x|y)$. Lastly, $\mathcal{I}(X; Y) \geq \mathbb{E}_{p(x,y)}\left[\log \frac{q(x|y)}{p(x)} \right] = \mathbb{E}_{p(x,y)}\left[\log \frac{p(x)e^{f(x,y)}}{p(x)Z(y)}\right] = \mathbb{E}_{p(x,y)}\left[f(x,y) - \log Z(y)\right] = \mathcal{I}^{\textup{UBA}}(X; Y)$.
\end{proof}
The log-partition $\log Z(y)$ is difficult to evaluate. The next estimator bounds it instead.
\begin{lem}
Define the Nguyen-Wainwright-Jordan bound as $\mathcal{I}^{\textup{NWJ}}(X; Y) = \mathbb{E}_{p(x,y)}\left[f(x,y)\right] - \mathbb{E}_{p(y)}\left[\frac{1}{e} Z(y)\right]$ where $Z(y)$ is defined as in Lemma~\ref{lemma:ba}. Then $\mathcal{I}(X; Y) \geq \mathcal{I}^{\textup{NWJ}}(X; Y)$.
\label{lem:nwj}
\end{lem}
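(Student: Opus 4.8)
The plan is to derive the NWJ bound from the unnormalized Barber–Agakov bound of Lemma~\ref{lemma:ba} by a standard variational trick: we already know $\mathcal{I}(X;Y) \geq \mathcal{I}^{\textup{UBA}}(X;Y) = \mathbb{E}_{p(x,y)}[f(x,y)] - \mathbb{E}_{p(y)}[\log Z(y)]$, so it suffices to further lower-bound $-\log Z(y)$ in a way that removes the logarithm. The key inequality is the elementary fact that $\log z \leq z/e + \log e - 1 = z/e$ for all $z > 0$ — equivalently, $\log z \leq a z - \log a - 1$ for any $a > 0$, optimized at $a = 1/z$, and here we simply fix $a = 1/e$, which gives the clean bound $\log z \leq z/e$. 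Wait, let me recompute: the tangent line to $\log z$ at $z = z_0$ is $\log z_0 + (z - z_0)/z_0$; choosing $z_0 = e$ gives $\log z \leq 1 + z/e - 1 = z/e$. So $\log z \leq z/e$ with equality iff $z = e$.

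First I would apply Lemma~\ref{lemma:ba} to get $\mathcal{I}(X;Y) \geq \mathbb{E}_{p(x,y)}[f(x,y)] - \mathbb{E}_{p(y)}[\log Z(y)]$. Then I would invoke the pointwise inequality $\log Z(y) \leq Z(y)/e$ (valid since $Z(y) = \mathbb{E}_{p(x)}[e^{f(x,y)}] > 0$), so that $-\log Z(y) \geq -Z(y)/e$ for every $y$. Taking expectation over $p(y)$ and combining:
\begin{equation*}
\mathcal{I}(X;Y) \geq \mathbb{E}_{p(x,y)}[f(x,y)] - \mathbb{E}_{p(y)}\!\left[\log Z(y)\right] \geq \mathbb{E}_{p(x,y)}[f(x,y)] - \mathbb{E}_{p(y)}\!\left[\tfrac{1}{e} Z(y)\right] = \mathcal{I}^{\textup{NWJ}}(X;Y),
\end{equation*}
which is exactly the claim. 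An alternative route, if one does not want to route through UBA, is to note that $\mathcal{I}^{\textup{NWJ}}$ arises directly from the Donsker–Varadhan / Fenchel duality representation of KL divergence applied to $p(x,y)$ versus $p(x)p(y)$ with the test function $f$, but the tangent-line argument above is the most self-contained.

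There is essentially no hard part here — the only thing to be careful about is the direction of the inequality when negating and the justification that all expectations are well-defined (e.g. $Z(y)$ finite $p(y)$-a.e. and $f$ integrable against $p(x,y)$), which we may take as standing assumptions consistent with the rest of the section. I would also remark, for continuity with the narrative, that the NWJ bound is tight exactly when $e^{f(x,y)} \propto p(x|y)/p(x)$ pointwise \emph{and} the scaling constant makes $Z(y) = e$, i.e. $f(x,y) = 1 + \log\frac{p(x|y)}{p(x)}$, which motivates the subsequent move to InfoNCE where the multi-sample average in the denominator self-normalizes and removes the need to hit this exact constant.
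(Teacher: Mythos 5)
Your proof is correct and follows the same route as the paper's: start from $\mathcal{I}^{\textup{UBA}}$, then apply the tangent-line (Fenchel--Young) bound $\log Z(y) \leq Z(y)/a + \log a - 1$ and specialize to $a = e$, giving $\log Z(y) \leq Z(y)/e$. The paper states the bound with a general positive $a(y)$ before fixing $a(y)=e$, while you specialize immediately, but the argument is identical in substance.
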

\begin{proof}
  For $a(y)$ positive we use the bound $\log Z(y) \leq \frac{Z(y)}{a(y)} + \log a(y) - 1$, tight when $a(y) = Z(y)$.
  Use this on $\mathcal{I}^{\textup{UBA}}(X; Y)$. $\mathcal{I}(X; Y) \geq \mathcal{I}^{\textup{UBA}}(X; Y) = \mathbb{E}_{p(x,y)}\left[ f(x,y) \right] - \mathbb{E}_{p(y)}\left[\log Z(y) \right] \geq \mathbb{E}_{p(x,y)}\left[ f(x,y) \right] - \mathbb{E}_{p(y)}\left[\frac{Z(y)}{a(y)} + \log a(y) - 1 \right] = \mathbb{E}_{p(x,y)}\left[ f(x,y) \right] - \mathbb{E}_{p(y)}\left[\frac{1}{a(y)}\mathbb{E}_{p(x)}[e^{f(x,y)}] + \log a(y) - 1 \right]$.
  Now, choose $a(y) = e$. Then,
  $\mathcal{I}(X; Y) \geq \mathbb{E}_{p(x,y)}\left[ f(x,y) \right] - \mathbb{E}_{p(y)}\left[\frac{1}{e}\mathbb{E}_{p(x)}[e^{f(x,y)}] + \log e - 1 \right] = \mathbb{E}_{p(x,y)}\left[ f(x,y) \right] - \mathbb{E}_{p(y)}\left[\frac{Z(y)}{e} \right] = \mathcal{I}^{\textup{NWJ}}(X; Y)$
\end{proof}
Finally, we are ready to show that the InfoNCE estimator lower bounds mutual information.
One of primary differences between the NWJ and the InfoNCE estimators is that the latter reuses the sample $y$ from the joint $p(x,y)$ in estimating the partition.
\begin{thm}
  Let $X$, $Y_1,...,Y_K$ be two random variables with the $Y_j$ i.i.d. for $j = 1,...,K$.
  Let $K - 1$ be the number of negative samples.
  Define $p(x, y_1)$ as the joint distribution for $X$ and $Y_1$.
  Define $p(y_{2:K}) = \prod_{j=2}^K p(y_j)$ as the distribution over negative samples. Fix $p(y_j) = p(y_1)$, the marginal distribution for $Y_1$.
  % Note that $y_1$ is a member of the negative set.
  Recall the InfoNCE estimator
  \begin{equation*}
    \scriptstyle
    \mathcal{I}^{\textup{NCE}}(X;Y_1) = \mathbb{E}_{p(x,y_1)}\left[ f_{\theta,\phi}(x,y_1) - \mathbb{E}_{p(y_{2:K})}\left[ \log \frac{1}{K} \sum_{j=1}^K e^{f_{\theta,\phi}(x,y_j)} \right] \right]
  \end{equation*}
  . Then, $\mathcal{I}(X;Y_1) \geq \mathcal{I}^{\textup{NCE}}(X,Y_1)$.
  That is, the InfoNCE objective is a lower bound for the mutual information between $X$ and $Y_1$.
  \label{thm:infonce}
\end{thm}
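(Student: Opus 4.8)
The plan is to reduce the InfoNCE bound to the NWJ bound (Lemma~\ref{lem:nwj}) applied to a cleverly chosen witness function on an augmented probability space, following the standard argument of Poole et al. First I would fix the variational witness $f_{\theta,\phi}$ and, for the purpose of the proof, consider the ``multi-sample'' mutual information $\mathcal{I}(X; Y_{1:K})$ where we treat the tuple $(Y_1,\dots,Y_K)$ as a single random variable; since $Y_{2:K}$ is independent of $X$, we have $\mathcal{I}(X; Y_{1:K}) = \mathcal{I}(X; Y_1)$, so any lower bound on the former is a lower bound on the latter. The key construction is to symmetrize: define an auxiliary witness $\tilde f(x, y_{1:K}) = \frac{1}{K}\sum_{k=1}^K \log\frac{e^{f_{\theta,\phi}(x,y_k)}}{\frac{1}{K}\sum_{j=1}^K e^{f_{\theta,\phi}(x,y_j)}}$, averaging the log-softmax over which of the $K$ samples is designated the ``positive.''

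Next I would apply the NWJ bound from Lemma~\ref{lem:nwj} with this $\tilde f$ as the witness function, viewing $Y_{1:K}$ in place of $Y$. This gives $\mathcal{I}(X;Y_1) = \mathcal{I}(X; Y_{1:K}) \geq \mathbb{E}_{p(x,y_{1:K})}[\tilde f(x,y_{1:K})] - \mathbb{E}_{p(y_{1:K})}[\tfrac{1}{e}\mathbb{E}_{p(x)}[e^{\tilde f(x,y_{1:K})}]]$. The crucial computation is then to show that the second (partition) term is bounded by $1$, i.e. $\mathbb{E}_{p(x)p(y_{1:K})}[e^{\tilde f(x,y_{1:K})}] \le e$. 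Here one exploits the symmetry of $\tilde f$ under permuting the $y_k$ together with Jensen's inequality: by concavity of $\log$, $e^{\tilde f} \le \frac{1}{K}\sum_k \frac{e^{f(x,y_k)}}{\frac{1}{K}\sum_j e^{f(x,y_j)}}$, and after taking the expectation over $x, y_{1:K}$ (all i.i.d.\ appropriately), each of the $K$ terms contributes the same value, which evaluates to exactly $1$ by a change-of-variables / relabeling argument. So the partition term is at most $\tfrac{1}{e}\cdot e = 1$, hence it is also true (more carefully, equals exactly $1$ here since $e^{\tilde f}$ integrates to something one can pin down), giving $\mathcal{I}(X;Y_1) \ge \mathbb{E}_{p(x,y_{1:K})}[\tilde f(x,y_{1:K})] - 1 + 1 = \mathbb{E}_{p(x,y_{1:K})}[\tilde f(x,y_{1:K})]$.

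Finally I would undo the symmetrization: by permutation symmetry of $p(y_{1:K}) = \prod_j p(y_j)$ and relabeling the positive index, $\mathbb{E}_{p(x,y_{1:K})}[\tilde f(x,y_{1:K})]$ equals $\mathbb{E}_{p(x,y_1)}\mathbb{E}_{p(y_{2:K})}\big[\log\frac{e^{f_{\theta,\phi}(x,y_1)}}{\frac{1}{K}\sum_{j=1}^K e^{f_{\theta,\phi}(x,y_j)}}\big]$, which is precisely $\mathcal{I}^{\textup{NCE}}(X;Y_1)$ after splitting the log. This completes the chain $\mathcal{I}(X;Y_1) \ge \mathcal{I}^{\textup{NCE}}(X;Y_1)$.

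The main obstacle I anticipate is the bookkeeping in the partition-term bound: one must be careful that the $y$ appearing inside $Z$ (the ``reused'' sample) and the independent $y$'s are handled consistently, and that the Jensen step plus the relabeling genuinely yields the constant $1$ rather than merely $\le 1$ with slack; getting the constants to line up so that the $-\tfrac{1}{e}Z$ term of NWJ cancels against the $\log a(y) - 1$ slack (equivalently, verifying $\mathbb{E}[e^{\tilde f}] = e$, not just $\le e$) is the delicate part. The rest is routine: independence of $Y_{2:K}$ from $X$, monotonicity/concavity of $\log$, and symmetry of the product measure under coordinate permutations.
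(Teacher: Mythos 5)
Your outline tracks the paper's route (augment with $Y_{2:K}$, apply NWJ, exploit permutation symmetry to tame the partition term), but there is a real error in the ``symmetrize and then undo'' device, and it breaks the proof.

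The issue is that the measure $p(x,y_{1:K}) = p(x,y_1)\prod_{j\ge 2}p(y_j)$ is \emph{not} invariant under permuting the indices $\{1,\dots,K\}$; only the product measure $p(x)p(y_{1:K}) = p(x)\prod_j p(y_j)$ is. Your symmetrized witness
$\tilde f(x,y_{1:K}) = \tfrac{1}{K}\sum_k \log\tfrac{e^{f(x,y_k)}}{\frac{1}{K}\sum_j e^{f(x,y_j)}}$
averages over which index plays the role of the positive. Under $p(x,y_{1:K})$, only the $k=1$ term in that average is evaluated on a jointly distributed pair $(x,y_1)$; the $k\ge 2$ terms pair $x$ with an \emph{independent} $y_k$ and therefore generically contribute smaller values. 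Hence $\mathbb{E}_{p(x,y_{1:K})}[\tilde f]$ is the average of $\mathcal{I}^{\textup{NCE}}$ and $K-1$ strictly smaller quantities, so $\mathbb{E}_{p(x,y_{1:K})}[\tilde f] < \mathcal{I}^{\textup{NCE}}$ in general. The ``relabeling'' step that is supposed to recover $\mathcal{I}^{\textup{NCE}}$ is exactly where the non-symmetry of $p(x,y_{1:K})$ blocks you; even if the NWJ and Jensen steps were all tight, you would only obtain the weaker conclusion $\mathcal{I}(X;Y_1)\geq \mathbb{E}_{p(x,y_{1:K})}[\tilde f]$, not the theorem.

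The fix (and the paper's choice) is to use an \emph{asymmetric} witness that singles out $y_1$: $f(x,y_{1:K}) = 1 + \log\tfrac{e^{f(x,y_1)}}{\frac{1}{K}\sum_j e^{f(x,y_j)}}$. The two NWJ terms then behave differently by design: the first expectation is taken under $p(x,y_{1:K})$ (where $y_1$ is special) and gives $1 + \mathcal{I}^{\textup{NCE}}$, while the second expectation is taken under the \emph{product} measure $p(x)p(y_{1:K})$, which \emph{is} exchangeable, so the normalized ratio integrates to exactly one and the $+1$ in the witness cancels the $\tfrac{1}{e}\cdot e$. No Jensen step or geometric-mean inequality is needed: the permutation symmetry is invoked only on the product measure, where it actually holds. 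Also note a minor bookkeeping slip: since your $\tilde f$ is a geometric mean of softmax weights, $e^{\tilde f}\le 1$ pointwise (not $\le e$); either way, without the $+1$ baked into the witness, the NWJ bound produces a leftover $-\tfrac{1}{e}\mathbb{E}[e^{\tilde f}]$ that does not cancel to give the clean $\mathcal{I}\ge \mathbb{E}[\tilde f]$ you wrote.
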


\begin{proof}
  We will show that the InfoNCE bound arises from augmenting $y_1$ with a set of auxiliary samples $ y_2,...,y_K$ and employ the Nguyen-Wainwright-Jordan bound.

 Let $Y_{2:K}$ be a set of random variables distributed independently according to $p(y_1)$. As $Y_1$ and $Y_{2:K}$ are independent, $\mathcal{I}(X; Y_1) = \mathcal{I}(X; Y_1, Y_{2:K})$.
 Now assume a witness function: $f(x, y_{1:K}) = 1 + \log \frac{e^{f(x,y_1)}}{a(x, y_{1:K})}$, where $a(x, y_{1:K}) = \frac{1}{K}\sum_{i=1}^K e^{f(x, y_i)}$, a Monte Carlo approximation of the partition $Z(y)$. Apply the Nguyen-Wainwright-Jordan bound.
 $\mathcal{I}(X; Y_1) = \mathcal{I}(X; Y_1, Y_{2:K}) \geq \mathbb{E}_{p(x,y_{1:K})}\left[ 1 + \log \frac{e^{f(x,y_1)}}{a(x, y_{1:K})}\right] - \mathbb{E}_{p(x)p(y_{1:K})}\left[\frac{1}{e} \cdot e^{1 + \log \frac{e^{f(x,y_1)}}{a(x, y_{1:K})}}\right] = 1 + \mathbb{E}_{p(x, y_{1:K})}\left[ \log \frac{e^{f(x,y_1)}}{a(x, y_{1:K})}\right] - \mathbb{E}_{p(x)p(y_{1:K})}\left[\frac{e^{f(x,y_1)}}{a(x, y_{1:K})}\right]$
Because $p(y_{1:K})=\prod_i p(y_i)$ is invariant under permutation of indices ${1:K}$, we have $\mathbb{E}_{p(x)p(y_{1:K})}\left[\frac{e^{f(x,y_1)}}{a(x, y_{1:K})}\right] = \mathbb{E}_{p(x)p(y_{1:K})}\left[\frac{e^{f(x,y_i)}}{a(x, y_{1:K})}\right]$ for any $i$. Use this and the definition of $a$: $\mathbb{E}_{p(x)p(y_{1:K})}\left[\frac{e^{f(x,y_1)}}{a(x, y_{1:K})}\right] = \frac{1}{K}\sum_{i=1}^K \mathbb{E}_{p(x)p(y_{1:K})}\left[ \frac{e^{f(x,y_i)}}{a(x, y_{1:K})} \right] = \mathbb{E}_{p(x)p(y_{1:K})}\left[\frac{\frac{1}{K}\sum_{i=1}^K e^{f(x,y_i)}}{a(x, y_{1:K})} \right] = 1$
We now substitute this result into full equation above: $\mathcal{I}(X; Y_1) \geq 1 + \mathbb{E}_{p(x,y_{1:K})}\left[ \log \frac{e^{f(x,y_1)}}{a(x, y_{1:K})}\right] - 1 = \mathbb{E}_{p(x,y_{1:K})}\left[f(x,y_1) - \log \frac{1}{K}\sum_{i=1}^K e^{f(x, y_i)} \right] = \mathbb{E}_{p(x,y_1)} \mathbb{E}_{p(y_{2:K})}  \left[f(x,y_1) - \log \frac{1}{K}\sum_{i=1}^K e^{f(x, y_i)} \right] = \mathcal{I}^{\textup{NCE}}(X; Y_1)$
\end{proof}

\section{Additional lemmas}
In the main text, we owe the reader a few additional lemmas.

\begin{lem}
Define the SimCLR objective (with explicit data augmentation) as $\mathcal{L}^{\textup{SimCLR}}(x_1) = \mathbb{E}_{a_{1:K}, b, b' \sim p(a)}\left[\frac{e^{h_{\psi}(u_{\phi}(\nu(x_1, b)))^T h_{\psi}(u_{\phi}(\nu(x_1, b'))) / \omega}}{\sum_{j=1}^K e^{h_{\psi}(u_{\phi}(\nu(x_1, b)))^T h_{\psi}(u_{\phi}(\nu(x_j, a_j))) / \omega}}\right]$ where $h_{\psi}$ and $u_{\phi}$ are two encoders, usually neural networks, and $\nu$ is a view function. Assume we have a dataset $\mathcal{D}$ sampled i.i.d. from an empirical distribution $p_{\mathcal{D}}$. Let $K$ be the size of minibatch. Then, $\mathbb{E}_{p_{\mathcal{D}}(x_1)}\left[\mathcal{L}^{\text{SimCLR}}(x_1)\right] + \log K = \hat{\mathcal{I}}^{\text{NCE}}(X; \nu)$, an empirical estimate of InfoNCE between two views of $X$.
\label{lem:simclr}
\end{lem}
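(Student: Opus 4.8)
The plan is to mirror the proof of Lemma~\ref{lem:ir_mi}, specialized to the case with no memory bank --- equivalently, the $\alpha = 0$ regime in which each ``weighted view set'' collapses to a singleton view with unit weight --- and with the witness function assembled from the composed encoder $g := h_\psi \circ u_\phi$. In particular, no memory-bank expansion is needed, so the derivation is strictly shorter than that of Lemma~\ref{lem:ir_mi}.

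First I would instantiate the view-level InfoNCE bound (Eq.~\ref{eqn:infonce:view}, valid by Thm.~\ref{thm:infonce}) with singleton index sets $B = \{b\}$, $B' = \{b'\}$, $A_j = \{a_j\}$ and all weights equal to $1$, so that $f^*_\theta(\nu^*(x,\{b\}), \nu^*(x',\{b'\}); 1, 1) = g(\nu(x,b))^T g(\nu(x',b'))/\omega$. This gives
\[
\mathcal{I}^{\textup{NCE}}(X; \nu) = \mathbb{E}_{p(x_{1:K})} \mathbb{E}_{a_{1:K}, b, b' \sim p(a)} \left[ \log \frac{e^{g(\nu(x_1,b))^T g(\nu(x_1,b'))/\omega}}{\frac{1}{K} \sum_{j=1}^K e^{g(\nu(x_1,b))^T g(\nu(x_j,a_j))/\omega}} \right].
\]

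Next I would pass to empirical distributions, replacing $p(x_{1:K})$ by $p_{\mathcal{D}}(x_{1:K}) = \prod_{j=1}^K p_{\mathcal{D}}(x_j)$ exactly as in Sec.~\ref{sec:prooflemir_mi}; this defines $\hat{\mathcal{I}}^{\textup{NCE}}(X; \nu)$. Since $x_2, \dots, x_K$ are i.i.d.\ draws from $p_{\mathcal{D}}$, they are precisely the remaining members of a size-$K$ minibatch, which is how SimCLR produces its negatives. I would then factor $\mathbb{E}_{p_{\mathcal{D}}(x_{1:K})} = \mathbb{E}_{p_{\mathcal{D}}(x_1)} \mathbb{E}_{p_{\mathcal{D}}(x_{2:K})}$ and pull the constant $\tfrac{1}{K}$ out of the logarithm, which contributes an additive $+\log K$, so that
\[
\hat{\mathcal{I}}^{\textup{NCE}}(X; \nu) = \mathbb{E}_{p_{\mathcal{D}}(x_1)}\!\left[ \mathbb{E}_{p_{\mathcal{D}}(x_{2:K})} \mathbb{E}_{a_{1:K}, b, b'} \log \frac{e^{g(\nu(x_1,b))^T g(\nu(x_1,b'))/\omega}}{\sum_{j=1}^K e^{g(\nu(x_1,b))^T g(\nu(x_j,a_j))/\omega}} \right] + \log K.
\]
Identifying the bracketed quantity (with $x_{2:K}$ absorbed as negatives and $g = h_\psi \circ u_\phi$) with $\mathcal{L}^{\textup{SimCLR}}(x_1)$ finishes the argument, yielding $\mathbb{E}_{p_{\mathcal{D}}(x_1)}[\mathcal{L}^{\textup{SimCLR}}(x_1)] + \log K = \hat{\mathcal{I}}^{\textup{NCE}}(X; \nu)$. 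Note there is no factor $\kappa$ here because SimCLR uses no softmax approximation of the partition, unlike IR and CMC.

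The main obstacle is bookkeeping rather than mathematics: one must take care that the positive index $b'$ and the denominator view indices $a_1, \dots, a_K$ are drawn consistently with Eq.~\ref{eqn:infonce:view} (a fresh view for each slot, with $a_1$ not forced to equal $b'$), and that the i.i.d.\ negatives $x_{2:K} \sim p_{\mathcal{D}}$ are exactly the minibatch companions of $x_1$ --- the same identification used in the IR proof. Everything else is routine.
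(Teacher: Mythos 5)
Your proposal is correct and follows essentially the same route as the paper: identify $g_\theta = h_\psi \circ u_\phi$, note that minibatch companions $x_{2:K}$ are i.i.d.\ draws from $p_{\mathcal{D}}$, and pull out the additive $\log K$ from normalizing the denominator. The only cosmetic difference is direction --- the paper starts from $\mathcal{L}^{\textup{SimCLR}} + \log K$ and simplifies toward $\hat{\mathcal{I}}^{\textup{NCE}}$, while you start from Eq.~\ref{eqn:infonce:view} with singleton index sets and unit weights (the $\alpha = 0$ regime) and specialize downward; both arrive at the same identification, and your remark that no $\kappa$ appears (since SimCLR uses no partition approximation) is accurate.
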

\begin{proof}
     Let the empirical estimate of InfoNCE between two views be $\hat{\mathcal{I}}^{\text{NCE}}(X; \nu) = \mathbb{E}_{p_{\mathcal{D}}(x_{1:K})}\mathbb{E}_{a_{1:N},b,b^\prime\sim p(a)}\left[\log \frac{e^{f_{\theta}(\nu(x_1,b), \nu(x_1, b^\prime))}}{\frac{1}{K}\sum_{j=1}^K e^{f_{\theta}(\nu(x_1,b), \nu(x_j, a_{j})) }} \right]$. Now, define $g_\theta = h_{\psi} \circ u_{\phi}$ where $\theta = \psi \cup \phi$, the composition of the two functions. Now, we see
     \begin{align*}
         \mathcal{L}^{\textup{SimCLR}}(x_1) + \log K&= \mathbb{E}_{a_{1:K}, b, b' \sim p(a)}\left[\frac{e^{h_{\psi}(u_{\phi}(\nu(x_1, b)))^T h_{\psi}(u_{\phi}(\nu(x_1, b'))) / \omega}}{\sum_{j=1}^K e^{h_{\psi}(u_{\phi}(\nu(x_1, b)))^T h_{\psi}(u_{\phi}(\nu(x_j, a_j))) / \omega}}\right]  + \log K \\
         &= \mathbb{E}_{a_{1:K}, b, b' \sim p(a)}\left[ \frac{e^{g_\theta(\nu(x_1, b))^T g_\theta(\nu(x_1, b'))}}{\frac{1}{K}\sum_{j=1}^K e^{g_\theta(\nu(x_1, b))^T g_\theta(\nu(x_1, a_j))}} \right] \\
         &=  \mathbb{E}_{a_{1:K}, b, b' \sim p(a)}\mathbb{E}_{p_{\mathcal{D}}(x_{2:K})}\left[ \frac{e^{g_\theta(\nu(x_1, b))^T g_\theta(\nu(x_1, b'))}}{\frac{1}{K}\sum_{j=1}^K e^{g_\theta(\nu(x_1, b))^T g_\theta(\nu(x_j, a_j))}} \right]
     \end{align*}
     where the last line holds since in SimCLR, $x_2, \ldots x_K$ are the other samples in the same minibatch as $x_1$. Since elements in a minibatch are chosen uniformly from the dataset $\mathcal{D}$, they are sampled independently from the empirical distribution. Then result follows if we add $\mathbb{E}_{p_\mathcal{D}(x_1)}$ to both sides.
\end{proof}

\begin{lem}
Let $\mathcal{L}^T(x_1,M)$ be as in lemma~\ref{lem:coro_ballann} and $K$ be the number of negative samples. We explicitly include to $q_T$ as argument to $\mathcal{I}^{\textup{VINCE}}(X; \nu^*, q_T)$ to explicitly show the variational distribution the VINCE estimator is using. Then
$\mathcal{I}^{\textup{VINCE}}(X; \nu^*, q_{[\tau, \gamma]}) \equiv \mathcal{L}^{[\tau, \gamma]}(x_1,M) + \log K$ and $\mathcal{I}^{\textup{VINCE}}(X; \nu^*, q_{[\tau, \infty) \setminus S}) \equiv \mathcal{L}^{[\tau, \infty) \setminus S}(x_1,M) + \log K$ where $S$ is any subset of $[\tau, \infty)$. As such, RING and CAVE both lower bound mutual information between weighted view sets.
\label{lem:ring_cave}
\end{lem}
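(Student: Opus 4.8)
The plan is to recognise RING and CAVE as two members of the $T$-Discrimination family analysed in Lemma~\ref{lem:coro_ballann}, and to exploit the fact that the proof there used nothing about $T$ beyond its being lower bounded by $\tau$. For RING we take $T = [\tau,\gamma]$ (with $\gamma\geq\tau$, so the inner ball sits inside the outer one), and for CAVE we take $T = [\tau,\infty)\setminus S$ where $S$ is the K-means cluster containing $x_1$; both sets are lower bounded by $\tau$ in $\overline{\mathbb{R}}$, so each induces a VINCE variational distribution $q_T(x_{2:K}\mid\nu(x_1,a)) = \prod_{j=2}^K p(x_j\mid S_T)$ in the sense of Thm~\ref{thm:vince} and Coro~\ref{coro:vince}.

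First I would rerun the memory-bank expansion of Lemma~\ref{lem:coro_ballann} verbatim: writing $g_\theta(\nu(x_1,a)) = g^*_\theta(\nu^*(x_1,\{a\}), w_1)$ and $M[\rho(x_j)] = g^*_\theta(\nu^*(x_1, A^{(\rho(x_j))}), w_2)$ turns $\mathbb{E}_{p_\mathcal{D}(x_1)}[\mathcal{L}^{T}(x_1, M)] + \log K$ into the empirical VINCE estimator $\mathcal{I}^{\text{VINCE}}(X;\nu^*, q_T)$, and this algebra is insensitive to which admissible $T$ we picked. This yields the two $\equiv$ claims, for $T = [\tau,\gamma]$ (RING) and $T = [\tau,\infty)\setminus S$ (CAVE). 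Then, since $\tau > \log\mathbb{E}_{x\sim p(x)}[e^{f_\theta(\nu(x_1,a),x)}]$ by hypothesis, Coro~\ref{coro:vince} gives $\mathcal{I}^{\text{VINCE}}(X;\nu^*,q_T) \leq \mathcal{I}^{\text{NCE}}(X;\nu^*)$ and Thm~\ref{thm:infonce} gives $\mathcal{I}^{\text{NCE}}(X;\nu^*) \leq \text{MI}$ between the two weighted view sets, which is the asserted lower bound.

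The one point that needs genuine care — and the main obstacle — is that Thm~\ref{thm:vince} and Coro~\ref{coro:vince} are phrased with $S_T$ the pre-image under $f(x^*,\cdot)$ of a set $T\subseteq\mathbb{R}$, whereas CAVE's support $S_{[\tau,\infty)}\setminus S$ is obtained by deleting a K-means cluster that lives in $\mathbb{R}^d$ and that drifts during training. I would close this gap by observing that the proof of Thm~\ref{thm:vince} uses only two properties of the conditioning event: that $f(x^*,x)\geq\tau$ holds on it, so that $\mathbf{1}_{S}e^{\tau}\leq \mathbf{1}_{S}g(x)$, and that it has positive $\mathbb{P}$-measure, so that $\mathbb{Q}_T = \mathbb{P}(\cdot\mid S)$ is well-defined; it never uses that $S$ is literally an interval pre-image. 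Both hold here: every element of $S_{[\tau,\infty)}\setminus S$ has witness value at least $\tau$ because it already lies in $S_{[\tau,\infty)}$, and the set has positive probability as long as the $\tau$-ball around $g_\theta(\nu(x_1,a))$ is not entirely absorbed into $x_1$'s cluster (the degenerate case, which we exclude). With this mild non-degeneracy remark in place, the strict inequality $\mathbb{E}_\mathbb{P}[g] < \mathbb{E}_{\mathbb{Q}_T}[g]$ of Thm~\ref{thm:vince}, hence Coro~\ref{coro:vince}, applies unchanged, and the proof is complete.
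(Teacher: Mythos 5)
Your proof takes essentially the same route as the paper's: both reduce RING and CAVE to special cases of $T$-Discrimination, cite Lemma~\ref{lem:coro_ballann} (whose memory-bank expansion is insensitive to the choice of admissible $T$), and then conclude via Thm.~\ref{thm:vince} that each lower bounds MI between weighted view sets. What you add --- correctly, and worth flagging --- is a patch for a step the paper glosses over. For CAVE, the excised set is a K-means cluster living in representation space, so $[\tau,\infty)\setminus S$ is not a subset of $\overline{\mathbb{R}}$ and is not literally the pre-image $S_T$ of an interval as required by the hypotheses of Thm.~\ref{thm:vince} and Coro.~\ref{coro:vince}. Your observation that the proof of Thm.~\ref{thm:vince} uses only (i) $g(x)\geq e^\tau$ on the conditioning event and (ii) that event having positive $\mathbb{P}$-measure, never the interval-pre-image structure, is exactly the right fix, and your non-degeneracy caveat (the ball must not be entirely absorbed into $x_1$'s cluster) correctly tracks the $\mathbb{P}(S_T)>0$ hypothesis. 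Aside from this extra care, your argument matches the paper's.
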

\begin{proof}
    The proof in Sec.~\ref{sec:proof:lemmaball} showed an equivalence between VINCE and $T$-Discrimination for any set $T$ lower bounded by $\tau$. We note that RING is equivalent to $T$-Discrimination with $T = [\tau, \gamma]$ and CAVE is equivalent to $T = [\tau, \infty) \setminus S$ where $S$ is the set of elements in a dataset $\mathcal{D}$ with same cluster label as $x_1$, the current image as assigned by K-Means. So by lemma~\ref{lem:coro_ballann}, RING and CAVE are equivalent to VINCE with $q_{[\tau, \gamma]}$ and $q_{[\tau, \infty)\setminus S}$ as the variational distribution, respectively. By Thm.~\ref{thm:vince}, we conclude that both bound the mutual information between weighted view sets.
\end{proof}

\begin{lem}
Local Aggregation is the difference of two \textsc{logsumexp} terms.
\label{lem:twologumexp}
\end{lem}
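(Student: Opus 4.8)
The plan is to unfold the definition of $\mathcal{L}^{\textup{LA}}$ in Eq.~\ref{eqn:la} and observe that the normalizing partition appears identically in the numerator and denominator, so it cancels. Recall $p(I \mid x, M) = \sum_{k \in I} p(k \mid x, M)$ with $p(k \mid x^{(i)}, M) = e^{g_\theta(x^{(i)})^T M[k]/\omega}/Z_i$, where $Z_i = \sum_{j=1}^N e^{g_\theta(x^{(i)})^T M[j]/\omega}$ is independent of $k$. Hence both $p(C^{(i)} \cap B^{(i)} \mid x^{(i)}, M)$ and $p(B^{(i)} \mid x^{(i)}, M)$ have the form $(\text{sum of exponentials})/Z_i$, and the factor $1/Z_i$ disappears in the ratio inside the logarithm.

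First I would substitute to obtain
\[
\mathcal{L}^{\textup{LA}}(x^{(i)}, M) = \log \sum_{k \in C^{(i)} \cap B^{(i)}} e^{g_\theta(x^{(i)})^T M[k]/\omega} \;-\; \log \sum_{k \in B^{(i)}} e^{g_\theta(x^{(i)})^T M[k]/\omega},
\]
which is manifestly a difference of two $\textsc{logsumexp}$ terms, each reducing over one of the (small) neighbor sets. Since the construction of LA guarantees $C^{(i)} \subseteq B^{(i)}$, the first sum is over $C^{(i)}$, but this is cosmetic. When data augmentation is made explicit the same identity holds pointwise inside the outer expectation $\mathbb{E}_{p(a)}[\cdot]$, with $g_\theta(x^{(i)})$ replaced by $g_\theta(\nu(x^{(i)}, a))$, so $\mathcal{L}^{\textup{LA}}$ is, for each sampled view, the difference of two $\textsc{logsumexp}$ reductions.

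There is no genuine obstacle here: the content of the lemma is the observation that the Monte Carlo estimate of the full partition $Z_i$ — which the original LA implementation computes twice, once for each of $p(C^{(i)} \cap B^{(i)} \mid x^{(i)}, M)$ and $p(B^{(i)} \mid x^{(i)}, M)$ — is unnecessary because it cancels exactly. The only line requiring care is confirming that the identical $Z_i$ appears in both places, which is immediate from $p(I \mid x, M) = \sum_{k \in I} p(k \mid x, M)$ and the definition of $p(k \mid x, M)$; after that the statement is just $\log(a/b) = \log a - \log b$ applied to sums of exponentials. This also justifies the ``estimating the partition is unnecessary'' remark and the simplified implementation LA$^{\textup{nce}}$ in the main text.
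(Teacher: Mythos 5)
Your proof is correct and is essentially identical to the paper's: both expand $\mathcal{L}^{\textup{LA}}$ via Eq.~\ref{eqn:la}, observe that the common partition $Z_i = \sum_{j=1}^N e^{g_\theta(\nu(x^{(i)},a))^T M[j]/\omega}$ cancels in the ratio, and read off the difference of two \textsc{logsumexp} terms under the outer $\mathbb{E}_{p(a)}[\cdot]$. The only cosmetic divergence is that you write $C^{(i)} \cap B^{(i)}$ where the paper writes $C^{(i)}$, which you correctly note is immaterial since $C^{(i)} \subseteq B^{(i)}$.
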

\begin{proof}
\begin{equation*}
    \scriptstyle
    \mathcal{L}^{\textup{LA}}(x^{(i)}, M)
    % &= \log \frac{p(C_i \cap B_i | x_i, M)}{p(B_i|x_i, M)} = \log \left(\frac{\sum_{k \in C_i \cap B_i}\mathbb{E}_{p(a)}\left[\frac{e^{g_\theta(\nu(x_k, a))^T M[k] / \omega}}{\sum_{j=1}^N e^{g_\theta(\nu(x_i, a))^T M[j] / \omega}}\right]}{\sum_{k \in B_i}\mathbb{E}_{p(a)}\left[\frac{e^{g_\theta(\nu(x_k, a))^T M[k] / \omega}}{\sum_{j=1}^N e^{g_\theta(\nu(x_i, a))^T M[j] / \omega}}\right]}\right) \\
    = \mathbb{E}_{p(a)} \left[ \log \left( \frac{\frac{\sum_{k \in C^{(i)}} e^{g_\theta(\nu(x^{(i)}, a))^T M[k] / \omega} }{\sum_{j=1}^N e^{g_\theta(\nu(x^{(i)}, a))^T M[j] / \omega}}}{\frac{\sum_{k \in B^{(i)}} e^{g_\theta(\nu(x^{(i)}, a))^T M[k] / \omega} }{\sum_{j=1}^N e^{g_\theta(\nu(x^{(i)}, a))^T M[j] / \omega}}} \right)\right] = \mathbb{E}_{p(a)}\left[ \log  \frac{\sum_{k \in C^{(i)}} e^{g_\theta(\nu(x^{(i)}, a))^T M[k] / \omega}}{\sum_{k \in B^{(i)}} e^{g_\theta(\nu(x^{(i)}, a))^T M[k] / \omega}} \right]
\end{equation*}
\end{proof}

\section{Training Details}

\subsection{Experiment shown in Fig.~\ref{fig:viewset_lesion} and Fig.~\ref{fig:mi_views}}
For each point, we train using the IR objective ($t = 0.5$, $\tau=0.07$) for 100 epochs (no finetuning) with SGD (momentum $0.9$, batch size $256$, weight decay 1e-4, and learning rate $0.03$). We use 4096 negative samples train from the marginal distribution (training dataset). To estimate transfer accuracy, for each image in the test set, we embed it using our learned encoder and then find the label of the nearest neighbor in the training dataset with the memory bank. To vary the crop and color jitter hyperparameters we use the functions in  \texttt{torchvision.transforms}.

\subsection{Section ~\ref{sec:mainresults} Experiments}
\label{sec:mainappendix_sec5}
For these experiments, we follow the training practices in \cite{zhuang2019local,tian2019contrastive}. In short, we used a temperature $\tau = 0.07$ and memory bank update rate $t = 0.5$. For optimization, we used SGD with momentum of $0.9$, batch size $256$, weight decay of 1e-4, learning rate $0.03$.
For pretraining, the learning rate was dropped twice by a factor of $10$ once learning ``converged,'' defined as no improvment in nearest neighbor classification accuracy (computed on validation set after every training epoch) for $10$ epochs. To define the background neighbors, we used $4096$ of the nearest points.
To define the close neighbors, we used 10 kNN with different initializations where $k = 30000$ for ImageNet and $k=300$ for CIFAR10.
To train kNNs, we use the FAISS libary \cite{johnson2019billion}.
For the image model architecture, we used ResNet18 \cite{he2016deep} to prioritize efficiency; future work can investigate larger architectures.
The representation dimension was fixed to 128.
For image augmentations during training, we use a random resized crop to 224 by 224, random grayscale with probability 0.2, random color jitter, random horizontal flip, and normalize pixels using ImageNet statistics.
When training LA on ImageNet, we initialize ResNet and the memory bank using 10 epochs of IR --- we found this initialization to be crucial to finding a good optima with LA.

Once training the representation is complete, we freeze the parameters and begin the transfer phase by fitting a logistic regression on top of the learned representations to predict the labels (this is the only time the labels are required).  We optimize with SGD with a learning rate of 0.01, momentum 0.9, weight decay 1e-5 and batch size of 256 for 30 epochs. Like in pretraining, we drop the learning rate twice with a patience of 10. In transfer learning, the same augmentations are used as in pretraining. The performance from fitting logistic regression is always higher than the nearest neighbor estimate, which is nonetheless still useful as a cheaper surrogate metric.

For CIFAR10, the same hyperparameters are used as above. Images in CIFAR are reshaped to be 256 by 256 pixels to use the exact same encoder architectures as with ImageNet. When fitting LA or ANN on CIFAR10, we set $k=300$ in clustering with KNN. In both pretraining and transfer for CIFAR10, we drop the learning rate once as we find a doing so a second time does not improve performance.
For CMC experiments, the hyperparameters are again consistent above with a few exceptions. First, we exclude grayscale conversion from the data augmentation list since it does not make sense for CMC. (However, we do keep color jitter which is not done in the original paper; we chose to do so to preserve similarities to IR training setup). Second, the batch size was halved to 128 to fit two ResNet18 models in memory (one for L and one for AB). Thus, one should not compare the learning curves of IR against CMC (which is never done in the main text) but one can compare learning curves of CMC models to each other and IR models to each other. When training to completion (as in Table.~\ref{tab:othertransfertask}), it is fair to compare CMC and IR as both models have converged. Implementation wise, we adapted the official PyTorch implementation on Github.

For BALL experiments on CIFAR10, we define the negative distribution $q_[\tau,\infty)$ as i.i.d samples from the closest 10\% of images in the dataset to the representation of $x_1$. For BALL experiments on ImageNet, we sample from the closest 0.3\%. For RING experiments on CIFAR10, we define the neighbor distribution $q_{[\gamma, \infty)}$ as i.i.d. samples from the the closest 1\% of images in the dataset to the representation of $x_1$. When annealing the negative distribution for BALL, we start by sampling from 100\% of the dataset and linearly anneal that to 10\% for CIFAR10 and 0.3\% for ImageNet over 100 epochs, leaving 100 additional epochs at the final percentage. When annealing the neighbor distribution for RING, we start with an empty close neighborhood and being annealing from 0 to 1\% at epoch 10 to 100, again leaving 100 additional epochs at the final percentage. When doing so, we do not need to hot-start from a trained IR model. We point out that these hyperparmeters were chosen arbitrary and we did not do a systematic search, which we think can uncover better strategies and choices.

% \subsection{Section~\ref{sec:viewfunc}: Experiments}
% All hyperparameters are as described in Sec.~\ref{sec:mainappendix} which the exception of the view set definition which we explore several lesions as described in the main text.

\subsection{Section~\ref{sec:mi_insights}: Experiments}
All hyperparameters are as described in Sec.~\ref{sec:mainappendix_sec5} which the exception of the particular hyperparameter we are varying for the experiment. To compare InfoNCE and the original IR formulation, we adapted the public PyTorch implementation found at \url{https://github.com/neuroailab/LocalAggregation-Pytorch}.

\subsection{Detectron2 Experiments}
\label{sec:detectron}
We make heavy usage of the Detectron2 code found at \url{https://github.com/facebookresearch/detectron2}. In particular, the script \url{https://github.com/facebookresearch/detectron2/blob/master/tools/convert-torchvision-to-d2.py} allows us to convert a trained ResNet18 model from \texttt{torchvision} to the format needed for Detectron2. The repository has default configuration files for all the experiments in this section. We change the following fields to support using a frozen ResNet18:
\begin{lstlisting}
INPUT:
    FORMAT: RGB
MODEL:
    BACKBONE:
        FREEZE_AT: 5
    PIXEL_MEAN:
        - 123.675
        - 103.53
        - 116.28
    PIXEL_STD:
    - 58.395
    - 57.12
    - 57.375
    RESNETS:
        DEPTH: 18
        RES2_OUT_CHANNELS: 64
        STRIDE_IN_1X1: false
    WEIGHTS: <PATH_TO_CONVERTED_TORCHVISION_WEIGHTS>
\end{lstlisting}
We acknowledge that ResNet50 and larger are the commonly used backbones, so our results will not be  state-of-the-art. However, the ordering in performance between algorithms is still meaningful and our primary interest.
For COCO object detection and instance segmentation, we use Mask R-CNN, R$_{18}$-FPN, 1x schedule. For COCO keypoint detection, we use R-CNN, R$_{18}$-FPN. For Pascal VOC '07, we use Faster R-CNN, R$_{18}$-C4. For LVIS, we use Mask R-CNN, R$_{18}$-FPN.

\section{More Background}
\paragraph{A Simple Framework for Contrastive Learning (SimCLR)} SimCLR \cite{chen2020simple} performs an expansive experimental study of how data augmentation, architectures, and computational resources effect the IR objective. In particular, SimCLR finds better performance without a memory bank and by adding a nonlinear transformation on the representation before the dot product. That is,
\begin{equation}
    \mathcal{L}^{\text{SimCLR}}(x^{(i)}) = \log p(i|x^{(i)}),\qquad p(i|x^{(i)}) = \frac{e^{h_\psi(g_\theta(x^{(i)}))^T h_\psi(g_\theta(x^{(i)})) / \omega}}{\sum_{j=1}^N e^{h_\psi(g_\theta(x^{(i)}))^T h_\psi(g_\theta(x^{(j)})) / \omega}}
    \label{eq:simclr}
\end{equation} where $h_\psi: \mathbb{R}^d \rightarrow \mathbb{R}^d$ is usually an MLP. Note no memory banks are used in Eq.~\ref{eq:simclr}. Instead, the other elements in the same minibatch are used as negative samples. Combining these insights with significantly more compute, SimCLR achieves the state-of-the-art on transfer to ImageNet classification. In this work, we find similar takaways as \cite{chen2020simple} and offer some theoretical support.

\section{Toy Example of VINCE}
\label{sec:toyvince}
Thm.~\ref{thm:vince} and Coro.~\ref{coro:vince} imply an ordering to the MI bounds considered. We next explore the tightness of these bounds in a toy example with known MI.
Consider two random variables $Z$ and $\epsilon$ distributed such that we can pick independent samples $z_i \sim \mathcal{N}(0, \Sigma_Z)$ and $\epsilon_i \sim \mathcal{N}(0, \Sigma_\epsilon)$ where $\Sigma_Z = \begin{pmatrix}
1 & -0.5 \\
-0.5  & 1
\end{pmatrix}$ and $\Sigma_\epsilon = \begin{pmatrix}
1 & 0.9 \\
0.9 & 1
\end{pmatrix}$
Then, let $(X, Y) = Z + \epsilon$. That is, introduce a new random variable $X$ as the first dimension of the sum and $Y$ as the second. The mutual information between $X$ and $Y$ can be analytically computed as
$\mathcal{I}(X; Y) = -\frac{1}{2}\log (1- \frac{\Sigma[1,2]\Sigma[2,1]}{\Sigma[1,1]\Sigma[2,2]})$
 as $(X, Y)$ is jointly Gaussian with covariance $\Sigma = \Sigma_Z + \Sigma_\epsilon$.

We fit InfoNCE and VINCE with varying values for $\tau$ and measure the estimated mutual information.
Table~\ref{table:toy1} shows the results. For rows with VINCE, the percentage shown in the left column represents the percentage of training examples (sorted by L$_2$ distance in representation space to $y_1$) that are ``valid'' negative samples. For instance, 50\% would indicate that negative samples can only be drawn from the 50\% of examples whose representation is closest to $g_\theta(y_1)$ in distance.
Therefore, a smaller percentage is a higher $\tau$. From Table~\ref{table:toy1}, we find that a higher $\tau$ results in a looser estimate (as expected from Coro.~\ref{coro:vince}). It might strike the reader as peculiar to use VINCE for representation learning --- it is a looser bound than InfoNCE, and in general we seek tightness with bounds.
However, we argue that learning a good representation and tightening the bound are two subtly related but fundamentally distinct problems.
% We will then return to discuss negative sampling and VINCE in Sec.~\ref{sec:exist:algo}.

\begin{table}[h!]
  \small
  \caption{Looseness of VINCE}
  \centering
    \begin{tabular}{ll}
        \toprule
        Method & Estimate of MI\\
        \midrule
        True & $0.02041$ \\
        InfoNCE & $0.01345 \pm 0.001$ \\
        VINCE (90\%) & $0.01241 \pm 3$e$-4$ \\
        VINCE (75\%) & $0.00220 \pm 1$e$-4$ \\
        VINCE (50\%) & $7.29$e$-5 \pm 9$e$-6$ \\
        VINCE (25\%) & $1.67$e$-5 \pm 2$e$-6$\\
        VINCE (10\%) & $5.87$e$-6 \pm 1$e$-6$\\
        VINCE (5\%) & $1.97$e$-6 \pm 4$e$-6$\\
        \bottomrule
    \end{tabular}
 \label{table:toy1}
\end{table}

The encoders are 5-layer MLPs with 10 hidden dimensions and ReLU nonlinearities. To build the dataset, we sample 2000 points and optimize the InfoNCE objective with Adam with a learning rate of 0.03, batch size 128, and no weight decay for 100 epochs. Given a percentage for VINCE, we compute distances between all elements in the memory bank and the representation the current image --- we only sample 100 negatives from the top $p$ percent. We conduct the experiment with 5 different random seeds to estimate the variance.

\section{Revisiting Mutual Information and Data Augmentation}
\label{sec:mi_dataaug}
In Thm.~\ref{thm:marketing}, we discussed the notion that not all the optima of IR are good for transfer learning. Having drawn connections between IR and MI, we can revisit this statement from an information-theoretic angle. Lemma~\ref{lem:ir_mi} shows that contrastive learning is the same as maximizing MI for the special case where $X$ and $Y$ are the same random variable. As $\mathcal{I}(X; X)$ is trivial, optimizing it should not be fruitful.
% Rather \textit{it is only because of data augmentation that a connection between the two exists}.
But we can understand ``views'' as a device for information bottleneck that force the encoders to estimate MI under missing information: the stronger the bottleneck, the more robust the representation as it learns to be invariant.
Critically, if the data augmentations are lossy, then the mutual information between two views of an image  is not obvious, making the objective nontrivial.
At the same time, if we were to train IR with lossless data augmentations, we should not expect any learning beyond inductive biases introduced by the neural network $g_\theta$.
% The statement in  Prop.~\ref{thm:trivial} should now be obvious.
% Evidence of this can be seen in Fig.~\ref{fig:viewset_lesion} as more lossy views, such as  cropping and noise addition, outperform than less lossy and lossless views e.g. grayscale or flipping.

\begin{figure}[h!]
    \centering
    \begin{subfigure}[b]{0.19\textwidth}
        \centering
        \includegraphics[width=\textwidth]{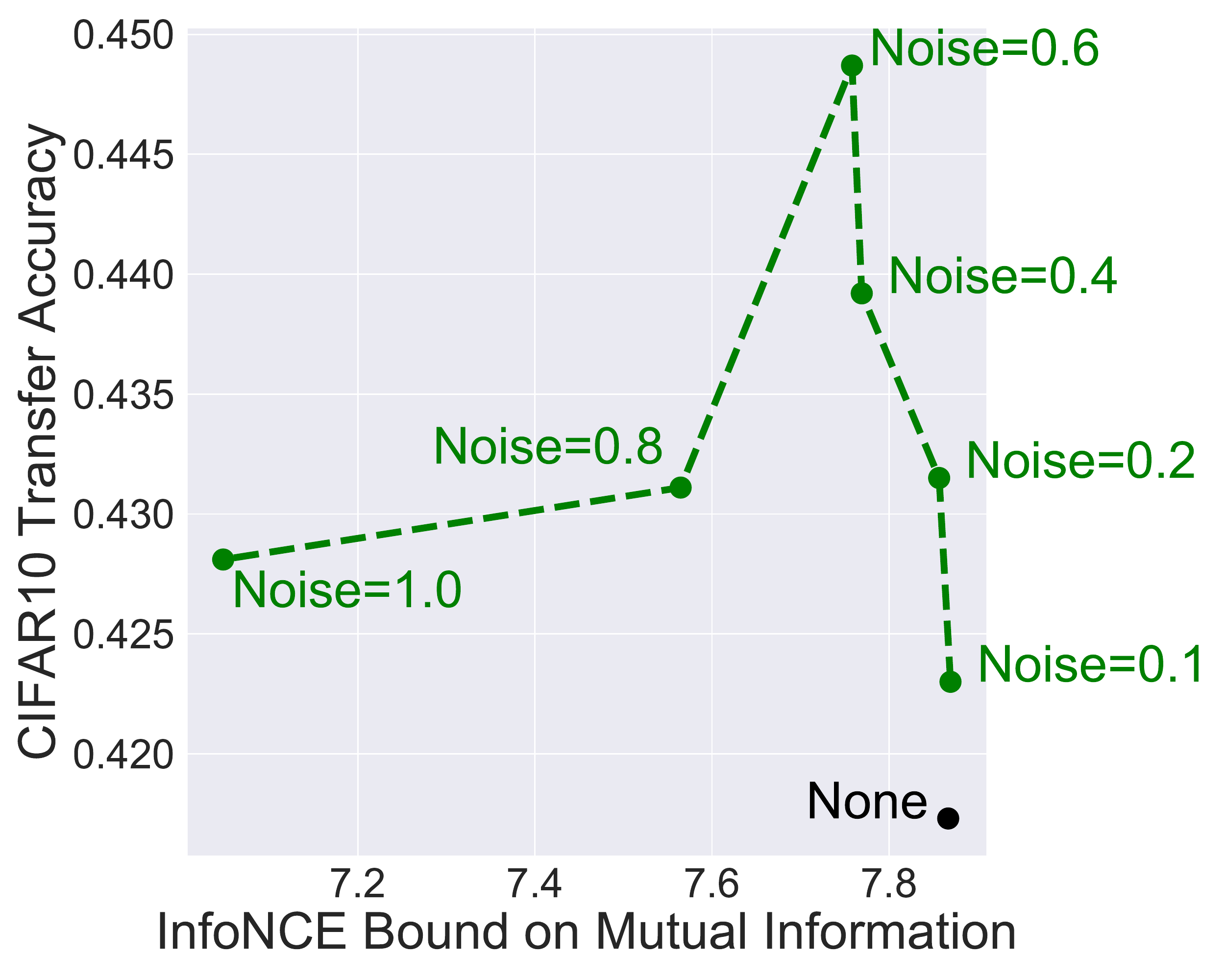}
        \caption{Noise Level}
    \end{subfigure}
    \begin{subfigure}[b]{0.19\textwidth}
        \centering
        \includegraphics[width=\textwidth]{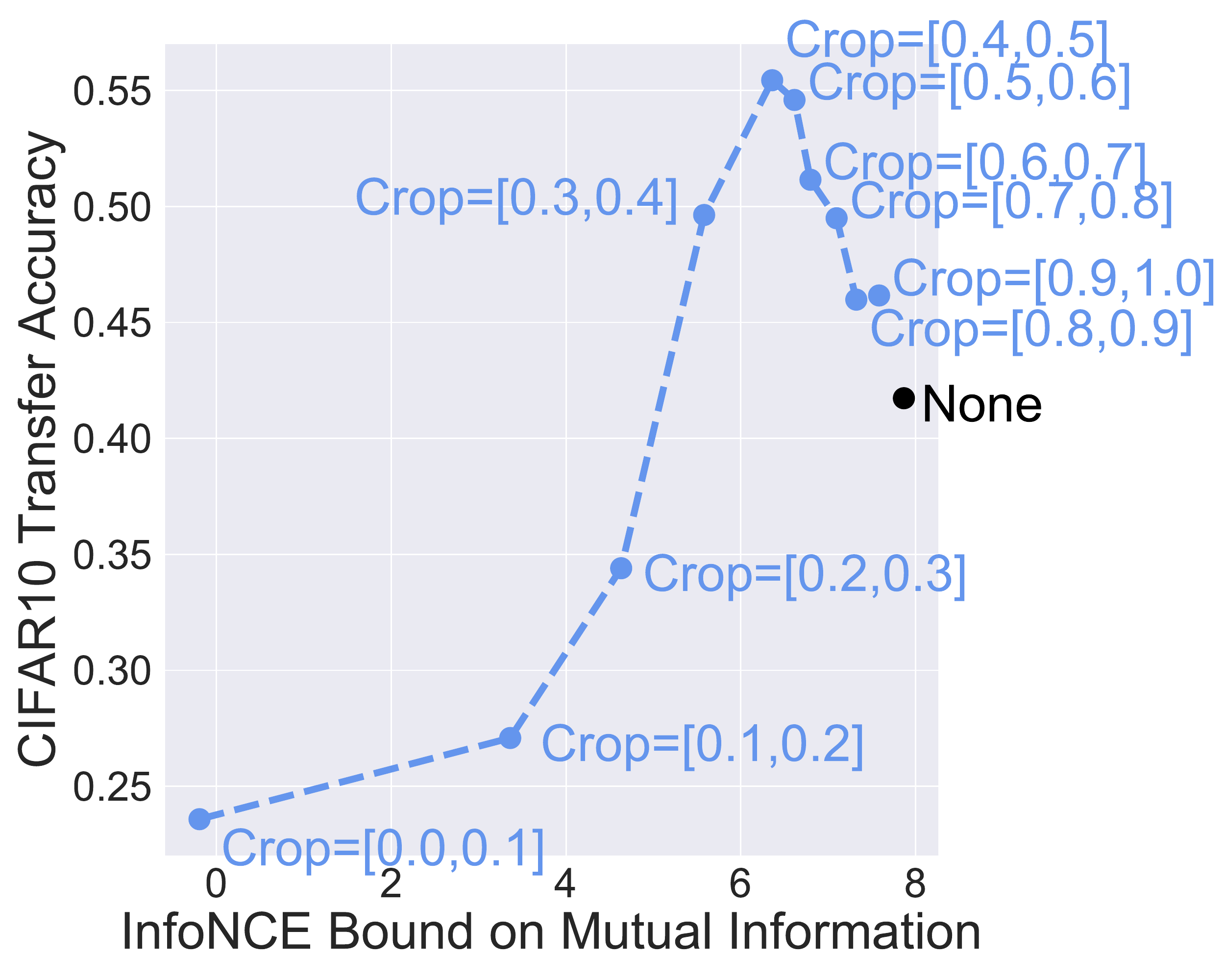}
        \caption{Crop Intervals}
    \end{subfigure}
    \begin{subfigure}[b]{0.19\textwidth}
        \centering
        \includegraphics[width=\textwidth]{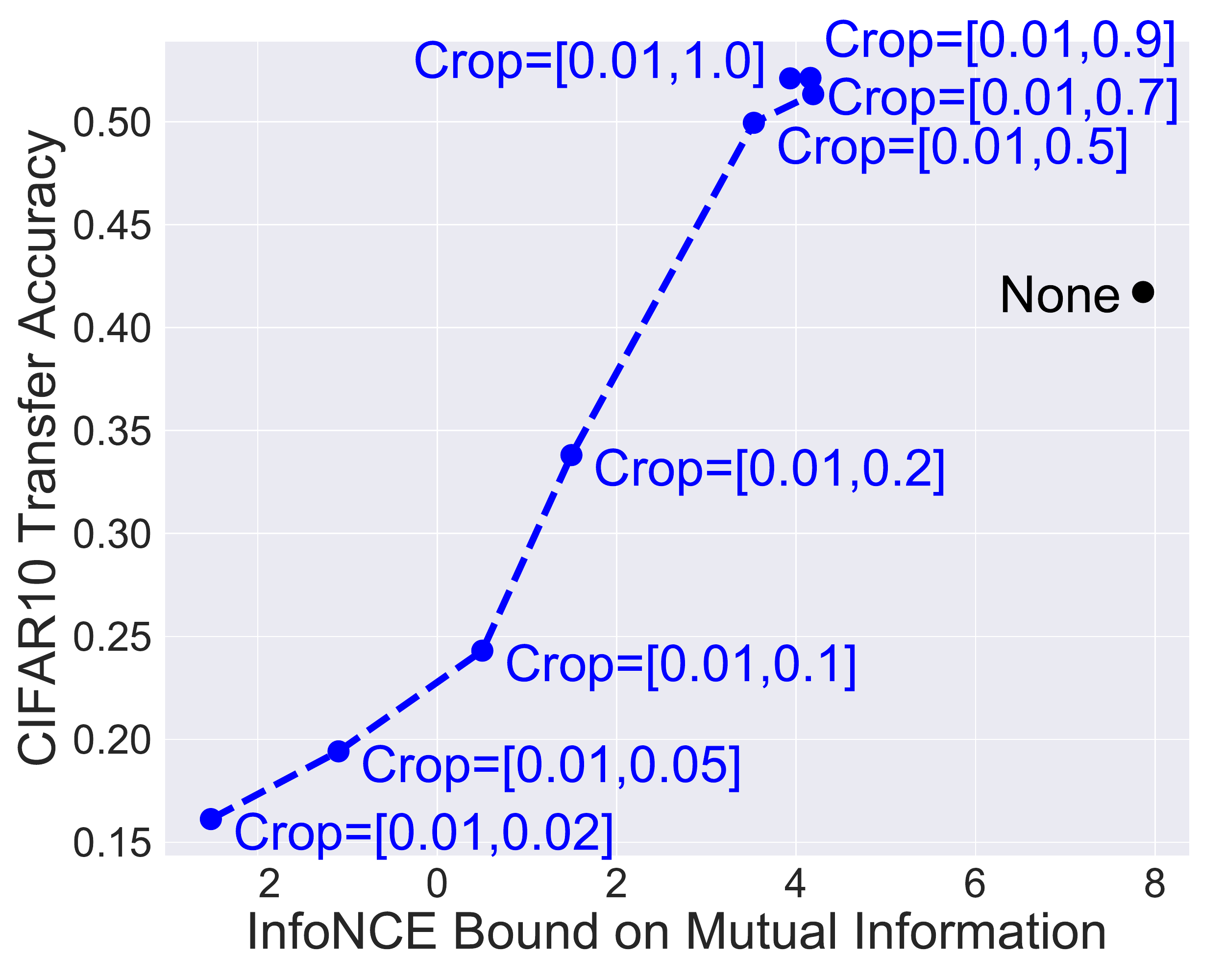}
        \caption{Crop Min.}
    \end{subfigure}
    \begin{subfigure}[b]{0.19\textwidth}
        \centering
        \includegraphics[width=\textwidth]{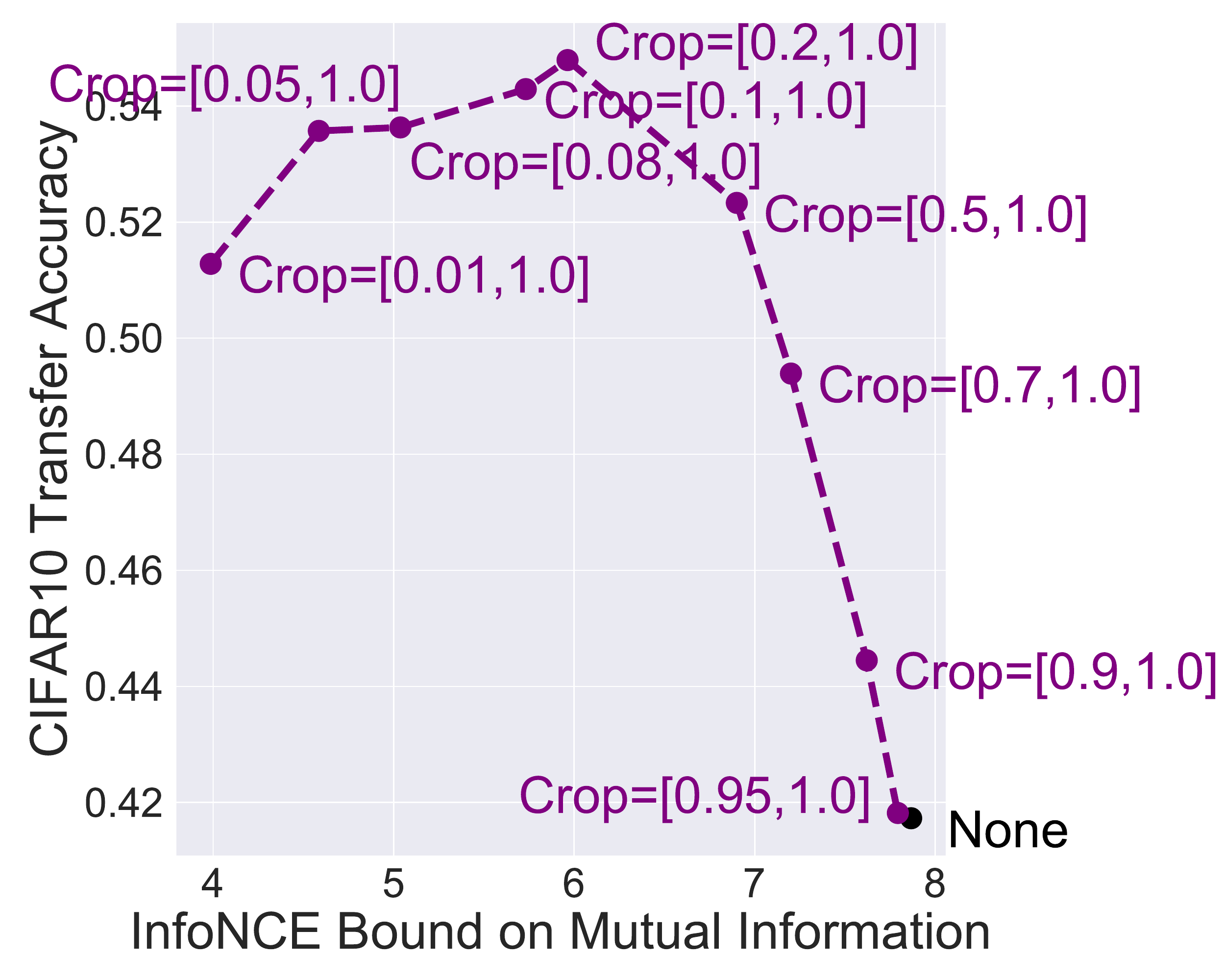}
        \caption{Crop Max.}
    \end{subfigure}
    \begin{subfigure}[b]{0.19\textwidth}
        \centering
        \includegraphics[width=\textwidth]{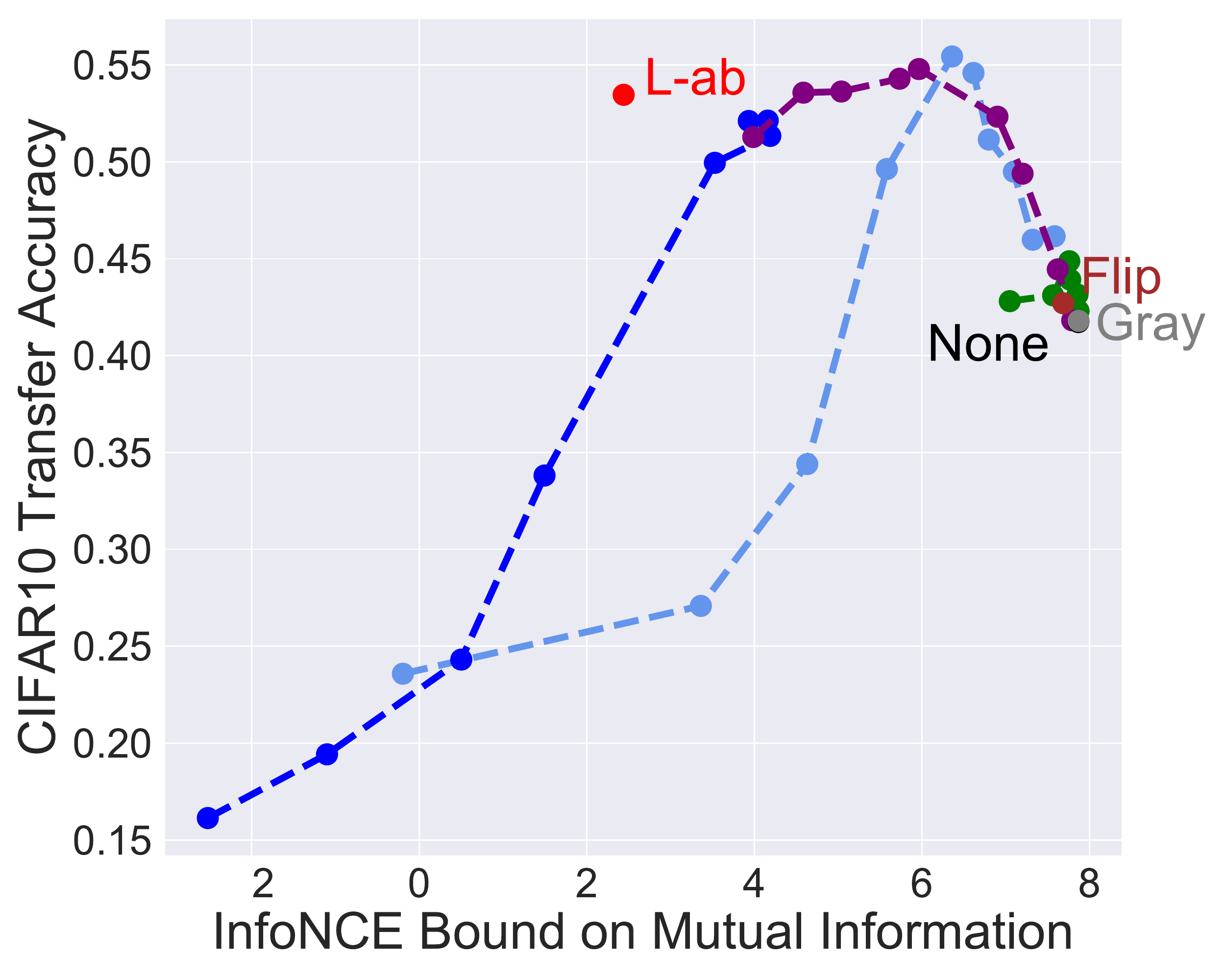}
        \caption{Combined}
    \end{subfigure}
    \caption{Nearest neighbor classification accuracy versus MI on CIFAR10 under different view sets. View sets that hide too little (e.g. grayscale, flip) or too much information (e.g. crops that preserve too little) result in poor transfer performance. Subfigure (e) combines (a) through (d) for scale.}
    \label{fig:mi_views}
\end{figure}

Fig.~\ref{fig:mi_views} depicts a more careful experiment studying the relationship been views and MI. Each point in Fig.~\ref{fig:mi_views} represents an IR model trained on CIFAR10 with different views: no augmentations (black point), grayscale images (gray point),  flipped images (brown point), L-ab filters (red point), color jitter (green line) where Noise=$x$ means adding higher levels of noise for larger $x$, and cropping (blue and purple lines) where the bounds $[a, b]$ represent the minimum and maximum crop sizes with 0 being no image at all and 1 retaining the full image. By Lemma~\ref{lem:ir_mi}, we estimate MI (x-axis of Fig.~\ref{fig:mi_views}) using $\mathcal{L}^{\textup{IR}}$ plus a $\log N$ constant. We see a parabolic relationship between MI and transfer performance: views that preserve more information lead to both a higher MI (trivially) and poorer representations. Similarly, views that hide too much information lead to very low MI and again, poor representations.
It is a sweet spot in between where we find good classification performance.
% We find the L-ab view set and carefully tuned cropping to be the best (near the apex of the parabola).
% Finally, the composition of augmentations (e.g. flipping and cropping) defines a larger set of transformations that must be encoded in a representation, again improving performance.
% Previous work \cite{chen2020simple} has also experimentally confirmed the importance of data augmentation to representation quality. But by connecting contrastive learning to mutual information, we begin to build an understanding of why good data augmentation is so important.
Similar findings were reported in \cite{tian2020makes}.

\textbf{Learning curves of different augmentations. }In the main text, we tracked the nearest neighbor classification accuracy throughout training for various augmentations using IR and CMC on ImageNet. Here, we include results on CIFAR10, which show similar patterns as discussed.

\begin{figure}[h!]


  \centering
  \begin{subfigure}[b]{0.24\textwidth}
    \centering
    \includegraphics[width=\textwidth]{ir_view_lesion_imagenet.pdf}
    \caption{IR (ImageNet)}
    \vspace{0.15em}
  \end{subfigure}
    \centering
  \begin{subfigure}[b]{0.24\textwidth}
    \includegraphics[width=\textwidth]{ir_view_lesion_cifar.pdf}
    \caption{IR (CIFAR10)}
    \vspace{0.15em}
  \end{subfigure}
  \begin{subfigure}[b]{0.24\textwidth}
    \centering
    \includegraphics[width=\textwidth]{cmc_view_lesion_imagenet.pdf}
    \caption{CMC (ImageNet)}
  \end{subfigure}
  \begin{subfigure}[b]{0.24\textwidth}
    \centering
    \includegraphics[width=\textwidth]{cmc_view_lesion_cifar.pdf}
    \caption{CMC (CIFAR10)}
  \end{subfigure}
  \caption{Effect of view set choice on representation quality for IR and CMC on ImageNet. Empty or trivial augmentations (e.g. flipping) lead to poor performance.}
  \label{fig:viewset_lesion_dup}
\end{figure}

\section{Properties of a Good View Set}
\label{sec:properties}
Recall Fig.~\ref{fig:viewset_lesion}, which measured the quality of different representations learned under different view sets. We continue our analysis to summarize the results with a few properties that differentiate ``good'' and ``bad'' views (data augmentation techniques). Property 1 was mentioned in the main text.

\paragraph{Property \#1: (Lossy-ness)}
We can frame views as a device for information bottleneck that force the encoders to estimate mutual information under missing information: the stronger the bottleneck, the more robust the representation.
Further, if the view functions are lossy, then the mutual information $\mathcal{I}(X; X)$ is not obvious, making the objective nontrivial.
Some evidence of this as a desirable property can be seen in Fig.~\ref{fig:viewset_lesion}: lossless view functions (such as horizontal flipping or none at all) result in poor representations where as more lossy view functions (such as cropping and noise addition) form much stronger representations than lossless and less lossy view functions (e.g. grayscale).

\paragraph{Property \#2: (Completeness)}
Different views should differ in the information they contain (otherwise the objective is again trivial).
% As such, the encoders learn representations that are ``invariant'' to the many views in $v_x$.
Consider the training paradigm for IR: every epoch, for a given $x$, we must choose a random view.
In the limit, this paradigm would \textit{amortize} the parameters of the encoders over all views in $v_x$.
This compounds the amount of information that must be compressed in the encoded representation, encouraging abstraction and invariance.
% which we may prefer to have for downstream tasks.
% For instance, different croppings of an image can enable attention to the objects present in the scene.
With this interpretation, completeness is important as we want to be invariant to the full spectrum of views --- otherwise, the learned representation would contain ``holes'' along certain transformations that may appear in unseen images. For instance, consider a view set of center crops versus a view set of all random crops.

\paragraph{Property \#3: (Uniqueness)}
% There is a limit to how much information we can bottleneck.
Given an image, imagine we scramble all pixels to random values.
This would clearly make it difficult for the witness function to properly access similarity (and be maximally lossy) but so much so that no meaningful representation can be learned.
To avoid this, we demand the view set of a datum to be \textit{unique} to that datum alone.
In other words, the elements of view set $v_x$ for a data point $x$ should not appear in the view set $v_y$ for any $y$.
% Given a element from $v(x, a) \in v_x$, it should be deterministic to find which $x$ is associated with $v(x, a)$.
For instance, cropping an image or adding color jitter does not fully obscur the identity of the original image.

\paragraph{Property \#4: (Consistency)}
For a datum $x$, a view $v(x,a)$ for any $a \in \mathcal{A}$ must be in the same domain as $x$.
For instance, the view of an image cannot be a piece of text.
In this sense, we can consider $x$ to be a \textit{priviledged view} (of itself).
Because amortizing over views increases the generalization capability of our encoders to elements of $v_x$ (and since $x$ has membership in $v_x$), we expect amortization to result in a good representation for the priviledged view $x$.
The \textit{same-domain assumption} is worth highlighting: if we were to choose views of a different dimensionality or modality, we would not be able to encode (untransformed) images from a test set into our representation.

\paragraph{A Second Toy Example}
To showcase these properties, consider two interweaved spirals, one spinning clockwise and one counter-clockwise, composed of 2D coordinates: $(c_x, c_y)$. Define the elements of the view set $v_x$ for $x$ as $(c_x + \varepsilon_x, c_y + \varepsilon_y)$ where $\varepsilon_x, \varepsilon_y \sim U(0, \eta)$. We vary $\eta$ from 0 (only the priviledged view) to 5 (every point possible in the domain).

\begin{figure}[h!]
  \centering
  \begin{subfigure}[b]{0.24\textwidth}
    \centering
    \includegraphics[width=\textwidth]{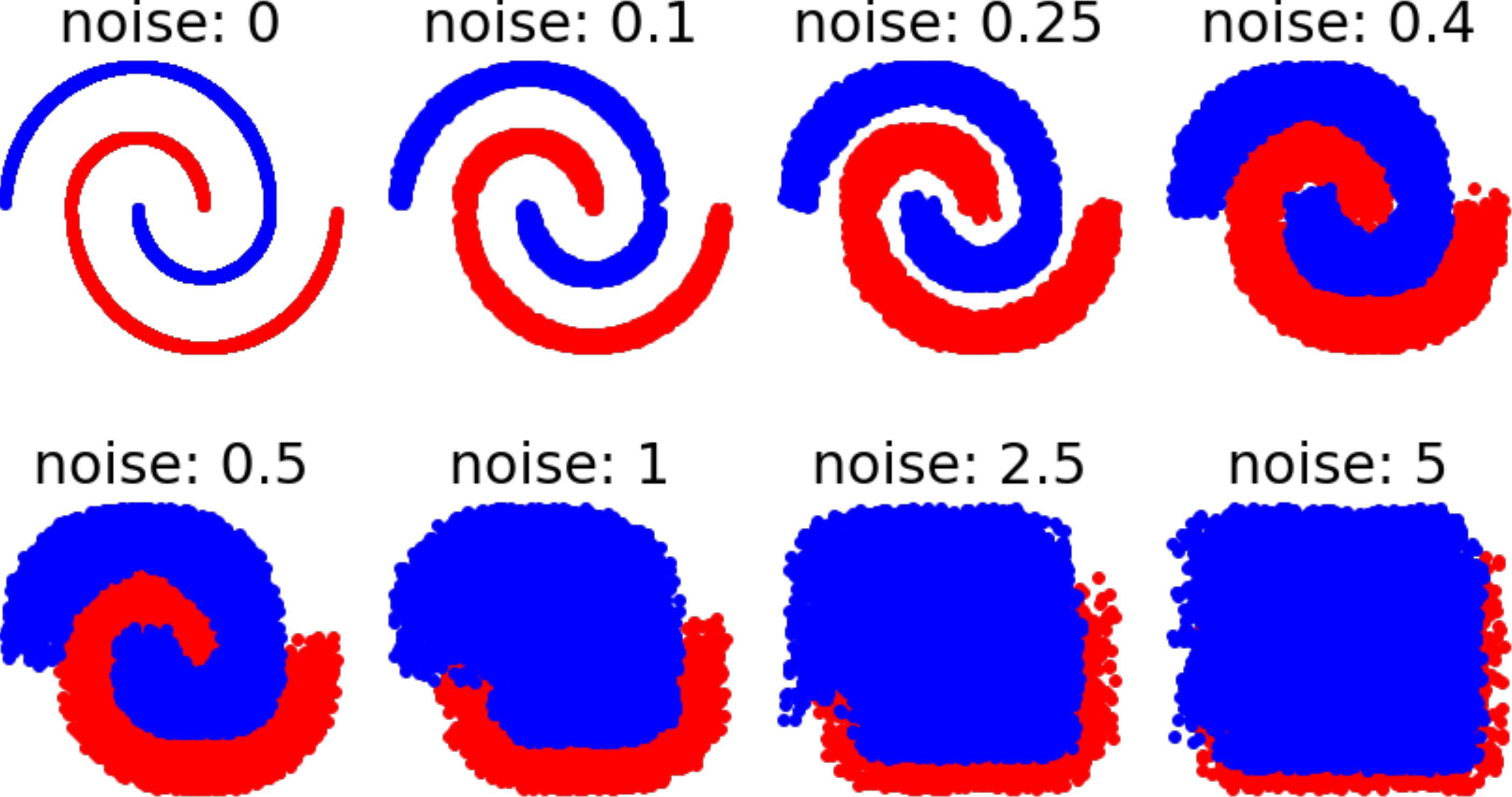}
    \caption{Sampled Views}
    % \vspace{0.15em}
  \end{subfigure}
    \centering
  \begin{subfigure}[b]{0.24\textwidth}
    \includegraphics[width=\textwidth]{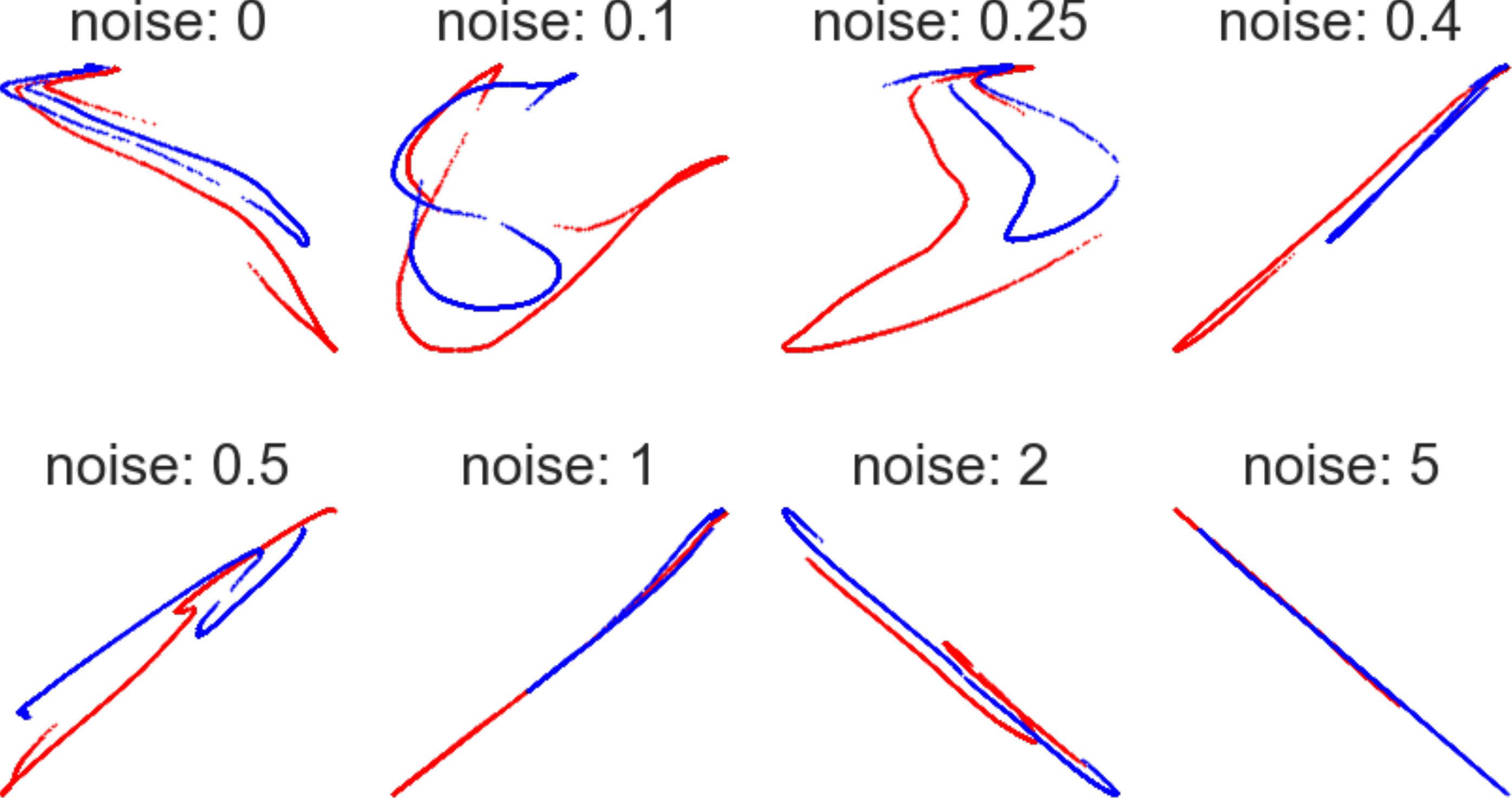}
    \caption{2D Repr.}
    % \vspace{0.15em}
  \end{subfigure}
  \begin{subfigure}[b]{0.24\textwidth}
    \centering
    \includegraphics[width=\textwidth]{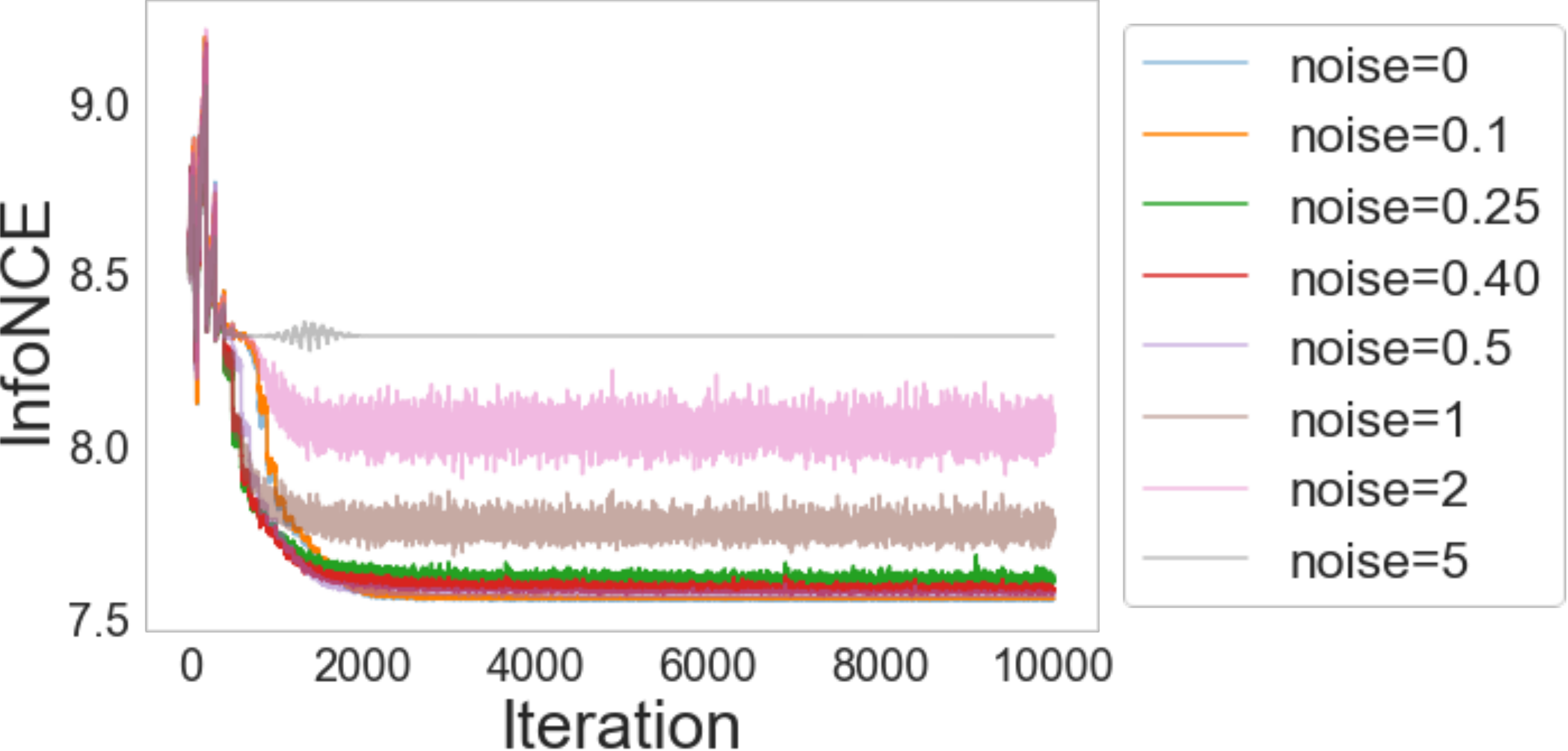}
    \caption{InfoNCE Loss}
  \end{subfigure}
  \begin{subfigure}[b]{0.24\textwidth}
    \centering
    \includegraphics[width=\textwidth]{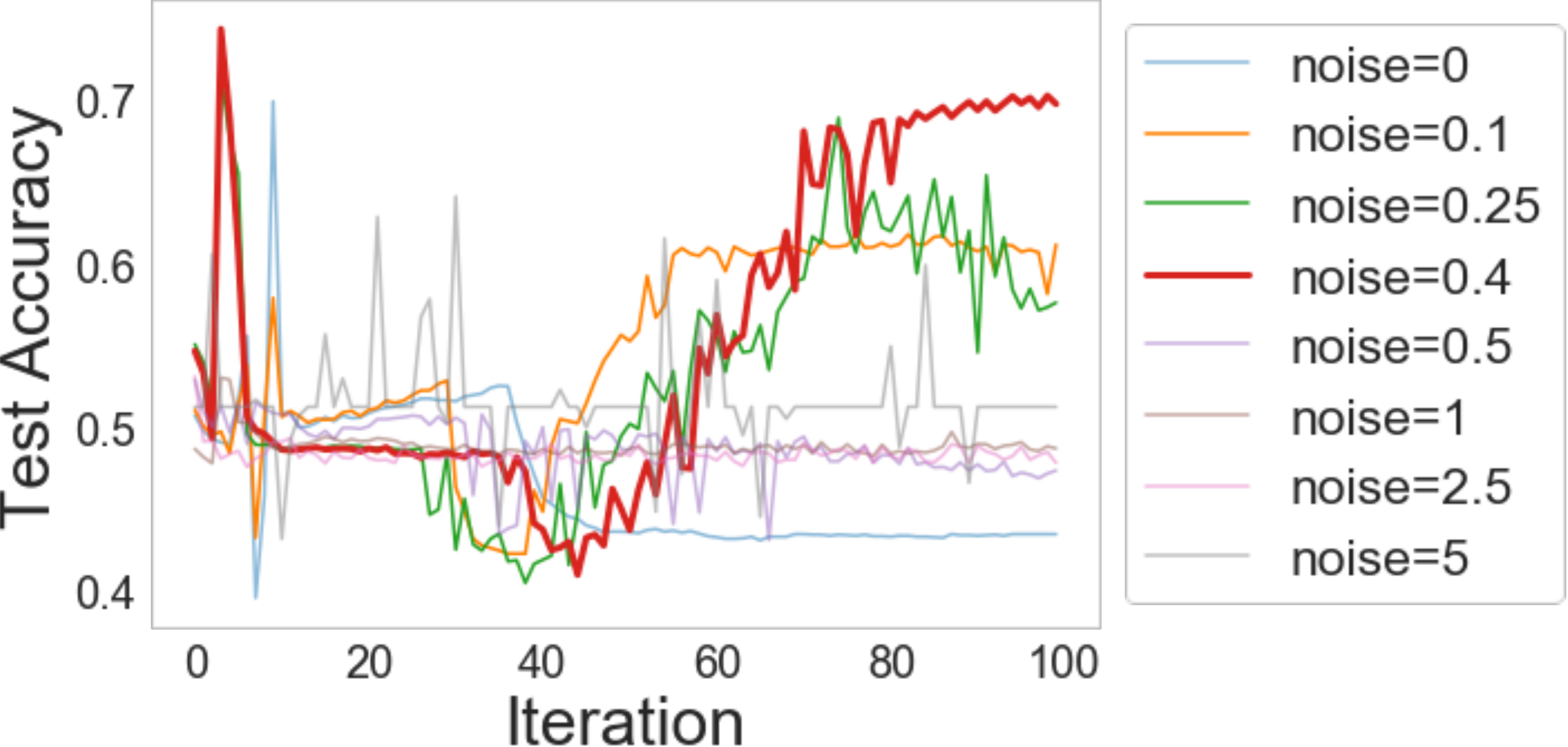}
    \caption{Test Acc.}
  \end{subfigure}
  \caption{Consider points along two intertwined spirals. (a) We vary the amount of noise added to the points. (b) We visualize the learned 2D representation. (c) Training losses over time. With more noise, MI is lower. (d) The transfer task is to predict which spiral a point belongs to.}
  \label{fig:toy}
\end{figure}

Consider the transfer task of predicting which spiral each point belongs to (using logistic regression on the representations).
Thus, the information required to disentangle the  spirals must be linearly separable in the representations.
% .No labels regarding spiral identity are given during training.
From Fig.~\ref{fig:toy}, we see that with $\eta = 0$ (lossless views), we can maximize MI (low training loss) but the transfer accuracy is near chance. As noise approaches 0.4 (lossy and unique), we find steady improvements to accuracy. From Fig.~\ref{fig:toy}b, only $\eta = 0.4$ has a (mostly) linear separation. However, as noise surpasses 0.4 (and views lose uniqueness), transfer accuracy recedes to chance.
These results exemplify the importance of lossy-ness and uniqueness.

\paragraph{Training Details} The encoders are 5-layer MLPs with 128 hidden dimensions and map inputs $x$ to a 2D representation.
We use 4096 negative samples drawn from the training dataset to optimize InfoNCE with SGD using momentum 0.9, weight decay 1e-5, batch size 128, and learning rate 0.03 for 10k iterations. The memory bank update parameter $\alpha$ is set to $0.5$ with a temperature of $0.07$. The double spiral dataset contains 10k points contained within a 2D box of $[-2,2]^2$.

\begin{figure}[h!]
  \centering
  \begin{subfigure}[b]{0.24\textwidth}
    \centering
    \includegraphics[width=\textwidth]{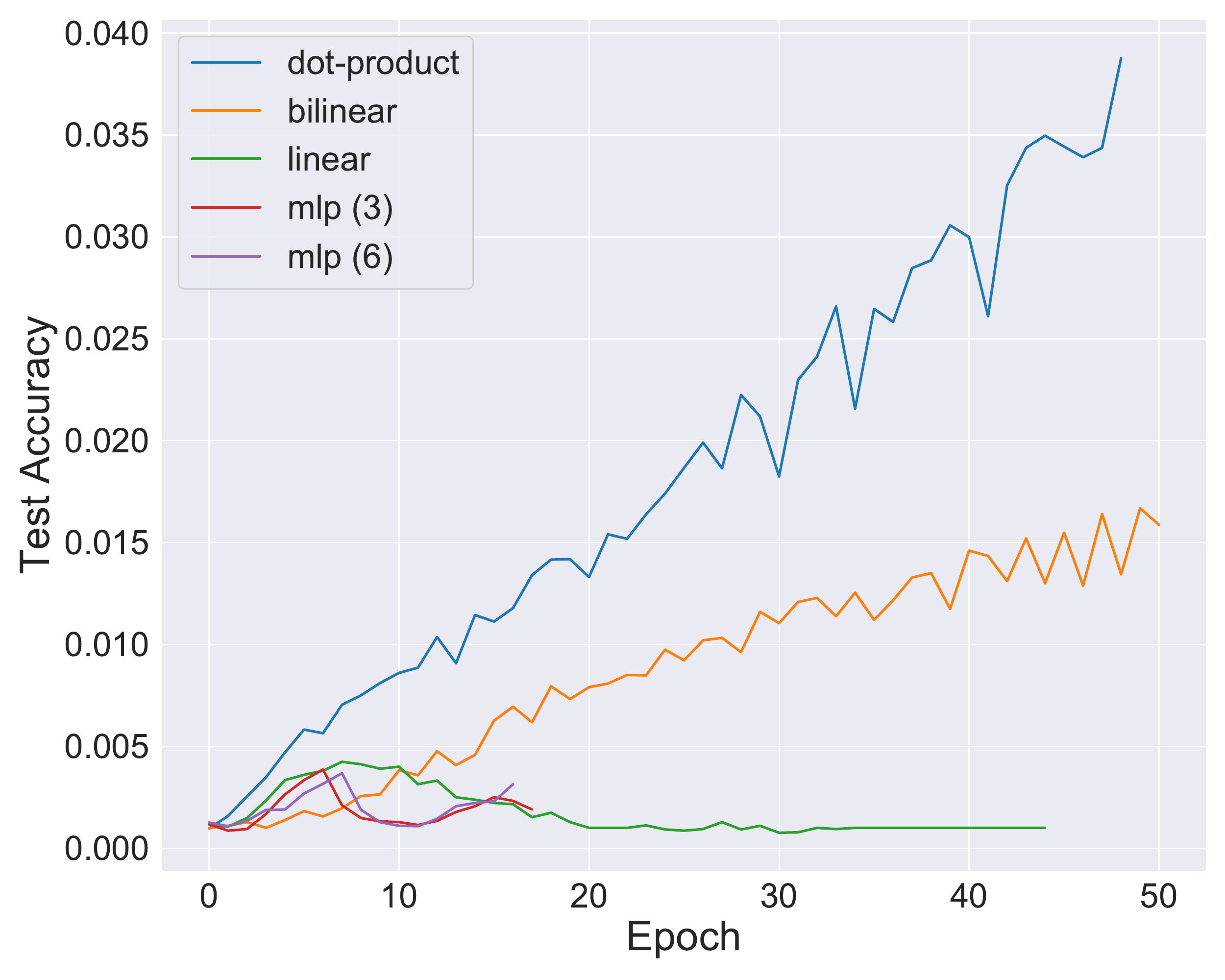}
    \caption{Witness (ImageNet)}
  \end{subfigure}
  \begin{subfigure}[b]{0.24\textwidth}
    \centering
    \includegraphics[width=\textwidth]{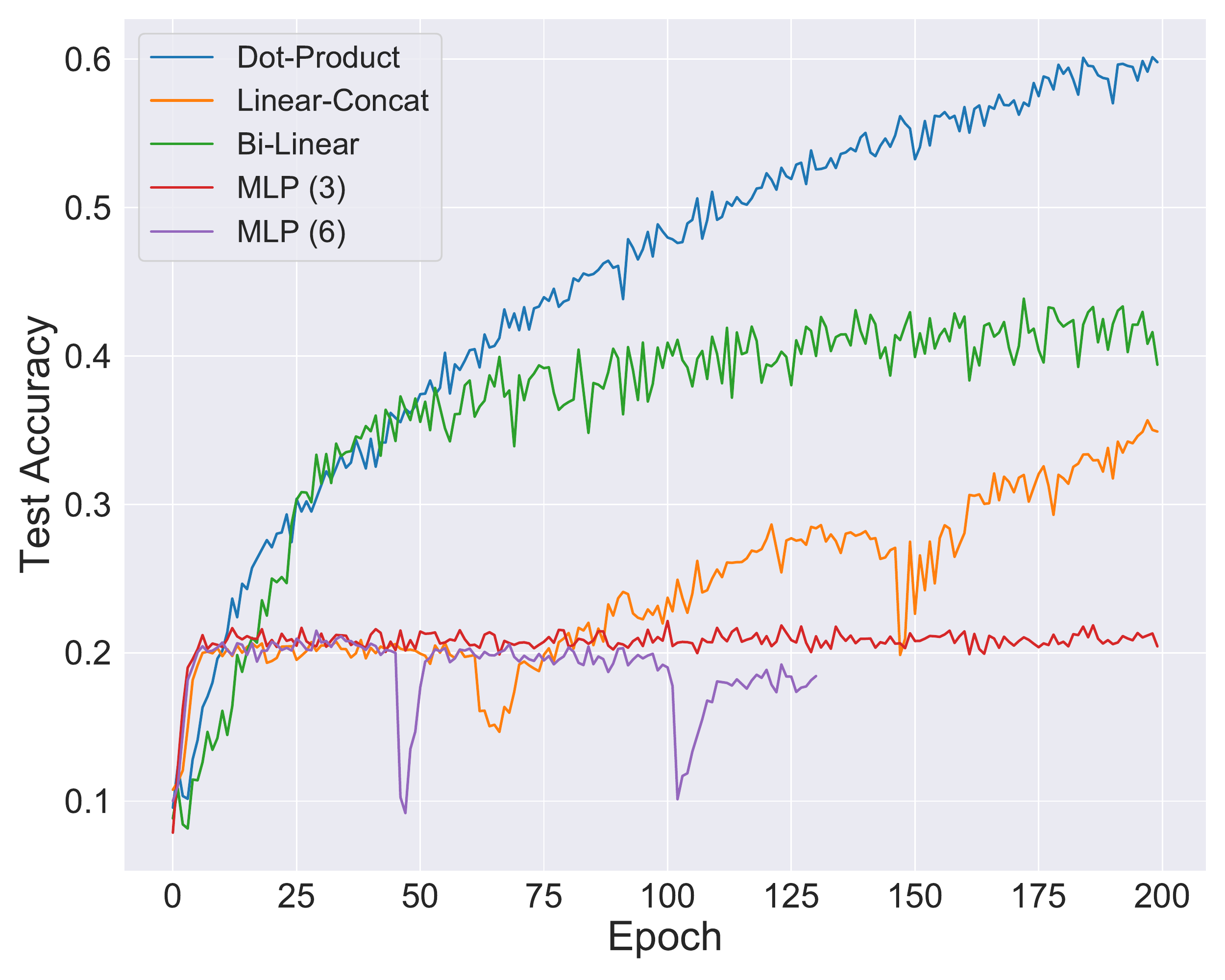}
    \caption{Witness (CIFAR)}
  \end{subfigure}
  \begin{subfigure}[b]{0.24\textwidth}
    \centering
    \includegraphics[width=\textwidth]{imagenet_stability.pdf}
    \caption{Stability (ImageNet)}
  \end{subfigure}
  \begin{subfigure}[b]{0.24\textwidth}
    \centering
    \includegraphics[width=\textwidth]{cifar_stability.pdf}
    \caption{Stability (CIFAR)}
  \end{subfigure}
  \begin{subfigure}[b]{0.24\textwidth}
    \centering
    \includegraphics[width=\textwidth]{imagenet_ir_memory.pdf}
    \caption{$\alpha$+IR (ImageNet)}
  \end{subfigure}
  \begin{subfigure}[b]{0.24\textwidth}
    \centering
    \includegraphics[width=\textwidth]{imagenet_la_memory.pdf}
    \caption{$\alpha$+LA (ImageNet)}
  \end{subfigure}
  \begin{subfigure}[b]{0.24\textwidth}
    \centering
    \includegraphics[width=\textwidth]{cifar_ir_memory.pdf}
    \caption{$\alpha$+IR (CIFAR)}
  \end{subfigure}
  \begin{subfigure}[b]{0.24\textwidth}
    \centering
    \includegraphics[width=\textwidth]{cifar_cmc_memory.pdf}
    \caption{$\alpha$+CMC (CIFAR)}
  \end{subfigure}
  \caption{Nearest neighbor classification accuracy for various experiments: (a, b) compare different architectures for $f_\theta(x,y)$; (c, d) compare the original IR objective with InfoNCE in stability; (e-h) compare different update parameters $\alpha$ for the memory bank of IR, LA, and CMC.}
  \label{fig:discussion2}
\end{figure}

\section{Additional Results}

\paragraph{Learning curves} Fig.~\ref{fig:results_knn} show the learning curves comparing IR, BALL, ANN, and LA on ImageNet and CIFAR10. The y-axis plots the nearest neighbor classification accuracy.

\begin{figure}[h!]
  \centering
  \begin{subfigure}[b]{0.20\textwidth}
    \centering
    \includegraphics[width=\textwidth]{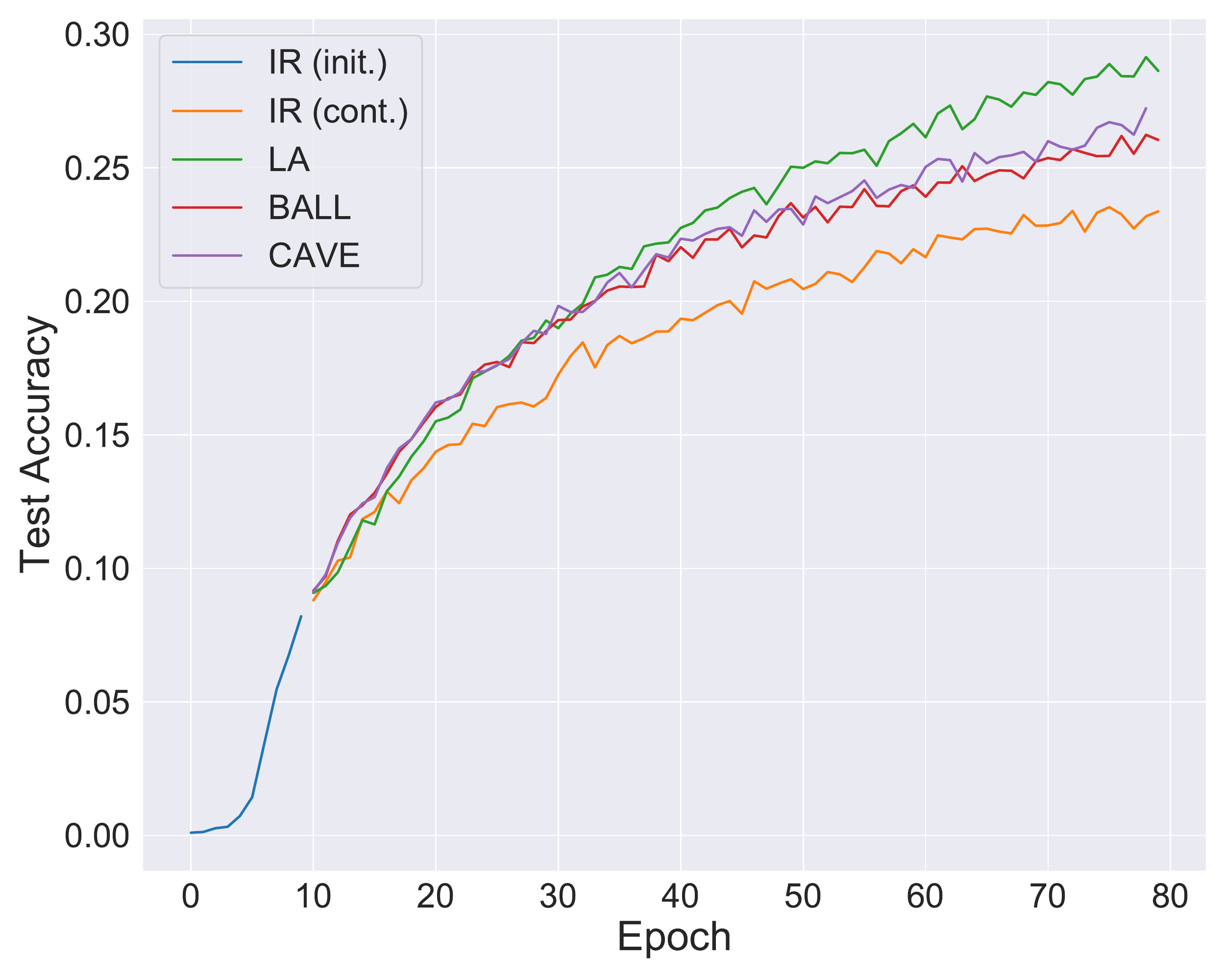}
    \caption{ImageNet (IR)}
  \end{subfigure}
  \begin{subfigure}[b]{0.20\textwidth}
    \centering
    \includegraphics[width=\textwidth]{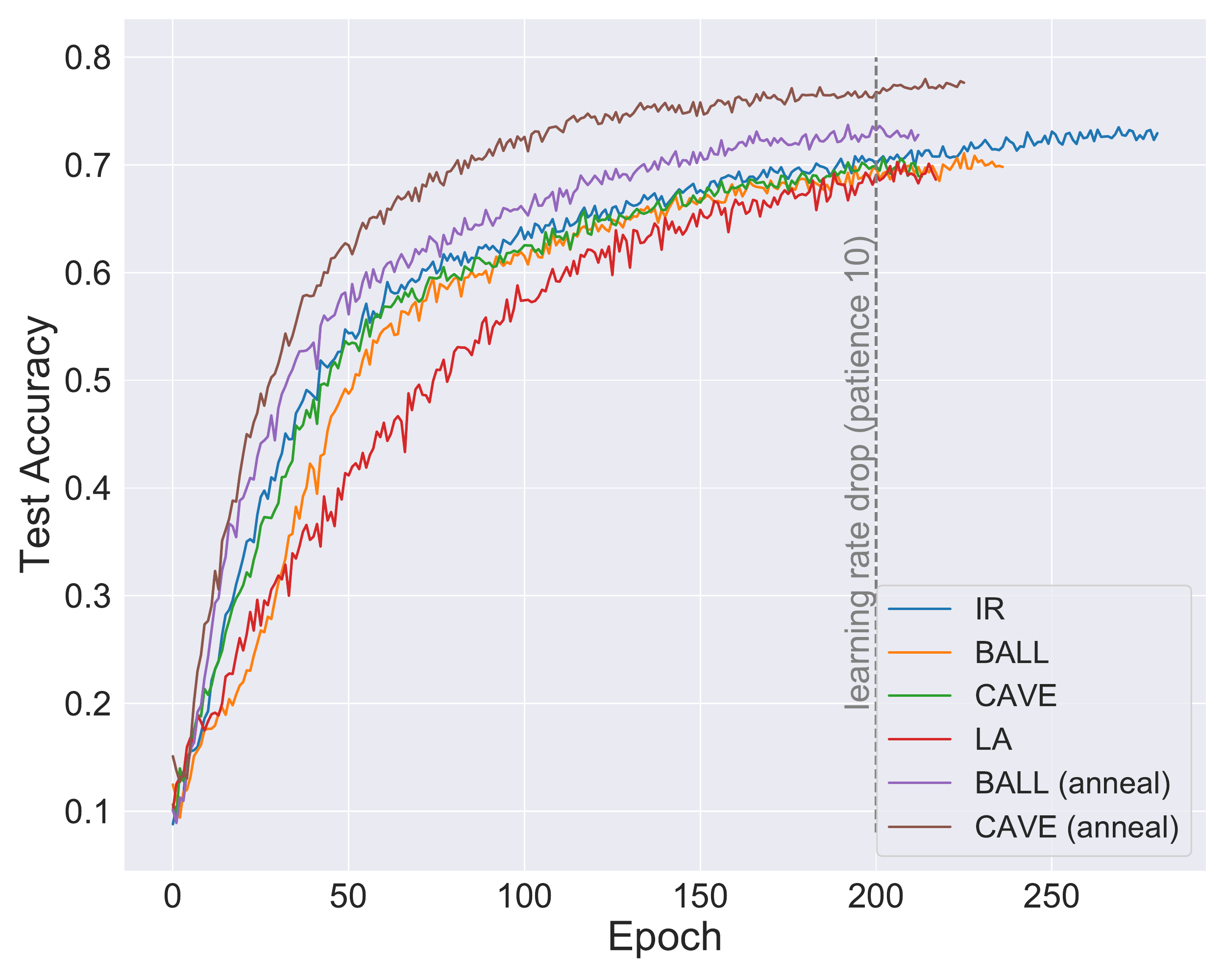}
    \caption{CIFAR10 (IR)}
  \end{subfigure}
  \begin{subfigure}[b]{0.20\textwidth}
    \centering
    \includegraphics[width=\textwidth]{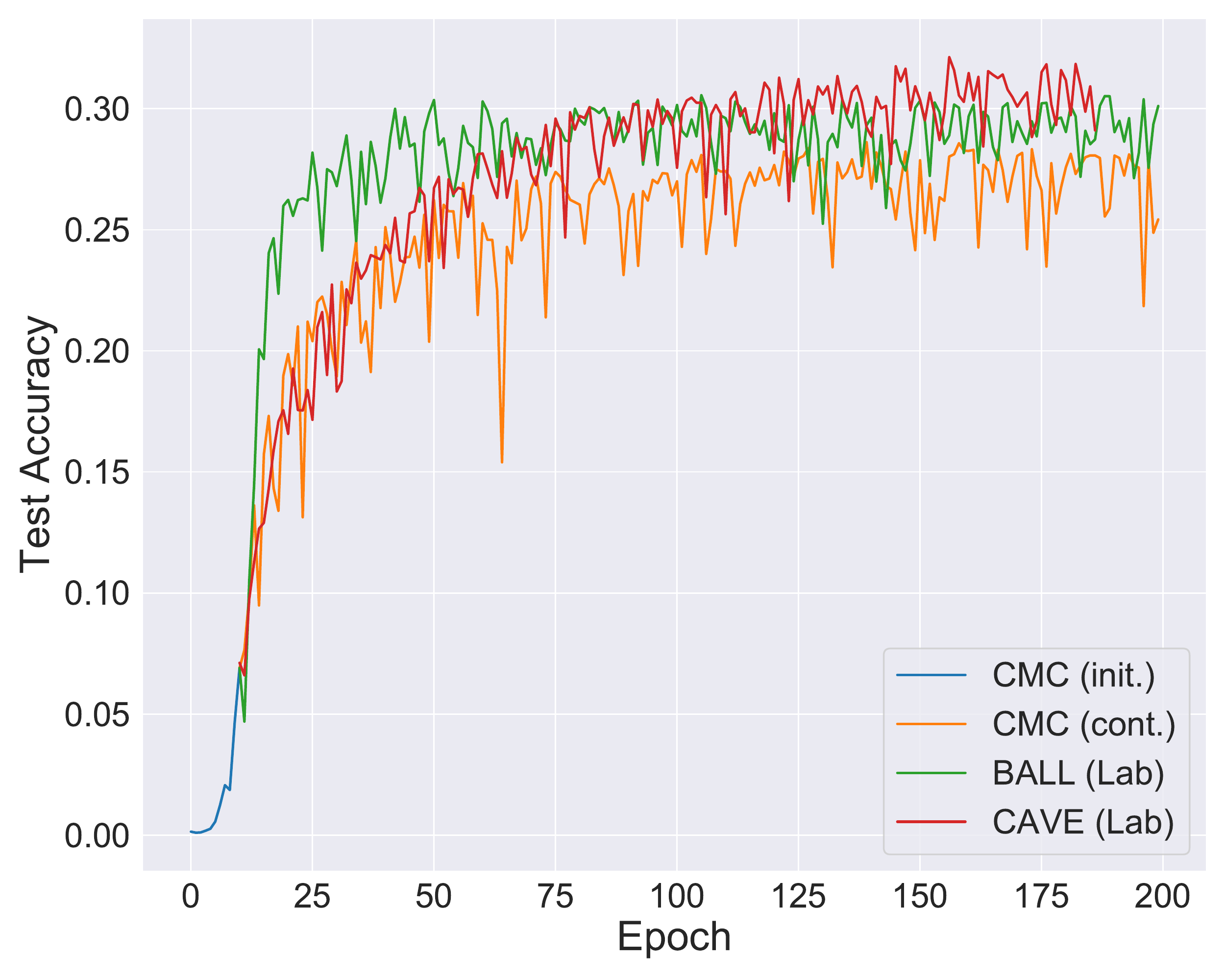}
    \caption{ImageNet (CMC)}
  \end{subfigure}
  \begin{subfigure}[b]{0.20\textwidth}
    \centering
    \includegraphics[width=\textwidth]{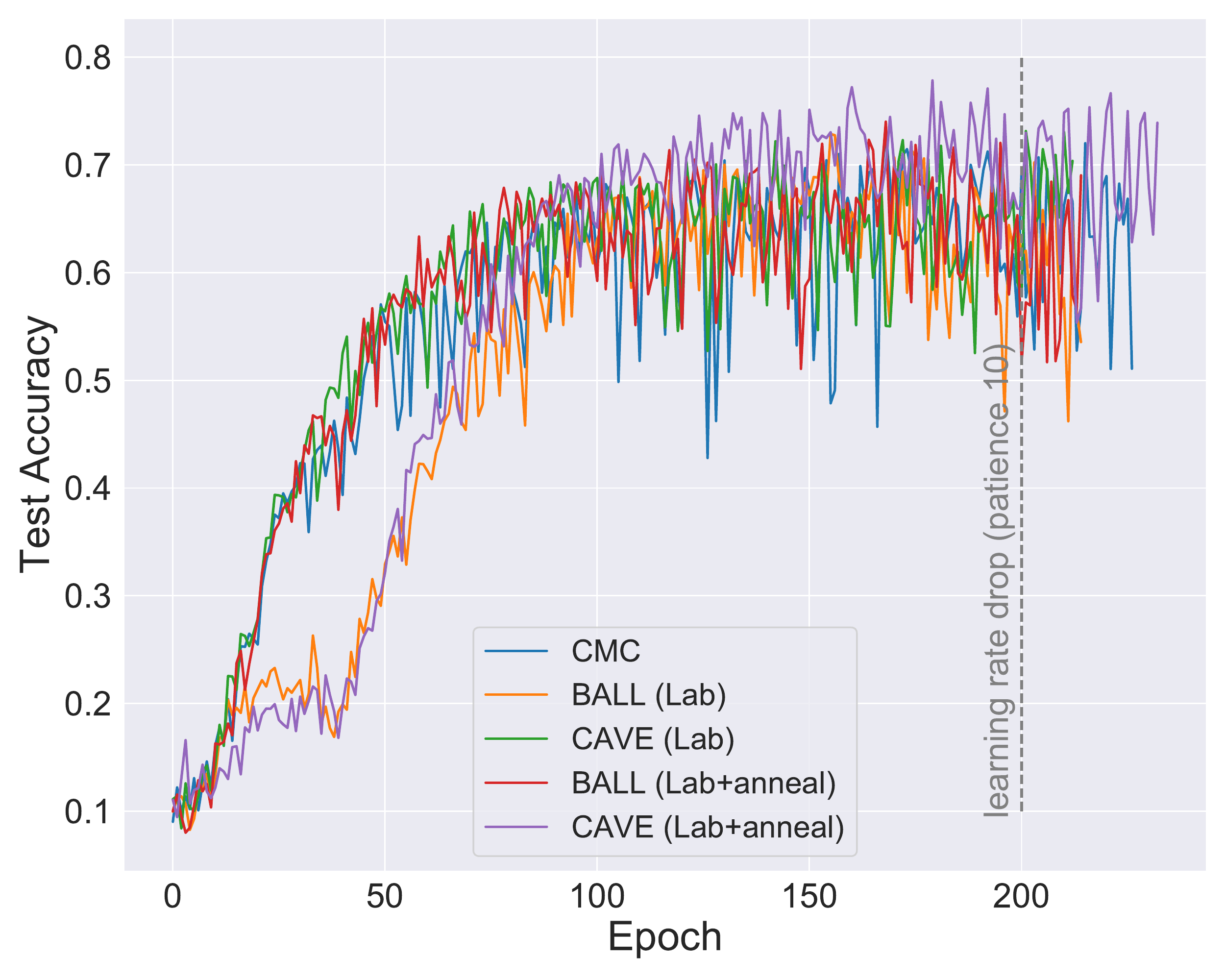}
    \caption{CIFAR10 (CMC)}
  \end{subfigure}
  \caption{Nearest neighbor (NN) classification accuracy throughout training: (a,b) show the IR family of models on ImageNet and CIFAR10; (c,d) show similar results for the CMC family.}
  \label{fig:results_knn}
\end{figure}

\paragraph{Additional COCO Mask R-CNN Results}
In the main text, Table~\ref{tab:othertransfertask} show object detection and segmentation results on COCO using the representations learned by various contrastive algorithms. We used a fixed backbone with a feature pyramid network. Table~\ref{table:nofeatpyra} show a similar set of results without a feature pyramid (as is also done in ~\cite{he2019momentum}
). We find the same patterns: that is, annealed Annulus Discrimination outperforms all other models.

\begin{table}[t!]
\caption{COCO: Mask R-CNN, R$_{18}$-C4, 1x schedule}
\centering
\begin{tabular}{l|c c c | c c c }
\toprule
Model & AP$^{\text{bb}}$ & AP$_{50}^{\text{bb}}$ & AP$_{75}^{\text{bb}}$ & AP$^{\text{mk}}$ & AP$_{50}^{\text{mk}}$ & AP$_{75}^{\text{mk}}$ \\
\midrule
IR & $6.1$ & $13.9$ & $4.4$ & $6.0$ & $12.7$ & $5.1$ \\
BALL & $6.3$ & $14.3$ & $4.9$ & $6.4$ & $13.2$ & $5.6$ \\
ANN$^{\text{kmeans}}$ & $6.7$ & $14.8$ & $5.3$ & $6.7$ & $13.9$ & $5.9$ \\
LA & $7.4$ & $16.1$ & $5.7$ & $7.4$ & $15.2$ & $6.4$ \\
BALL$^{\text{anneal}}$ & $7.2$ & $15.5$ & $5.6$ & $7.3$ & $14.4$ & $6.2$ \\
ANN$^{\text{kmeans}}_{\text{+anneal}}$ & $\mathbf{7.7}$ & $\mathbf{16.5}$ & $\mathbf{6.0}$ & $\mathbf{7.8}$ & $\mathbf{15.5}$ & $\mathbf{6.7}$ \\
\bottomrule
\end{tabular}
\label{table:nofeatpyra}
\end{table}

\paragraph{Additional CIFAR10 Results} Due to a lack of space, in the main text we only reported Top 1 accuracies for the CIFAR10 dataset. Here we report the full table with Top 5. See Table~\ref{table:cifar_supp}.

\begin{table}[h!]
\small
\centering
\begin{tabular}{l|c|c|l|c|c}
\toprule
Model & Top1 & Top5 & Model & Top1 & Top5\\
\midrule
IR \cite{wu2018unsupervised} & 64.3 & 95.9 & CMC \cite{tian2019contrastive} & 72.1 & 97.3 \\
IR$^{\text{nce}}$ & 81.2 & 98.9 & CMC$^{\text{nce}}$ & 85.6 & 99.4 \\
LA \cite{zhuang2019local} / BALL$^{\text{k-neigh}}$ & 81.8 & 99.1 & - & - & -  \\
LA$^{\text{nce}}$ & 82.3 & 99.1 & - & - & - \\
BALL & 81.4 & 99.0 & BALL$^{\textup{Lab}}$ & 85.7 & 99.5 \\
BALL$^{\textup{anneal}}$ & 82.1 & 99.1 & BALL$^{\textup{Lab+anneal}}$ & 86.8 & 99.5 \\
BALL$^{\textup{s-neigh}}$ & 75.1 & 95.9 & BALL$^{\textup{Lab+s-neigh}}$ & 63.3 & 94.5\\
BALL$^{\textup{s-neigh+anneal}}$ & 84.8 & 99.3 & BALL$^{\textup{Lab+s-neigh+anneal}}$ & 87.0 & 99.4\\
CAVE & 81.4 & 99.0 & CAVE$^{\textup{Lab}}$ & 86.1 & 99.5\\
CAVE$^{\textup{anneal}}$ & 84.8 & 99.3 & CAVE$^{\textup{Lab+anneal}}$ & \textbf{87.8} & \textbf{99.5}\\
RING & 84.7 & 99.3 & RING$^{\textup{Lab}}$ & 87.3 & 99.4 \\
RING$^{\textup{s-neigh}}$ & 76.6 & 98.6 & RING$^{\textup{Lab+s-neigh}}$ & 68.0 & 95.7\\
RING$^{\textup{anneal}}$ & 85.2 & 99.3 & RING$^{\textup{Lab+anneal}}$ & 87.6 & 99.5\\
RING$^{\textup{s-neigh+anneal}}$ & \textbf{85.5} & \textbf{99.4} & RING$^{\textup{Lab+s-neigh+anneal}}$ & \textbf{87.8} & \textbf{99.5}\\
\bottomrule
\end{tabular}
\caption{CIFAR10 Transfer Accuracy}
\label{table:cifar_supp}
\end{table}

\section{Additional Discussion}
\label{sec:adddis}
We discuss a few minor points regarding MI and contrastive learning not included in the main text. We also include some additional figures in Fig.~\ref{fig:discussion2} for memory bank experiments.

\begin{figure}[h!]
  \centering
  \begin{subfigure}[b]{0.32\textwidth}
    \centering
    \includegraphics[width=\textwidth]{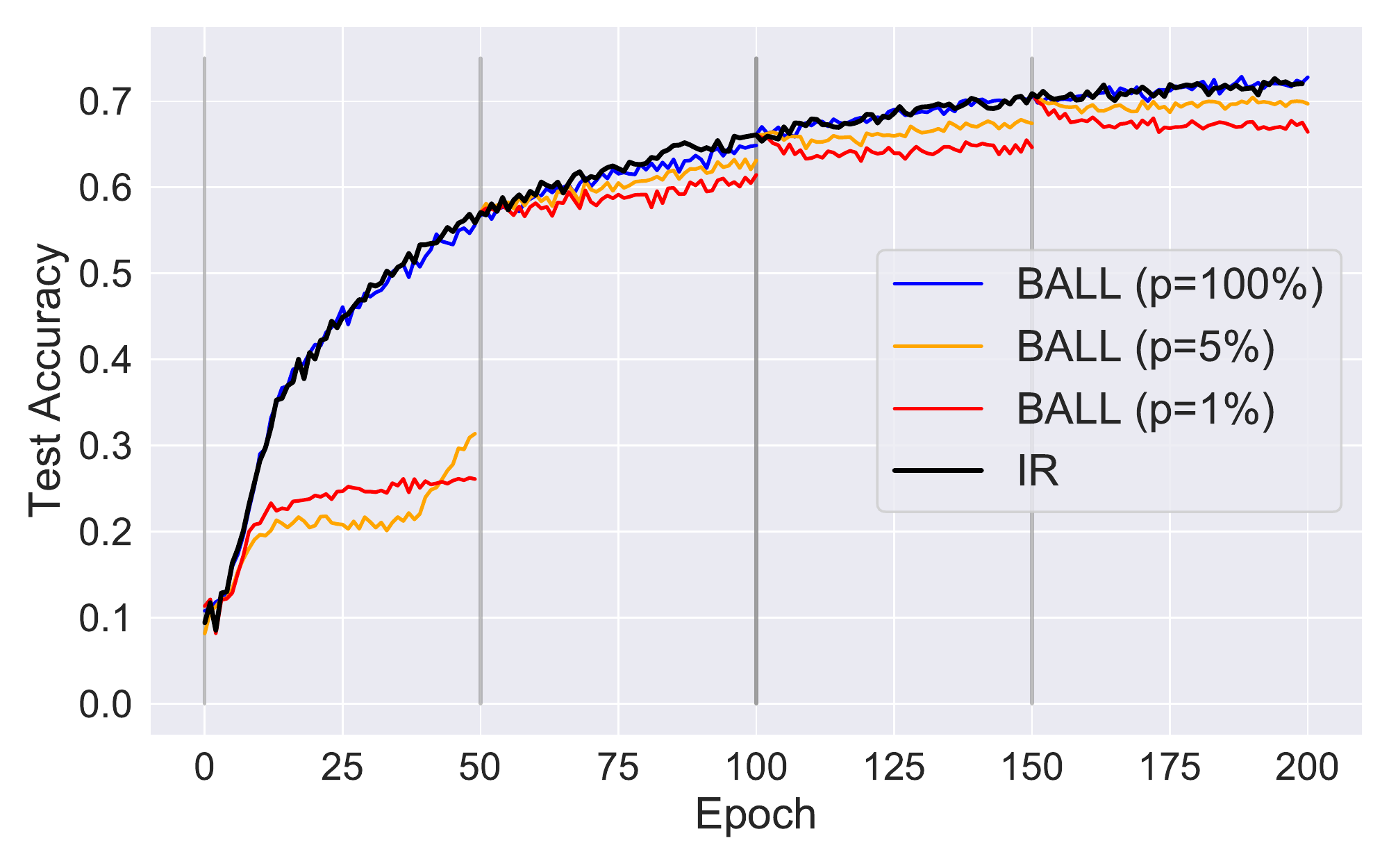}
    \caption{100 negatives}
  \end{subfigure}
  \begin{subfigure}[b]{0.32\textwidth}
    \centering
    \includegraphics[width=\textwidth]{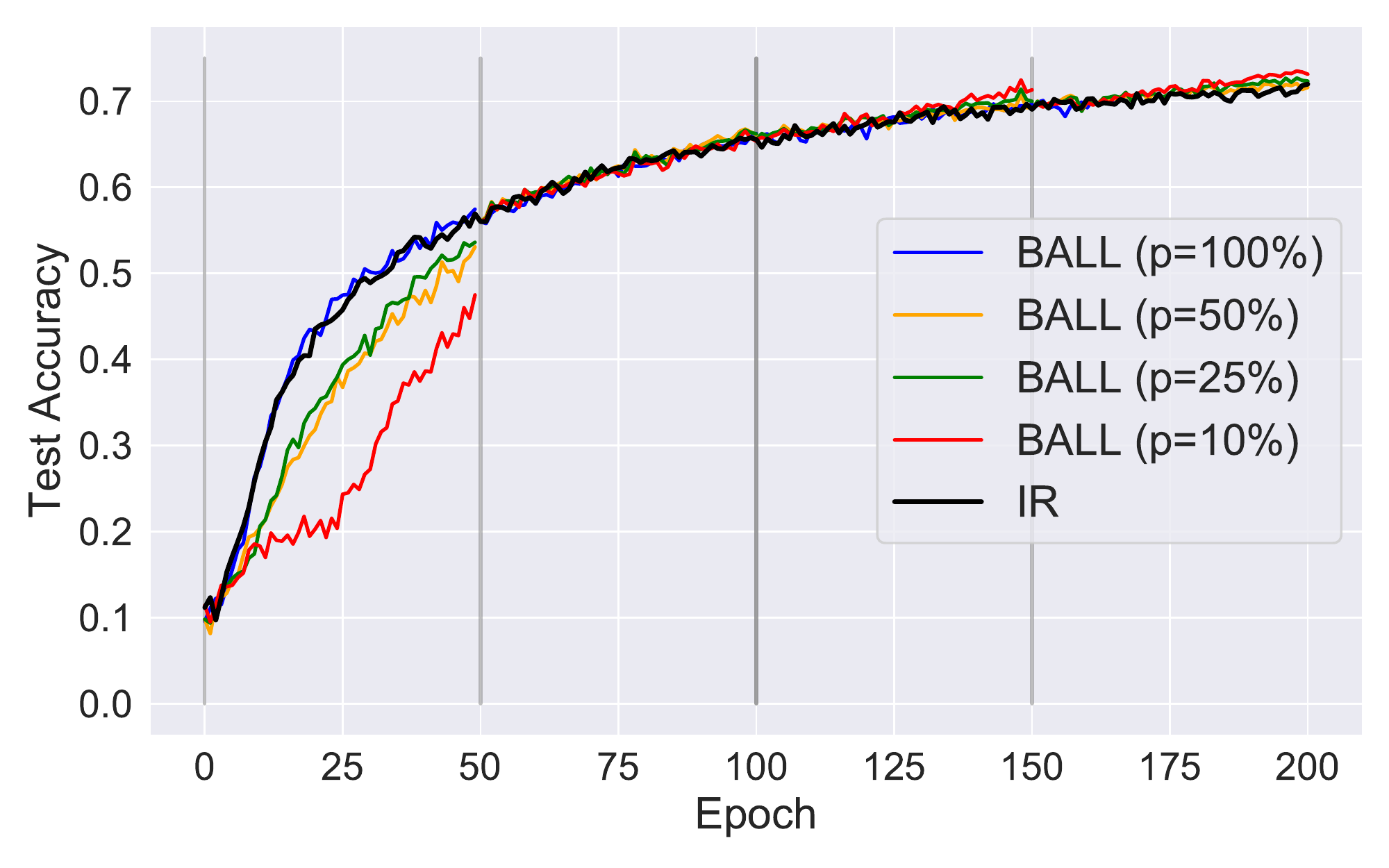}
    \caption{1024 negatives}
  \end{subfigure}
  \begin{subfigure}[b]{0.32\textwidth}
    \centering
    \includegraphics[width=\textwidth]{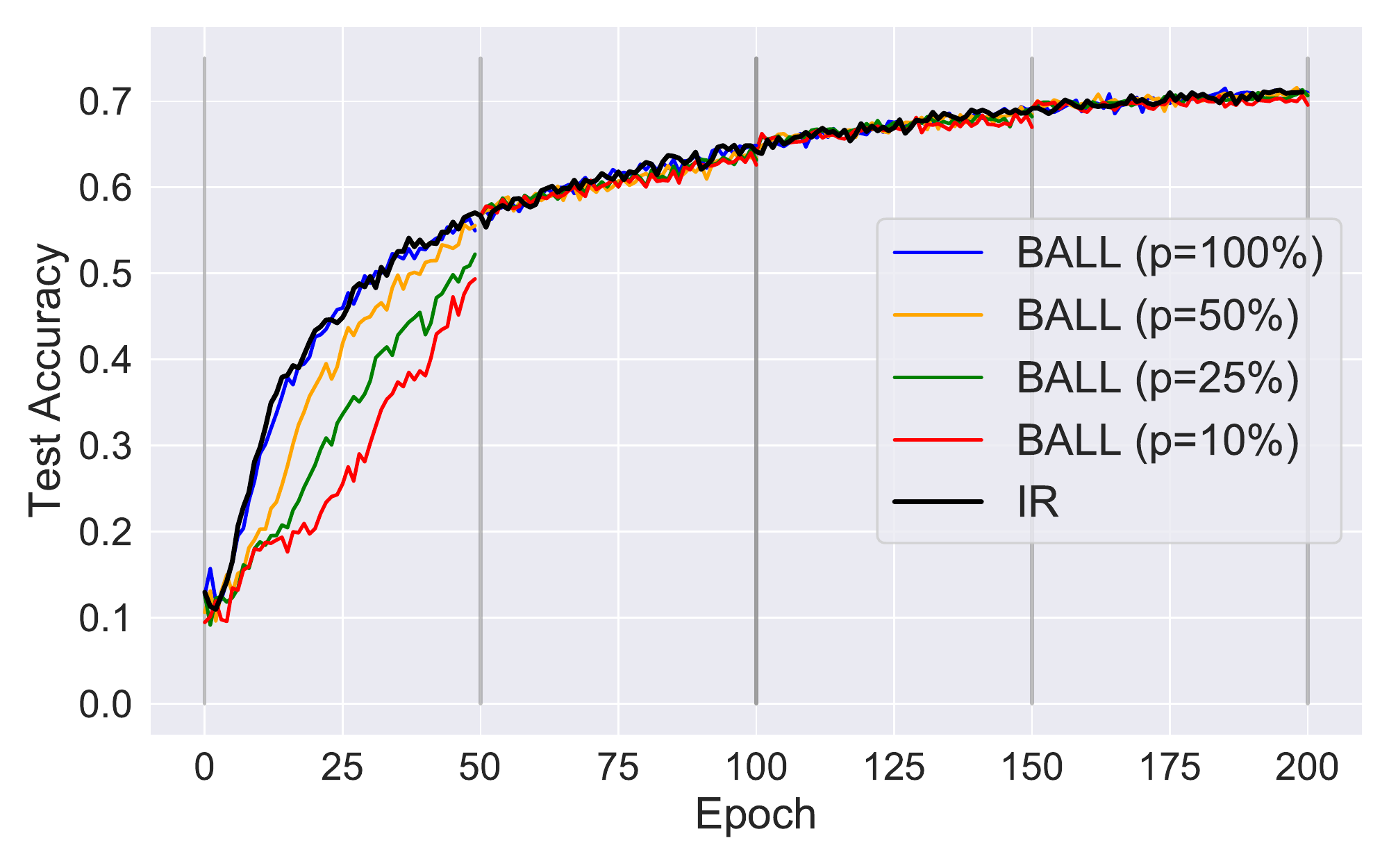}
    \caption{4096 negatives}
  \end{subfigure}
  \caption{Relationship between the total number of  negative samples drawn and its distribution: (a,b ,c) show the nearest neighbor classification accuracy when using various values for $\tau$ to define $q_\tau$.}
  \label{fig:limitation}
\end{figure}

\begin{figure}[h!]
  \centering
  \begin{subfigure}[b]{0.32\textwidth}
    \centering
    \includegraphics[width=\textwidth]{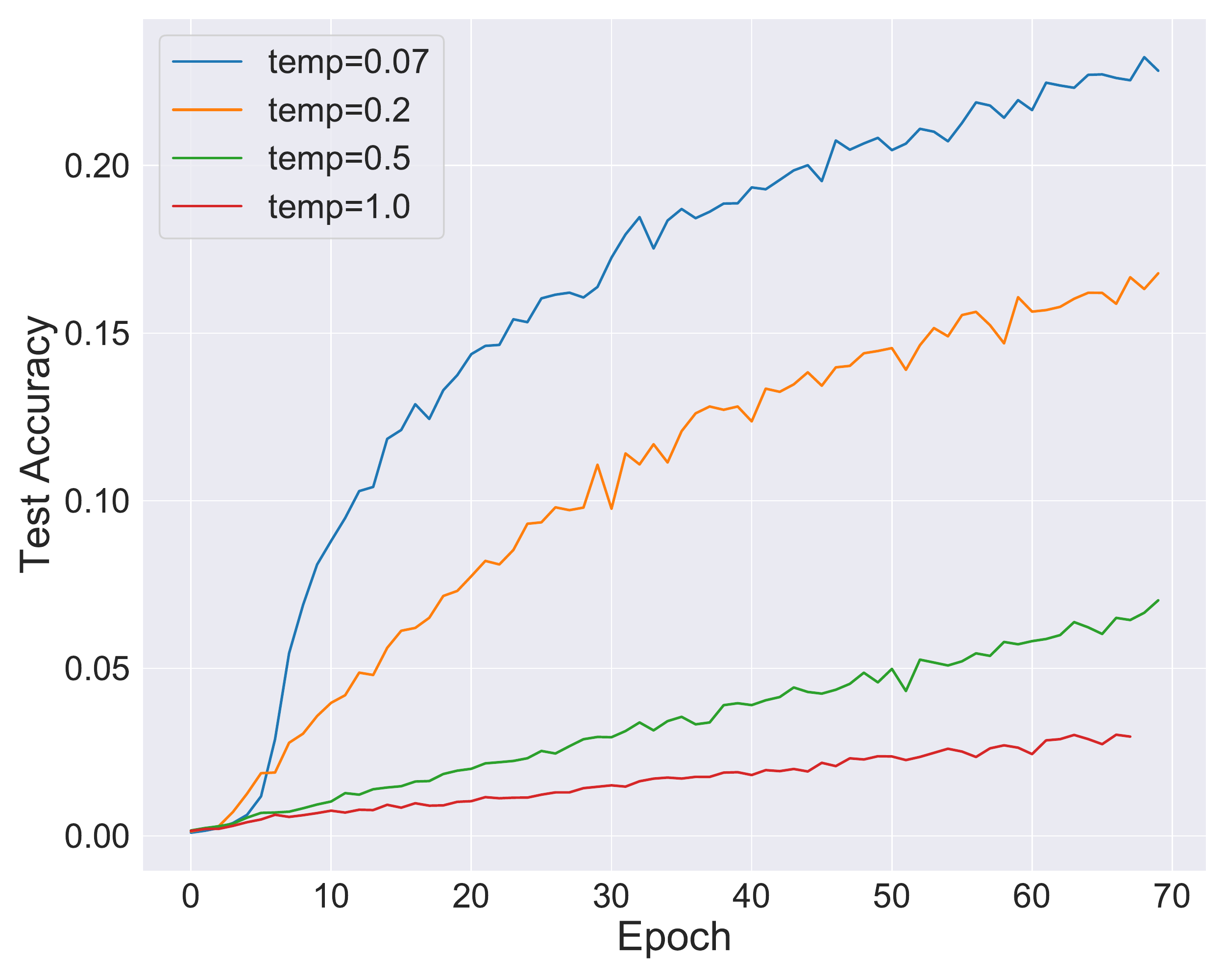}
    \caption{Imagenet (IR)}
  \end{subfigure}
  \begin{subfigure}[b]{0.32\textwidth}
    \centering
    \includegraphics[width=\textwidth]{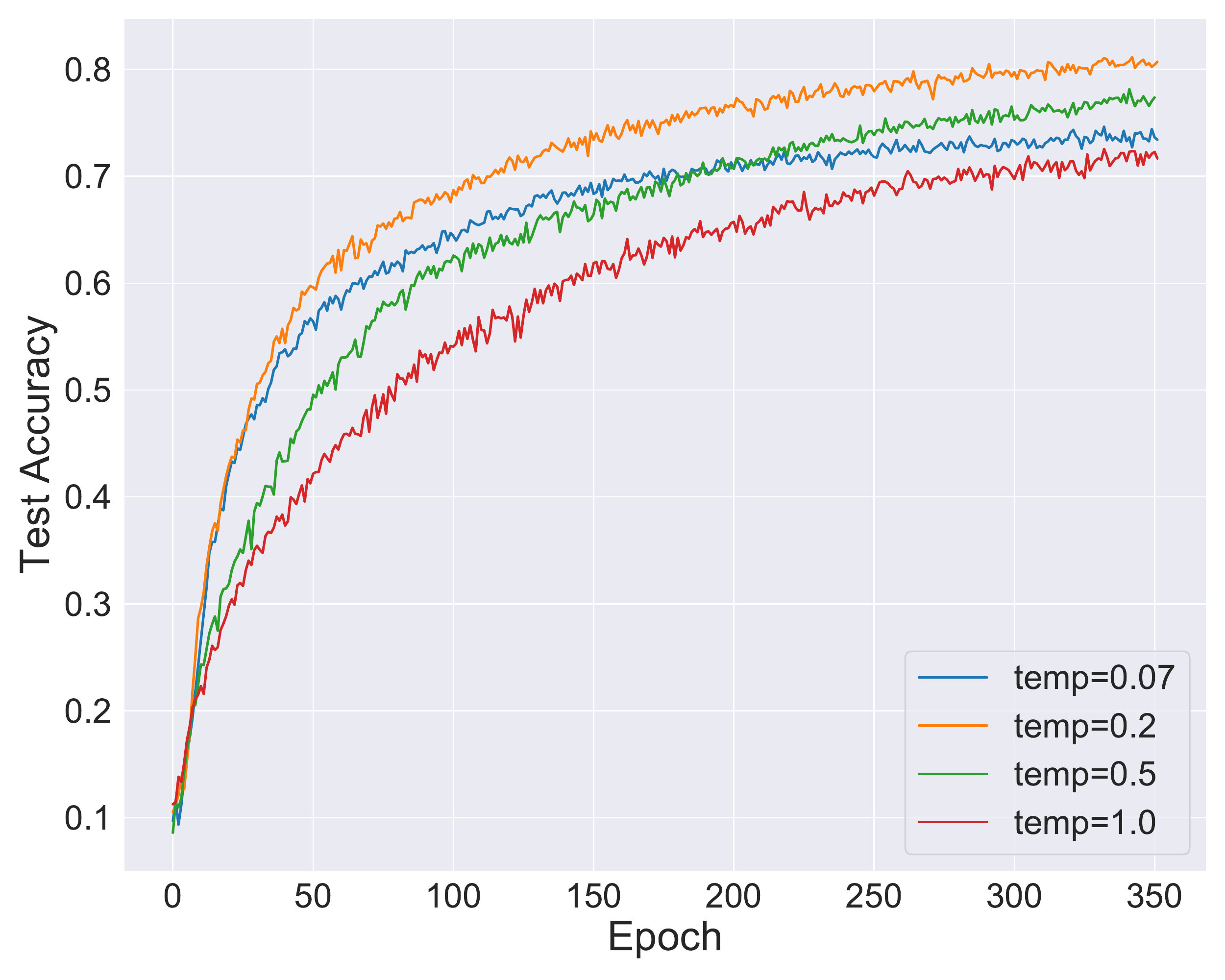}
    \caption{CIFAR10 (IR)}
  \end{subfigure}
  \caption{Relationship between nearest neighbor classification accuracy and temperature.}
  \label{fig:temperature}
\end{figure}

\paragraph{The witness function should be under-parameterized.}
% Consider the inductive bias introduced by the architecture of the witness function.
Since $f_{\theta}(x,y)$ encodes $x$ and $y$ independently before the dot product, the encoders bear the near-full burden of estimating mutual information.
However, could we parameterize $f_{\theta}$ more heavily (e.g. perceptron)?
% As such, prior works have found the information embedded in these representations to be useful to downstream domain-specific tasks.
Consider the alternative design: $f_{\theta,\psi}(x,y) = h_\psi(g_\theta(x), g_\theta(y))$ where $h_\psi$ is a neural network.
% Since $h_\psi$ can share the burden of estimating mutual information, we would not expect the representations learned in this setting to be ``as useful''.
At one extreme, $g_\theta$ and $g_\theta$ mimic identity functions whereas $h_\psi$ assumes the burden of learning.
Representations learned in this setting should prove far less useful.
To test this hypothesis, we compare the following designs: (1) a dot product $g_\theta(x)^T g_\theta(y)$; (2) a bilinear transform $g_\theta(x)^T W g_\theta(y)$; (3) concatenate $g_\theta^T(x)$ and $g_\theta(y)$ and pass it into a linear layer projecting to 1 dimension; (4,5) concatenate $g_\theta^T(x)$ and $g_\theta(y)$ and pass it through a series of nonlinear layers with 128 hidden nodes before projecting to 1 dimension.
In order from (1) to (5), the models increase in expressivity.
% We optimize the InfoNCE objective ImageNet and CIFAR10 where  $g_\theta$ is a ResNet18.
% For optimization, we use SGD with a momentum of 0.9 and weight decay 1e-4 for 100 epochs with batch size 256.
% Following this, we freeze the representations and learn a Logistic Regression model with CIFAR10 labels and report accuracy on a held-out test set.
% For transfer learning, we use SGD with momentum 0.9, weight decay 1e-5 for 30 epochs and batch size 256.
Fig.~\ref{fig:discussion2}a and b show classification performance on ImageNet and CIFAR10.
We find that a more expressive $h_\psi$ results in lower accuracy, symbolizing a weaker representation.
We highlight that these experiments differ from \cite{chen2020simple} where the authors found $f_{\theta,\psi}(x,y) = (h_\psi \circ g_\theta)(x)^T (h_\psi \circ g_\theta)(y)$ to improve performance.
% As a baseline, Logistic Regression trained directly on the image pixels achieves 35\% test accuracy, implying that the representations learned with MLPs are less informative than the raw pixels alone.
% Interestingly, while a more powerful choice for $h_\psi$ may result in a tighter estimate of mutual information, crippling the expressivity of $h_\psi$ is crucial for learning a good representation.

\paragraph{Harder negative samples are not \textit{always} good.}
In Fig.~\ref{fig:results_knn} and Table~\ref{tab:othertransfertask}, we see that slowly increasing the threshold $\tau$ significantly improves performance (see BALL$^{\text{anneal}}$/CAVE$^{\text{anneal}}$).
This implies that using harder negative samples may not be beneficial at all points in training.
Early in training, it could that difficult negatives bias towards a local optima.
To test this, we can train BALL models with varied $\tau$ whose parameters are initialized with a IR model at different points of training (e.g. epoch 1 versus epoch 100). Fig.~\ref{fig:limitation} shows exactly this, additionally varying the number of total negatives drawn from 100 to 4096.
Recall from the first toy experiment that a smaller percentage $p$ represents a larger $\tau$ (with $p=100\%$ equaling IR).
We make two observations: First, at the beginning of training, difficult negatives hurt the representation. Second, with too few samples, difficult negatives are too close to the current image, stunting training.
But with too many samples, difficult negatives are ``drowned out'' by others. If we balance the difficulty of the negatives with the number of samples, we find improved performance with larger $\tau$ (see Fig.~\ref{fig:limitation}b).
In summary, always using harder negative samples is not a bulletproof strategy. In practice, we find slowly introducing them throughout training to work well.

\paragraph{The temperature parameter is important.} In simplifying contrastive learning, we might wonder if we could remove the temperature parameter $\omega$.
Fig.~\ref{fig:temperature} suggests that tuning $\omega$ is important for at least the speed of convergence. We find the best $\omega$ to be dataset dependent.

\end{document}